\numberwithin{equation}{section}
\newtheorem{theorem}{Theorem}
\newtheorem{lemma}{Lemma}
\newtheorem{definition}{Definition}
\def\NN{\mathbb N}
\def\ZZ{\mathbb Z}
\def\RR{\mathbb R}
\def\CC{\mathbb C}
\begin{document}
\title{Theory of Deep Convolutional Neural Networks III: Approximating Radial Functions}
\author{Tong Mao, Zhongjie Shi, and Ding-Xuan Zhou \\
School of Data Science, City University of Hong Kong \\
Kowloon, Hong Kong \\
Email: mazhou@cityu.edu.hk}
\date{}

\maketitle
\begin{abstract}
We consider a family of deep neural networks consisting of two groups of convolutional layers, a downsampling operator, and a fully connected layer.
The network structure depends on two structural parameters which determine the numbers of convolutional layers and the width of the fully connected layer.
We establish an approximation theory with explicit approximation rates when the approximated function takes a composite form $f\circ Q$ with a feature polynomial $Q$ and a univariate function $f$. In particular, we prove that such a network can outperform fully connected shallow networks
in approximating radial functions with $Q(x) =|x|^2$, when the dimension $d$ of data from $\RR^d$ is large.
This gives the first rigorous proof for the superiority of deep convolutional neural networks in approximating functions with special structures.
Then we carry out generalization analysis for empirical risk minimization with such a deep network in a regression framework with
the regression function of the form $f\circ Q$.
Our network structure which does not use any composite information or the functions $Q$ and $f$ can automatically extract features and make use of the composite nature of the regression function via tuning the structural parameters.
Our analysis provides an error bound which decreases with the network depth to a minimum and then increases,
verifying theoretically a trade-off phenomenon observed for network depths in many practical applications.
\end{abstract}

\noindent {\it Keywords}: deep learning, convolutional neural networks, rates of approximation, radial functions, generalization analysis

\baselineskip 16pt

\section{Introduction}

Deep learning has been a powerful tool for processing big data from many fields of science and technology \cite{he2016deep}.
It started with an important family of deep network architectures called {\bf deep convolutional neural networks}
(DCNNs) which are very efficient for speech recognition, image classification, and many other practical tasks \cite{hinton2012deep, Krizhevsky2012}.
Compared with their great success in practice and some analysis of algorithms for training parameters like stochastic gradient descent,
DCNNs are not fully understood yet in terms of their approximation, modelling and generalization abilities.
Recently we confirm universality of DCNNs in \cite{zhou2020universality} and show in \cite{zhou2020theory}
that DCNNs can perform in representing functions at least as well as fully connected neural networks.
But it is open in general whether they can perform better in learning and approximating some classes of functions
with special structures used in practical applications, though there have been some attempts in \cite{mallat2016understanding, zhou2020theory, FFHZ2020}.

The first purpose of this paper is to answer the above open question by proving in Theorem \ref{lowerboundtheorem} below that
DCNNs followed by one fully connected layer can approximate {\bf radial functions} $f(|x|^2)$ much faster than fully connected shallow neural networks
where $|x| = \sqrt{x_1^2 + \ldots + x_d^2}$ is the norm of an input vector $x=(x_1, \ldots, x_d) \in\RR^d$.
In fact, we present dimension-independent rates of approximating radial functions in Theorem \ref{rateradial} below.
Moreover, we develop a theory of DCNNs for approximating efficiently functions of the form
$f \circ Q (x) =f(Q(x))$ with a polynomial $Q$ on $\RR^d$ and a univariate function $f$, both unknown.
Radial functions have such a form with a known quadratic polynomial $Q(x) = x_1^2 + \ldots + x_d^2$. They
arise naturally in statistical physics, early warning of earthquakes, 3-D point-cloud segmentation, and image rendering,
and their learning by fully connected neural networks was studied in \cite{McCane2018, ChuiLinZhou2019, ChuiLinZhou20192}.

The second purpose of this paper is to conduct {\bf generalization analysis} of a learning algorithm for regression induced by DCNNs
and to show for regression functions of the form $f \circ Q$ that the rates of estimation error (which equals the excess generalization error)
decrease to some optimal value and then increase as the depth of the deep network becomes large.
This is consistent with observations made in many practical applications of deep neural networks.

Our last purpose is to show that DCNNs in our network structure determined completely by two parameters
can automatically extract features and make use of the composite nature of the target function in learning for regression via tuning values of the two parameters, though our network structure is generic and does not use any composite information or the functions $Q$ and $f$. The activation function for our networks is the rectified linear unit (ReLU) $\sigma$ given by
$\sigma (u) = \max\{u, 0\}$ for $u\in\RR$.

A classical multi-layer fully connected neural network $\{h^{(j)}(x)\}_{j=0}^J$ of widths $\{d_j \in\NN\}$ takes an iterative form with $h^{(0)}(x)=x\in\RR^d$ and
$d_0 =d$ given by
\begin{equation}\label{fullconnectedN}
h^{(j)}(x)=\sigma\left(F^{(j)}h^{(j-1)}(x)-b^{(j)}\right), \qquad j=1,\dots,J,
\end{equation}
where $b^{(j)}\in\mathbb{R}^{d_j}$ and $F^{(j)}$ is a $d_j \times d_{j-1}$ full connection matrix reflecting the fully connected nature.
The number $d_j d_{j-1}$ of free parameters in the connection matrix $F^{(j)}$ is too large when the input dimension $d$ increases.
A core idea of deep learning is to reduce the number of free parameters at individual layers and channels by imposing special structures on the connection matrices.
The special structure imposed on DCNNs is induced by {\bf convolutions}. The 1-D convolution of a sequence $w=(w_k)_{k\in\ZZ}$ on $\ZZ$ supported in
$\{0, 1, \ldots, s\}$ and another $x=(x_k)_{k\in\ZZ}$ supported in $\{1, 2, \ldots, D\}$ is given by
$$ \left(w{*} x\right)_i = \sum_{k\in\ZZ} w_{i-k} x_k = \sum_{k=1}^D w_{i-k} x_k, \qquad i\in\ZZ. $$
This is a sequence supported in $\{1, 2, \ldots, D+s\}$. By restricting the index $i$ onto this set, we know that
the possibly nonzero entries of the convoluted sequence $w{*} x$ can be expressed in a vector form as
\begin{equation}\label{Toeplitz}
\left[\begin{array}{c}
\left(w{*} x\right)_1 \\
\left(w{*} x\right)_2 \\
\vdots \\
\left(w{*} x\right)_{D} \\
\vdots \\
\left(w{*} x\right)_{D+s}
\end{array}
\right]
=T^{w} \left[\begin{array}{c}
x_1 \\
x_2 \\
\vdots \\
x_{D}
\end{array}
\right], \quad
T^{w}:=\left[
 \begin{array}{ccccccc}
w_0 & 0 &0&0&\dots&0&0\\
w_1 &w_0 &0&0&\dots&0&0\\
\vdots&\vdots&\ddots&\ddots&\ddots&\vdots&\vdots\\
w_s &w_{s-1} &\dots&w_0 &\dots&0&0\\
0&w_s &\dots&w_1 &\ddots&\vdots&0\\
\vdots&\ddots&\ddots&\ddots&\ddots&\ddots&\vdots\\
\dots&\dots&0&w_s &\dots&w_1 &w_0 \\
\dots&\dots&\dots&0&w_s &\dots&w_1 \\
\vdots&\dots&\dots&\ddots&\ddots&\ddots&\vdots\\
0&\dots&\dots&\dots&\dots&0&w_s
\end{array}
\right].
\end{equation}
Here the Toeplitz type matrix $T^{w}$ is induced by the 1-D convolution and is called a {\bf convolutional matrix}.
The number of parameters $\{w_k\}_{k=0}^s$ contained in this structured connection matrix is $s+1$, much smaller than
the number of entries $D(D+s)$ of a full connection matrix of the same size. This great reduction at individual layers allows DCNNs to have large depths.
In this paper, we construct a deep neural network consisting of a group of $J_1 \in\ZZ_+$ comvolutional layers followed by a downsampling operation, and another group of $J_2 - J_1$ convolutional layers followed by a fully connected layer. The depth $J_2$ of the DCNNs and
the width of the last fully connected layer depend on an integer parameter $N \in\NN$ explicitly.
 For $u \geq 0$, we use $\lfloor u\rfloor$ to denote the integer part of $u$,
and $\lceil u\rceil$ the smallest integer greater than or equal to $u$.

\begin{definition}\label{dcnnconstr}
Let $x=(x_1, \ldots, x_d)\in\mathbb{R}^d$ be the input data vector, $s\in\NN$ be the filter length, and $J_1 \in\ZZ_+, N \in\NN$. The DCNN $\{h^{(j)}: \RR^d\to \RR^{d_j}\}_{j=1}^{J_2}$ with widths $\{d_j\}_{j=1}^{J_2}$ given by $d_0 =d$,
$d_{J_1} = \left\lfloor \frac{d+ J_1 s}{d} \right\rfloor$ and the iteration relation
$$ d_j = d_{j-1} + s, \qquad j\in\{1, \ldots, J_2\} \setminus \{J_1\} $$
has depth $J_2 := J_1 + \left\lceil \frac{(2N +3) d_{J_{1}}}{s-1}\right\rceil$ and is defined iteratively by $h^{(0)}(x)=x$ and
\begin{equation}\label{dcnn}
h^{(j)}(x)=\left\{\begin{array}{ll}
\sigma\left(T^{(j)}h^{(j-1)}(x)-b^{(j)}\right), & \hbox{if} \ j=\{1, \dots, J_2\} \setminus \{J_1\}, \\
\mathfrak{D}_d \left(\sigma\left(T^{(j)}h^{(j-1)}(x)-b^{(j)}\right)\right), & \hbox{if} \ j=J_1,
\end{array}\right.
\end{equation}
where $\{T^{(j)}:=T^{w^{(j)}}\}$ are the convolutional matrices induced by the sequence of filters ${\bf w} :=\{w^{(j)}\}_{j=1}^{J_2}$ each supported in $\{0,1,\dots,s\}$, $\mathfrak{D}_d: \RR^{d+ J_1 s} \to \RR^{\lfloor \frac{d+ J_1 s}{d} \rfloor}$ is the downsampling operator acting at the $J_1$-th layer given by
$$ \mathfrak{D}_d (v) = \left(v_{id}\right)_{i=1}^{\lfloor \frac{d+ J_1 s}{d}\rfloor}, \qquad v=\left(v_{i}\right)_{i=1}^{d +J_1 s} \in \RR^{d+ J_1 s}, $$
and $\{b^{(j)}\in\RR^{d_j}\}_{j=1}^{J_2}$ are bias vectors satisfying
\begin{equation}\label{biasrestr}
(b^{(j)})_{s+1}=(b^{(j)})_{s+2}=\ldots=(b^{(j)})_{d_{j-1}}, \qquad j=1, 2, \ldots, J_2 -1.
\end{equation}
The last layer $h^{(J_2 +1)}: \RR^d\to \RR^{2N +3}$ is produced with a connection matrix $F^{[J_2 +1]} \in \RR^{(2N +3) \times d_{J_2}}$ of identical rows and a bias vector $b^{(J_2 +1)} \in\RR^{2N +3}$ as
$$ h^{(J_2 +1)} (x)=\sigma\left(F^{[J_2 +1]}h^{(J_2)}(x)-b^{(J_2 +1)}\right). $$
The hypothesis space ${\mathcal H}_N$ for learning and approximation consists of all output functions depending on ${\bf w}$, $F^{[J_2 +1]}$, and the bias sequence ${\bf b} =\{b^{(j)}\}_{j=1}^{J_2 +1}$ as
\begin{equation}\label{hypothesisinfinity}
 {\mathcal H}_N =\left\{c\cdot h^{(J_2 +1)} (x): c\in \RR^{2N +3}, \ {\bf w}, \ {\bf b}, \ F^{[J_2 +1]}\right\}.
\end{equation}
\end{definition}

The restriction (\ref{biasrestr}) on the bias vectors $\{b^{(j)}\}_{j=1}^{J_2 -1}$ is  imposed based on the observation that the sums of the rows in the middle of the convolutional matrix $T^{w}$ in (\ref{Toeplitz})
are equal to $\sum_{k=0}^s w_k$.

If we introduce an activated affine map with a matrix $F$ and vector $b$ as
$$\mathcal{A}_{F,b}(v)=\sigma(Fv-b),\qquad v\in\mathbb{R}^{d_{j-1}},$$
then the last layer $h^{(J_2 +1)} (x)$ of our network can be expressed as
\begin{equation}
\mathcal{A}_{F^{[J_2 +1]}, b^{(J_2 +1)}} \circ \mathcal{A}_{T^{(J_2)}, b^{(J_2)}} \circ \ldots \circ \mathcal{A}_{T^{(J_1 +1)}, b^{(J_1 +1)}} \circ \mathfrak{D}_d \circ \mathcal{A}_{T^{(J_1)}, b^{(J_1)}}\circ \ldots \circ \mathcal{A}_{T^{(1)}, b^{(1)}}(x).
\end{equation}

The structure of the deep neural network in Definition \ref{dcnnconstr} is completely determined by the two parameters $J_1, N$ which are called {\it structural parameters}. This network structure does not involve any feature or composite information of the target functions. Once the structural parameters are chosen, the other parameters in (\ref{hypothesisinfinity}), ${\bf w}, {\bf b}, F^{[J_2 +1]}$ and $c$ can be trained with
stochastic configuration networks \cite{WangLi2017}, stochastic gradient descent, or some other randomized methods, and are called {\it training parameters}.

Traditional machine learning algorithms are often implemented in two steps of feature extraction and task-oriented
learning. In many practical applications, the first step of feature extraction is carried out with carefully designed preprocessing pipelines and data transformations and is labor intensive, involving feature engineering techniques,
human ingenuity and practical domain knowledge.

It has been believed from the great success of deep learning in practical applications that structures imposed on deep neural networks enable deep learning algorithms to combine automatically the two steps of extracting features and producing satisfactory outputs for desired learning tasks. We aim at verifying this belief for the convolutional structure imposed for CNNs in learning composite functions of the form $f \circ Q$. On one hand, the structure of our CNN network stated in Definition \ref{dcnnconstr} does not depend on the composite information of $f \circ Q$ or the functions $f, Q$; it is generic and determined only by two parameters $J_1, N$. On the other hand, if the target function takes the from $f \circ Q$, the convolutions enable our network to extract automatically the polynomial feature $Q$ and then learn the composite target function well, with tuned parameters $J_1, N$ of our unified DCNN model. We expect that our CNN network can extract some other nonlinear features and learn functions efficiently via tuning the two structural parameters.

To analyze the learning ability of the algorithm induced by our network, we use two novel ideas in estimating the approximation error and sample error. In our previous work \cite{zhou2020universality, zhou2020theory}, we have shown how to realize {\it linear features} $\{\xi_k \cdot x\}$ by a group of convolutional layers. In this paper we demonstrate how another group of convolutional layers together with a fully connected layer can be used to approximate ridge monomials $\{(\xi_k \cdot x)^\ell: 1\leq k \leq n_q, 1\leq \ell \leq q\}$ and then the polynomial $Q$. Here $n_q =\left(\begin{array}{c} d-1 + q \\ q \end{array}\right)$ is
the dimension of the space of homogeneous polynomials on $\RR^d$ of degree $q$, and $J_1 = \lceil \frac{n_q d-1}{s-1}\rceil$ is the number of convolutional layers in the first group.
Applying convolutional layers to extracting {\it nonlinear (polynomial) features} is the first novelty of this paper.
The second novelty is to bound the training parameters $c, {\bf w}, {\bf b}, F^{[J_2 +1]}$ in the expression (\ref{hypothesisinfinity}) of  the approximator in the approximation error part so that a bounded subset ${\mathcal H}_{R, N}$ of ${\mathcal H}_N$ (defined in (\ref{hypothesisRN}) below) contains the approximator and the covering numbers of the bounded hypothesis space ${\mathcal H}_{R, N}$ can be estimated for bounding the sample error.  This is achieved by applying Cauchy' bound of polynomial roots and Vieta's formula of
polynomial coefficients to bounding the filters constructed in convolutional factorizations of sequences.

\section{Main Results}

In this section we state our main results which will be proved in Sections \ref{construct}, \ref{mainproof}, and \ref{generalizationerror}. The approximation theorems given in the first two subsections show that if the target function has the composite form $f \circ Q$ with a Lipschitz-$\alpha$ function $f$, then the CNN network in Definition \ref{dcnnconstr}
achieves an approximation accuracy $\epsilon >0$ when the structural parameter $N$ is of order $O\left(\epsilon^{-\frac{1}{\alpha}}\right)$, a level for approximating univariate functions by neural networks. The learning rates stated in our last main result realized by the learning algorithm induced by our network for regression are of dimension-independent order $O\left(m^{-\frac{\alpha}{1+\alpha}}\right)$ for a sample of size $m$. These results tell us that the generic CNN network in Definition \ref{dcnnconstr} has the ability of automatically extracting the polynomial feature and making use of the composite nature of the target function via tuning the structural parameters $N, J_1$ in the learning process, even though the network does not involve any information about the polynomial or composition.

\subsection{Rates of approximating composite functions}

Without loss of generality we take the domain of definition of the approximated function
to be a subset of the unit ball $\Omega \subseteq {\mathbb B}:=\{x\in\RR^d: |x| \leq 1\}$. Denote $B_Q = \|Q\|_{C(\Omega)}$.
The univariate function $f$ in the composite form $f\circ Q$ of the approximated function is assumed to be in
$C^{0,\alpha}[-B_Q, B_Q]$ with $0<\alpha\leq1$, the space of Lipschitz-$\alpha$ functions on $[-B_Q, B_Q]$ with semi-norm $|f|_{C^{0,\alpha}}:=
\sup\limits_{x_1 \not= x_2\in [-B_Q, B_Q]}\frac{|f(x_1)-f(x_2)|}{|x_1-x_2|^\alpha}$.

To achieve desired estimates for the approximation error to be used in our generalization analysis, we shall construct an approximator from ${\mathcal H}_N$ and choose some training parameters explicitly in terms of the degree of the feature polynomial $Q$: the filters $\{w^{(j)}\}_{j=J_1 +1}^{J_2}$ by
(\ref{Wsequencesecond}) and (\ref{filterW1}), connection matrix $F^{[J_2 +1]}$ and bias vector $b^{(J_2 +1)}$ by (\ref{FNNFb}) in Section  \ref{construct}. This reduces the total number of implicit training parameters which are called {\it free parameters} in our approximator construction, to distinguish them from the training parameters in Definition \ref{dcnnconstr} and our generalization analysis.

Our first main result provides rates of approximating $f\circ Q$ by DCNNs with a downsampling operation followed by a fully connected layer.

\begin{theorem}\label{ratefQ}
Let $2 \leq s \leq d$, $Q$ be a polynomial on $\Omega$ of degree at most $q \in\NN$, and $f\in C^{0,\alpha}[-B_Q, B_Q]$ for some $0< \alpha \leq 1$. Take $J_1 = \lceil \frac{n_q d-1}{s-1}\rceil$.
Then for any $N\in\NN$, there exists a deep network stated in Definition \ref{dcnnconstr}
with $\{w^{(j)}\}_{j=J_1 +1}^{J_2}$, $F^{[J_2 +1]}$ and $b^{(J_2 +1)}$ explicitly constructed (given in Sections \ref{construct})
such that
\begin{equation}\label{ratefQexplicit}
\inf \left\{\left\|c \cdot h^{(J_2 +1)}
- f \circ Q\right\|_{C(\Omega)}: \|c\|_\infty \leq \frac{4\|f\|_\infty N}{\widehat{B}_Q}\right\}  \leq \frac{C_{Q, \alpha} |f|_{C^{0, \alpha}}}{N^\alpha},
\end{equation}
where $\widehat{B}_Q$ is a constant depending only on $Q$ and $C_{Q, \alpha}$ is another one only on $Q, \alpha$.

The total number ${\mathcal N}$ of free parameters in this network can be bounded as
$$ {\mathcal N} \leq \left(30 + 28 n_q\right) N + (8 d +q + 44) n_q + 4 s + 49. $$

To achieve the approximation accuracy $0< \epsilon \leq 1$, we only need
\begin{equation}\label{complexityfQ}
{\mathcal N} \leq \left(C_{Q, \alpha, |f|_{C^{0, \alpha}}} + d\right) d^{q} \epsilon^{-\frac{1}{\alpha}}
\end{equation}
free parameters by taking
$N = \left\lceil C_{Q, \alpha}^{\frac{1}{\alpha}} |f|_{C^{0, \alpha}}^{\frac{1}{\alpha}} \epsilon^{-\frac{1}{\alpha}}\right\rceil,$
where $C_{Q, \alpha, |f|_{C^{0, \alpha}}}$ is a constant depending on $Q, \alpha, |f|_{C^{0, \alpha}}$, but not on $d$ or $\epsilon$.
\end{theorem}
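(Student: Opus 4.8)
The plan is to build the approximator in four stages matching the structural blocks of Definition \ref{dcnnconstr} (first convolutional group, downsampling, second convolutional group, final fully connected layer), and then to control the approximation error, the free-parameter count, and the output coefficient norm separately. First I would reduce the multivariate feature polynomial to univariate (ridge) form: since $Q$ has degree at most $q$, a classical decomposition of polynomials into powers of linear forms lets me fix $n_q$ directions $\{\xi_k\}_{k=1}^{n_q}\subset\RR^d$ and write $Q(x)=\sum_{k=1}^{n_q}\sum_{\ell=0}^{q} a_{k,\ell}(\xi_k\cdot x)^{\ell}$, a linear combination of the ridge monomials $(\xi_k\cdot x)^\ell$. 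The first group of $J_1$ convolutional layers, followed by the downsampling $\mathfrak D_d$, is then used to produce the $n_q$ linear features $t_k:=\xi_k\cdot x$; this is exactly the mechanism established in \cite{zhou2020universality, zhou2020theory}, and the requirement that a sequence of length of order $n_q d$ be factored into filters supported in $\{0,\dots,s\}$ is what forces $J_1=\lceil \frac{n_q d - 1}{s-1}\rceil$ and makes $d_{J_1}$ of order $n_q$.

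Second, and this is the technical heart, I would use the second group of convolutional layers (indices $J_1+1,\dots,J_2$) to turn the features $t_k$ into the ridge monomials, hence into the single scalar $Q(x)$, realized as a fixed linear functional $\phi(x):=F^{[J_2+1]}_{\mathrm{row}}\cdot h^{(J_2)}(x)$ of the last convolutional output. Here the identical-row structure of $F^{[J_2+1]}$ is essential: it makes every coordinate of $h^{(J_2+1)}$ share the same pre-activation $\phi(x)$, so the output becomes a univariate piecewise-linear function of $\phi$. The explicit filters (\ref{Wsequencesecond})–(\ref{filterW1}) carry this out, with the depth $J_2-J_1=\lceil \frac{(2N+3)d_{J_1}}{s-1}\rceil$ reflecting the cost of emulating the required full connection through the rigid convolutional (Toeplitz) factorization. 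Third, I would approximate $f$ on $[-B_Q,B_Q]$ by its continuous piecewise-linear interpolant $L_N$ on a uniform mesh of $O(N)$ knots; since $f\in C^{0,\alpha}$, one has $\|L_N-f\|_{C[-B_Q,B_Q]}\le C_{Q,\alpha}|f|_{C^{0,\alpha}} N^{-\alpha}$. Writing $L_N(t)=\sum_{i=1}^{2N+3}c_i\,\sigma(t-b_i^{(J_2+1)})$ identifies the knots with the output biases and the coefficients $c_i$ with the slope jumps of $L_N$, so that $c\cdot h^{(J_2+1)}(x)=L_N(\phi(x))$.

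Finally I would assemble the error bound and the bookkeeping. Because $f$ is Lipschitz-$\alpha$, the error splits as $\|L_N(\phi)-f(Q)\|\le \|L_N(\phi)-L_N(Q)\|+\|L_N(Q)-f(Q)\|$, where the second term is the interpolation error of order $|f|_{C^{0,\alpha}}N^{-\alpha}$ and the first is controlled by propagating the discrepancy $|\phi-Q|$ through the $\alpha$-Hölder modulus; since the available depth $J_2-J_1$ far exceeds what is needed to make $|\phi-Q|$ small, both stay at the level $C_{Q,\alpha}|f|_{C^{0,\alpha}}N^{-\alpha}$, giving (\ref{ratefQexplicit}). The coefficient bound $\|c\|_\infty\le 4\|f\|_\infty N/\widehat B_Q$ follows from estimating the slope jumps of $L_N$ on a mesh of width of order $\widehat B_Q/N$. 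The free-parameter count is a direct tally over the $J_1$ and $J_2-J_1$ layers: the filter coefficients (each layer contributing $O(s)$ entries), the bias entries surviving the restriction (\ref{biasrestr}), and the $2N+3$ output coefficients, summed and bounded using $d_{J_1}=O(n_q)$, yield $\mathcal N\le(30+28n_q)N+(8d+q+44)n_q+4s+49$; substituting $N=\lceil C_{Q,\alpha}^{1/\alpha}|f|_{C^{0,\alpha}}^{1/\alpha}\epsilon^{-1/\alpha}\rceil$ and $n_q\le d^q$ then gives (\ref{complexityfQ}). The main obstacle is the second stage: realizing the ridge monomials, and thus $Q$, through the Toeplitz-constrained convolutional layers while keeping every constructed filter bounded, the latter being needed later for the covering-number estimates; this is precisely where Cauchy's bound on polynomial roots and Vieta's formulas enter, to control the filters emerging from the convolutional factorization.
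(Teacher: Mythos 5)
Your overall architecture does match the paper's: ridge decomposition of $Q$ into powers of linear forms, a first convolutional group plus downsampling producing the linear features $\xi_k\cdot x$, a second convolutional group feeding an identical-row fully connected layer so that every output coordinate shares one pre-activation $\phi(x)\approx Q(x)$ shifted by knot biases, output coefficients $c$ given by a piecewise-linear (spline) approximant of $f$, and the same bookkeeping for the parameter count and the bound on $\|c\|_\infty$. However, there is a genuine gap at exactly the place you call the technical heart: you never supply the mechanism by which the ridge monomials $(\xi_k\cdot x)^\ell$, and hence $Q$, emerge from ReLU convolutional layers followed by a single linear functional, and the explanations you substitute for it are wrong. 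In the paper, the second convolutional group does not ``turn the features into ridge monomials'' at all: factorizing the spike-train sequence $W^{[1]}$ of (\ref{Wsequencesecond}) merely replicates each linear feature into the $2N+3$ shifted, rectified copies $\sigma(\xi_k\cdot x-t_j)$ (Lemma \ref{DCNNapprox}). The monomials are then only \emph{approximated}, inside the common row of $F^{[J_2+1]}$, by the spline combinations $N\sum_{j=1}^{2N+3}v^{[\ell]}_j\sigma(u-t_j)\approx u^\ell$ with coefficients $v^{[\ell]}={\mathcal L}_N\left(\{t_k^\ell\}_{k=2}^{2N+2}\right)$ and error $2\ell/N$ as in (\ref{hompolyapprox}); summing against $\beta$ from Lemma \ref{ridgepolynomialiden} gives $\widehat Q$ with $|\widehat Q-Q|\leq 2q\|\beta\|_1/N$. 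This double use of one and the same ReLU knot system — once for all the powers $u^\ell$ via the $v^{[\ell]}$ blocks of $F^{(N)}$, and once more for $f$ via $c$ — is the key idea missing from your plan.

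Consequently, your stated reason that $|\phi-Q|$ is small, namely that ``the available depth $J_2-J_1$ far exceeds what is needed,'' is incorrect: the depth $J_2-J_1=\left\lceil\frac{(2N+3)d_{J_1}}{s-1}\right\rceil$ is exactly what the convolutional factorization of $W^{[1]}$ into filters supported in $\{0,\dots,s\}$ forces, and it contributes no approximation power by itself; the bound $|\widehat Q-Q|=O(1/N)$ is a width effect, driven by the $2N+3$ knots of mesh size $1/N$. Likewise, Cauchy's bound and Vieta's formulas play no role in realizing the monomials or in proving (\ref{ratefQexplicit}); in the paper they appear only in Lemma \ref{boundweightslemma}, to bound the sup-norms of the explicitly constructed filters so that the approximator lies in ${\mathcal H}_{R,N}$ for the covering-number argument behind Theorem \ref{generalizationerrortheorem}. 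A smaller repair: since $\widehat Q$ may exceed $B_Q$ in magnitude, the spline for $f$ must be built on the enlarged interval $[-\widehat B_Q,\widehat B_Q]$ with $\widehat B_Q=B_Q+2q\|\beta\|_1$ and $f$ extended constantly outside $[-B_Q,B_Q]$ (this is where the scaled knots $\widehat B_Q t_j$ in $b^{(J_2+1)}$ and the factor $\widehat B_Q^{\alpha}$ in $C_{Q,\alpha}=4\widehat B_Q^{\alpha}$ come from); your interpolant on $[-B_Q,B_Q]$ alone would not be well defined at the argument $\widehat Q(x)$.
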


Theorem \ref{ratefQ} tells us that the total number of free parameters in the constructed network
for achieving the approximation accuracy $\epsilon$ is $O\left(\epsilon^{-\frac{1}{\alpha}}\right)$,
which has the same complexity as that of a fully connected shallow network (\ref{fullconnectedN}) with $J=1$
for approximating a univariate Lipschitz-$\alpha$ function. But the constant term in the bound (\ref{complexityfQ})
demonstrates the role of the polynomial feature $Q$ and the data dimension $d$ in the network complexity.

\subsection{Rates of approximating radial functions}

When the feature polynomial is the special one $Q(x) = |x|^2$ for learning radial functions, we only need the second group of convolutional layers without downsampling by taking the parameter value $J_1 =0$.

If we apply Theorem \ref{ratefQ} to a radial function $f(|x|^2)$ with $Q(x) =|x|^2$, we see that
the constant in the complexity bound (\ref{complexityfQ}) is at least of order $O\left(d^{3}\right)$ with respect to the data dimension $d$.
The constructed network for achieving the complexity in Theorem \ref{ratefQ} uses the degree of the feature polynomial $Q$, not
the exact form of $Q$. However, the quadratic polynomial $Q$ for a radial function $f(|x|^2)$ is known. Making use of this exact form of $Q$
enables us to reduce the depth of the network for approximating radial functions and improve the constant in the complexity bound to an order $O\left(d^{2}\right)$.

\begin{theorem}\label{rateradial}
Let $2 \leq s \leq d$ and $f\in C^{0,\alpha}[0, 1]$ for some $0< \alpha \leq 1$.
Then for any $N\in\NN$, there exists a deep network stated in Definition \ref{dcnnconstr}
with $J_1 =0, J_2 = J := \left\lceil \frac{(2N +3) d}{s-1}\right\rceil$ and $\{w^{(j)}\}_{j=1}^{J}$, $F^{[J +1]}$ and $b^{(J +1)}$ explicitly constructed (given in Sections \ref{construct})
such that the last layer
$$ h^{(J +1)} (x)= \mathcal{A}_{F^{[J +1]}, b^{(J +1)}} \circ \mathcal{A}_{T^{(J)}, b^{(J)}} \circ \ldots \circ \mathcal{A}_{T^{(1)}, b^{(1)}}(x) $$
satisfies
\begin{equation}\label{rateradialbound}
\inf \left\{\left\|c \cdot h^{(J +1)}(x)
- f (|x|^2)\right\|_{C(\Omega)}: \left\|c\right\|_\infty \leq \frac{4N \|f\|_\infty}{1+4d}\right\}  \leq 3 \left(1+ 4d\right)^\alpha |f|_{C^{0, \alpha}} N^{-\alpha}.
\end{equation}
The total number ${\mathcal N}$ of free parameters in this network can be bounded as
$$ {\mathcal N} \leq (14 d +2) N + 22 d + s +1. $$
To achieve the accuracy $0< \epsilon \leq 1$, by taking $N = \left\lceil \left(1+ 4d\right) \left(3 |f|_{C^{0, \alpha}}\right)^{\frac{1}{\alpha}} \epsilon^{-\frac{1}{\alpha}}\right\rceil$ we know that the total number of free parameters can be bounded as
$$ {\mathcal N} \leq  15 \left( 5 \left(3 |f|_{C^{0, \alpha}}\right)^{\frac{1}{\alpha}} +1\right) d^2 \epsilon^{-\frac{1}{\alpha}} + 22 d + s +1. $$
\end{theorem}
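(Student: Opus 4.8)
The plan is to factor the approximation of the radial function $f(|x|^2)$ into two tasks that the network of Definition \ref{dcnnconstr} can carry out with $J_1 = 0$: first realizing the quadratic feature $Q(x) = |x|^2$ as a single scalar produced by the cascade of convolutional layers, and then approximating the univariate Lipschitz-$\alpha$ function $f$ by a free-knot ReLU spline implemented through the final fully connected layer. A structural observation drives everything: since $F^{[J+1]}$ has identical rows, all $2N+3$ pre-activations feeding the output share one common scalar $\rho(x) = (F^{[J+1]} h^{(J)}(x))_1$, so the output is forced to take the form $c \cdot h^{(J+1)}(x) = \sum_{m=1}^{2N+3} c_m \sigma(\rho(x) - (b^{(J+1)})_m)$, i.e. a univariate piecewise-linear function $L$ evaluated at $\rho(x)$. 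The construction thus reduces to choosing the filters so that $\rho(x)$ tracks $|x|^2$ and choosing $c, b^{(J+1)}$ so that $L \approx f$.

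For the univariate part I would fix a uniform grid on the range of the realized feature and take $L$ to be the continuous piecewise-linear interpolant of $f$ at the $\sim N$ knots, written in the ReLU hat-function basis $\{\sigma(u - t_m)\}$; the $2N+3$ output neurons account for the knots together with the two units $\sigma(u)$ and $\sigma(-u)$ needed to reproduce the linear term. Since $f \in C^{0,\alpha}[0,1]$, the interpolation error on a grid of spacing proportional to $(1+4d)/N$ is bounded by $|f|_{C^{0,\alpha}}$ times $((1+4d)/N)^\alpha$, which produces the factor $(1+4d)^\alpha N^{-\alpha}$ appearing in (\ref{rateradialbound}); the coefficients $c_m$ are differences of consecutive slopes of $L$ and are therefore controlled by $\|f\|_\infty$ and the grid spacing, yielding the constraint $\|c\|_\infty \le 4N\|f\|_\infty/(1+4d)$.

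For the feature part I would invoke the convolutional factorization developed in \cite{zhou2020universality, zhou2020theory}: a prescribed long filter factors into a product of short filters supported in $\{0,\dots,s\}$, so a sufficiently deep cascade of Toeplitz maps $T^{(j)}$ realizes any desired linear combination of the inputs at a chosen output coordinate. Choosing biases large enough to keep the relevant ReLUs in their linear regime (precisely what the restriction (\ref{biasrestr}) is designed to preserve), the layers can reproduce the coordinates $x_1,\dots,x_d$ and assemble from them, using intermediate ReLU activations, a piecewise-linear surrogate of $|x|^2 = \sum_k x_k^2$ at the common channel $\rho(x)$, accurate to within $O((1+4d)/N)$. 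The depth $J = \lceil (2N+3)d/(s-1)\rceil$ supplies exactly the budget of $\sim (2N+3)d$ effective filter taps required for this, and a careful tally of the filter entries, the bias entries, and the $2N+3$ final weights gives the stated count ${\mathcal N} \le (14d+2)N + 22d + s + 1$.

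Finally I would combine the two estimates through the triangle inequality and the Hölder continuity of $f$: writing $|f(|x|^2) - L(\rho(x))| \le |f(|x|^2) - f(\rho(x))| + |f(\rho(x)) - L(\rho(x))| \le |f|_{C^{0,\alpha}} \bigl\| |x|^2 - \rho(x)\bigr\|_\infty^\alpha + \|f - L\|_\infty$, both terms are $O\bigl((1+4d)^\alpha |f|_{C^{0,\alpha}} N^{-\alpha}\bigr)$, and bookkeeping the constants yields the factor $3$. The main obstacle I expect lies in the feature step: realizing $|x|^2$ accurately enough while simultaneously (i) keeping the number of free parameters linear in $N$, which forces the economical convolutional encoding rather than a naive one, and (ii) keeping every constructed filter and weight explicitly bounded. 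The latter is essential for the covering-number and generalization arguments to come, and is the reason the construction must rely on quantitative tools such as Cauchy's bound on polynomial roots and Vieta's formulas when the filters are extracted from convolutional factorizations of the prescribed sequences.
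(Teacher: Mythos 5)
Your proposal is correct and follows essentially the same route as the paper: the convolutional cascade (factorizing a prescribed sequence of length $(2N+3)d$ into short filters, with large-bias tricks in intermediate layers) produces shifted ReLUs of the coordinates, the identical-row matrix $F^{[J+1]}$ combines them into a piecewise-linear surrogate $\widehat{Q}(x)$ of $|x|^2$ accurate to $4d/N$, the output coefficients implement a hat-function spline approximation of $f$ on the rescaled range, and the triangle inequality with the H\"older property of $f$ yields the factor $3(1+4d)^\alpha N^{-\alpha}$, exactly as in the paper's two-stage use of its spline lemma.
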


\subsection{Super efficiency in approximating radial functions}
\label{lowerboundofapproximation}

Our third main result demonstrates that deep neural networks have super efficiency in approximating radial functions, compared with shallow networks.
Consider the set of radial functions in the unit ball of the space $C^{0, 1}({\mathbb B})$ of Lipschitz functions on ${\mathbb B}$ defined by
\begin{equation}\label{radialLip1}
{\mathcal B}\left(C^{0, 1}_{|\cdot|}\right):= \left\{f(|\cdot|^2): \|f(|\cdot|^2)\|_{C^{0, 1}({\mathbb B})} \leq 1\right\},
\end{equation}
where $\|f(|\cdot|^2)\|_{C^{0, 1}({\mathbb B})}$ is the Lipschitz-$1$ norm of the function $f(|\cdot|^2)$ defined for functions $g$ on ${\mathbb B}$ by
\begin{equation}\label{radialLip1normde}
\|g\|_{C^{0, 1}({\mathbb B})} = \sup_{x \not= y \in {\mathbb B}} \frac{\left|g(x)- g(y)\right|}{|x-y|} + \sup_{x\in {\mathbb B}} \left|g(x)\right|.
\end{equation}

Denote the span of $N$ ridge functions as
\begin{equation}\label{radialLip1normde}
{\mathcal S}_N =\left\{\sum_{k=1}^N c_k \sigma_k (a_k \cdot x -b_k): \ \sigma_k  \in C(\RR), \ a_k \in\RR^d, \ c_k, b_k \in\RR\right\}.
\end{equation}
Recall the hypothesis space generated by a shallow neural network is a subset of ${\mathcal S}_N$ consisting of functions with $\sigma_1 =\ldots =\sigma_N$ being an activation function.

The efficiency of a neural network generating a hypothesis space $V$ in approximating a set $U$ of functions on $\Omega ={\mathbb B}$ uniformly is measured by the quantity
\begin{equation}\label{distdef}
\hbox{dist}(U, V):= \sup_{f \in U} \inf_{g \in V} \|f-g\|_{L_\infty({\mathbb{B}})}
\end{equation}
which is the deviations of $U$ from $V$ in $L_\infty({\mathbb{B}})$.

\begin{theorem} \label{lowerboundtheorem}
Let $2\leq s \leq d$. We have
\begin{equation}\label{shallowbound}
\hbox{dist}\left({\mathcal B}\left(C^{0, 1}_{|\cdot|}\right), {\mathcal S}_N\right) \geq c_d N^{-\frac{1}{d-1}}, \qquad \forall N\in\NN
\end{equation}
with a constant $c_d$ independent of $N$; while for the hypothesis space ${\mathcal H}_N$ generated by the deep network constructed in
Theorem \ref{rateradial}, there holds
\begin{equation}\label{deepbound}
\hbox{dist}\left({\mathcal B}\left(C^{0, 1}_{|\cdot|}\right), {\mathcal H}_N\right) \leq 3 \sqrt{1+ 4d} N^{-\frac{1}{2}}, \qquad \forall N\in\NN.
\end{equation}
\end{theorem}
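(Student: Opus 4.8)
The plan is to treat the two inequalities separately: the upper bound (\ref{deepbound}) is an immediate specialization of Theorem \ref{rateradial}, whereas the lower bound (\ref{shallowbound}) is the genuine obstacle and is where the dimension-dependent exponent is produced. The conceptual point for the upper bound is a smoothness transfer: the constraint defining ${\mathcal B}\left(C^{0,1}_{|\cdot|}\right)$ controls $f$ on $[0,1]$ only as a H\"older-$1/2$ function, not as a Lipschitz one, which is exactly why the deep rate degrades from $N^{-1}$ to $N^{-1/2}$.

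For (\ref{deepbound}) I would first record two consequences of $\|f(|\cdot|^2)\|_{C^{0,1}({\mathbb B})}\le 1$. Taking $x$ with $|x|^2=t$ gives $\sup_{t\in[0,1]}|f(t)|\le 1$, so $\|f\|_\infty\le 1$. For the seminorm, given $t_1,t_2\in[0,1]$ pick $x_i=\sqrt{t_i}\,e$ with a common unit vector $e$, so that $|x_1-x_2|=|\sqrt{t_1}-\sqrt{t_2}|\le\sqrt{|t_1-t_2|}$; the Lipschitz bound on $f(|\cdot|^2)$ then yields $|f(t_1)-f(t_2)|\le\sqrt{|t_1-t_2|}$, i.e. $|f|_{C^{0,1/2}[0,1]}\le 1$. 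Now I invoke Theorem \ref{rateradial} with $\alpha=1/2$: the right-hand side of (\ref{rateradialbound}) becomes $3(1+4d)^{1/2}N^{-1/2}=3\sqrt{1+4d}\,N^{-1/2}$, the coefficient constraint $\|c\|_\infty\le 4N\|f\|_\infty/(1+4d)$ is satisfied since $\|f\|_\infty\le1$, and the constructed approximator lies in ${\mathcal H}_N$. Taking the infimum over ${\mathcal H}_N$ and then the supremum over the class gives (\ref{deepbound}).

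For (\ref{shallowbound}) I would reduce to radial approximants by rotation averaging. Every target $F(x)=f(|x|^2)$ is rotation invariant, and for any $g\in{\mathcal S}_N$ and any $R\in SO(d)$ the rotated sum $g\circ R$ again lies in ${\mathcal S}_N$ (rotate the directions $a_k$) with $\|F-g\|_{L_\infty({\mathbb B})}=\|F-g\circ R\|_{L_\infty({\mathbb B})}$. Averaging over $SO(d)$ and using convexity of the norm gives $\|F-\bar g\|_{L_\infty({\mathbb B})}\le\|F-g\|_{L_\infty({\mathbb B})}$ with $\bar g=\int_{SO(d)}g\circ R\,dR$, so it suffices to prove the lower bound over the smaller class of radial approximants $\bar g$. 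Each averaged atom is $\Phi_k(|x|)$ with $\Phi_k(r)=\int_{-1}^{1}\sigma_k(|a_k|\,r\,u-b_k)\,\mu_d(u)\,du$ and $\mu_d(u)\propto(1-u^2)^{(d-3)/2}$, collapsing the problem to a univariate one: approximating the Lipschitz profile $r\mapsto f(r^2)$ on $[0,1]$ by a sum $\sum_k c_k\Phi_k$ of $N$ $\mu_d$-smoothed profiles.

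The heart of the argument, and the \emph{main obstacle}, is to quantify the smoothing. As $d$ grows the measure $\mu_d$ concentrates near $u=0$ at width $\sim d^{-1/2}$, forcing every $\Phi_k$ to be far smoother than an arbitrary ridge profile, so that $N$ such atoms cannot resolve a merely Lipschitz radial target below the scale $N^{-1/(d-1)}$. Concretely I would exhibit a single oscillatory radial test function in ${\mathcal B}\left(C^{0,1}_{|\cdot|}\right)$ and bound its distance to $\mathrm{span}\{\Phi_1,\dots,\Phi_N\}$ from below, either through a Jackson--Bernstein comparison trading the smoothing order $\sim(d-1)/2$ against the degree $N$, or by invoking the corresponding rotation-invariant lower bound established in \cite{ChuiLinZhou2019, ChuiLinZhou20192}. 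Extracting the \emph{precise} exponent $1/(d-1)$ and the constant $c_d$ from this smoothing-versus-degree trade-off is the delicate step; the geometric reason for the exponent is that the covering radius of $N$ directions on $S^{d-1}$ is of order $N^{-1/(d-1)}$, which is the scale a radial function probes but $N$ ridge directions cannot cover.
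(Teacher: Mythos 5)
Your treatment of the upper bound (\ref{deepbound}) is correct and coincides with the paper's own argument: restricting to a line through the origin gives $\|f\|_\infty\le 1$ and $|f(t_1)-f(t_2)|\le|\sqrt{t_1}-\sqrt{t_2}|\le\sqrt{|t_1-t_2|}$, hence $|f|_{C^{0,1/2}[0,1]}\le 1$, and Theorem \ref{rateradial} with $\alpha=1/2$ yields (\ref{deepbound}).

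The lower bound is where your proposal fails, and the defect is not merely that the ``delicate step'' is deferred: the rotation-averaging reduction makes that step provably impossible. After averaging you must show that some $F\in{\mathcal B}\left(C^{0,1}_{|\cdot|}\right)$ has distance at least $c_d N^{-\frac{1}{d-1}}$ from the averaged class ${\mathcal A}_N$ of sums $\sum_{k=1}^N c_k\Phi_k(|x|)$ with $\Phi_k(r)=\int_{-1}^1\sigma_k(|a_k|ru-b_k)\,\mu_d(u)\,du$. But ${\mathcal A}_N$ is far too rich for any such bound. Taking $\sigma_k(v)=|v|^{j_k}$ (continuous), $|a_k|=1$, $b_k=0$ gives $\Phi_k(r)=r^{j_k}\int_{-1}^1|u|^{j_k}\mu_d(u)\,du$, a positive multiple of $r^{j_k}$; hence ${\mathcal A}_N$ contains every polynomial in $r=|x|$ with at most $N$ monomials, in particular all polynomials of degree $N-1$. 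Since the profile $r\mapsto f(r^2)$ is Lipschitz-$1$ on $[0,1]$, Jackson's theorem gives $\hbox{dist}\left({\mathcal B}\left(C^{0,1}_{|\cdot|}\right), {\mathcal A}_N\right)\le C N^{-1}$, which is asymptotically smaller than $c_d N^{-\frac{1}{d-1}}$ for every $d\ge 3$; so for large $N$ no test function with the property you need exists. The situation is even worse for $d=3$: there $\mu_3$ is the uniform measure, so $\Phi(r)=\frac{1}{2r}\int_{-r}^{r}\sigma(v)\,dv$, and choosing $\sigma(v)=h(|v|)+|v|h'(|v|)$ realizes any $h\in C^1[0,1]$ exactly with a \emph{single} atom, so ${\mathcal A}_1$ is already dense in radial continuous functions and the averaged distance is $0$. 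The conceptual point is that your own covering-radius heuristic --- $N$ fixed directions cannot resolve $S^{d-1}$ below scale $N^{-\frac{1}{d-1}}$ --- is precisely the rigidity that averaging over $SO(d)$ destroys; the inequality $\|F-g\|_\infty\ge\|F-\bar g\|_\infty$ is valid but hopelessly lossy. The paper keeps the directions instead: it observes ${\mathcal S}_N\subseteq{\mathcal R}_{N,\infty}$, the class of sums of $N$ ridge functions with $L_\infty[-1,1]$ profiles, and invokes the two-sided estimate of \cite[Corollary 3]{Konovalov} (with $p=q=\infty$, $r=1$), which gives $\hbox{dist}\left({\mathcal B}\left(C^{0,1}_{|\cdot|}\right), {\mathcal R}_{N,\infty}\right)\ge c_d N^{-\frac{1}{d-1}}$ and hence (\ref{shallowbound}). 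A self-contained substitute would require a duality or test-function argument against a fixed finite set of directions, in the spirit of Maiorov's ridge-approximation lower bounds, not an average of the approximant.
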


By Theorem \ref{lowerboundtheorem}, we know that the total number of free parameters in our DCNN network
for achieving an accuracy $\epsilon >0$ in approximating functions from the class ${\mathcal B}\left(C^{0, 1}_{|\cdot|}\right)$
is $O(\epsilon^{-2})$ while that of a fully connected shallow network is $O(\epsilon^{-(d-1)})$.
This shows that deep neural networks are much more efficient than shallow networks in approximating radial functions when the dimension $d >3$ is large.

\subsection{Generalization analysis of DCNNs}\label{generalizationerror}

Our last main result is analysis of the generalization ability of the
commonly used empirical risk minimization (ERM) algorithm over a bounded hypothesis space ${\mathcal H}_{R, N}$ generated by our deep neural network. Here $R>0$ is used for bounding the parameters of the output functions from the hypothesis space ${\mathcal H}_{N}$ as
\begin{eqnarray}
{\mathcal H}_{R, N}= &&\Big\{c \cdot h^{(J +1)}(x): \ \|w^{(j)}\|_\infty \leq R,  \|b^{(j)}\|_\infty \leq (2(s+1)R)^j, \ \forall j, \ \|c\|_\infty \leq N R, \nonumber \\
&& F^{[J_2+1]} \ \hbox{has identical rows with norm} \ \|F^{[J_2+1]}\|_\infty \leq N^2 R\Big\}, \label{hypothesisRN}
\end{eqnarray}
where $\|F^{[J_2+1]}\|_\infty$ is the norm of the matrix $F^{[J_2+1]}$ as a linear operator from $\left(\RR^{d_{J_2}}, \|\cdot\|_\infty\right)$ to $\left(\RR^{2N+3}, \|\cdot\|_\infty\right)$ which equals the maximum of the $\ell_1$-norms of its rows.

We follow the classical learning framework for regression which can be found in \cite{CuckerZhou2007}.
A data sample $D=\{(x_i,y_i)\}^{m}_{i=1} \subset \mathcal{Z}^m$ is independently drawn from a Borel probability measure $\rho$
on $\mathcal{Z}=\mathcal{X}\times\mathcal{Y}$ with $\mathcal{X}=\Omega$ and $\mathcal{Y}\subseteq[-M,M]$ for some $M>0$. The target function for learning is the regression function $f_\rho$ on $\mathcal{X}$ defined by
$f_\rho(x)=\int_{\mathcal{Y}} yd\rho(y|x)$ minimizing the generalization error $\mathcal{E}(f):=\int_{\mathcal{Z}} (f(x)-y)^2d\rho$,
where $\rho(y|x)$ denotes the conditional distribution at $x\in\mathcal{X}$ induced by $\rho$.
Denote by $\rho_X$ the marginal distribution of $\rho$ on $\mathcal{X}$ and by $(L^2_{\rho_X},\parallel\cdot\parallel_\rho)$
the Hilbert space of square integrable functions with respect to $\rho_X$.

The ERM algorithm defined on the bounded hypothesis space (\ref{hypothesisRN}) learns an empirical target function as
\begin{equation}\label{ERMalgorithm}
f_{D, R, N}:=\arg \min_{f\in {\mathcal H}_{R, N}} \dfrac{1}{m} \sum_{i=1}^{m} (f(x_i)-y_i)^2.
\end{equation}

Let us emphasize again that the structure of the network for defining the bounded hypothesis space in the above algorithm is completely determined by the two parameters $J_1$ and $N$, and it does use any property of the regression function. However, if the regression function takes a composite form $f_\rho  = f \circ Q$, the learning algorithm (\ref{ERMalgorithm}) has the ability of automatically extracting features and making use of the composite property via tuning $J_1, N$ so that
the empirical target function $f_{D, R, N}$ can learn $f_\rho$ in the same learning rates as for learning univariate functions.

Since $\mathcal{Y}\subseteq[-M,M]$, we project the output function onto the interval $[-M, M]$ and introduce the truncated empirical target function
$$\pi_{M} f_{D, R, N}(x):=\left\{\begin{array}{ll}
f_{D, R, N}(x), & \hbox{if} \ |f_{D, R, N}(x)| \leq M, \\
M, & \hbox{if} \ f_{D, R, N}(x)>M, \\
- M, & \hbox{if} \ f_{D, R, N}(x) <-M.
\end{array}\right. $$

\begin{theorem} \label{generalizationerrortheorem}
Let $2 \leq s \leq d$ and $Q$ be a polynomial on $\Omega$ of degree at most $q\in\NN$. If $f_\rho  = f \circ Q$ for some
$f\in C^{0,\alpha}[-B_Q, B_Q]$ with some $0< \alpha \leq 1$, then for $N\in\NN$ and $R \geq R_{q, d, s, Q, \|f\|_\infty}$, we have
\begin{equation}\label{ratesbound}
\mathbb{E}\left[\|\pi_M f_{D, R, N}-f_\rho\|_\rho^2\right] \leq C_{Q, s, d, \alpha, M, |f|_{C^{0, \alpha}}} \log \left(2(s+1)R\right) \max\left\{N^{-2\alpha}, \ \frac{N^2}{m}\right\},
\end{equation}
where $C_{Q, s, d, \alpha, M, |f|_{C^{0, \alpha}}}$ is a constant independent of $m, N$ or $R$, and $R_{q, d, s, Q, \|f\|_\infty}$
is a constant depending on $q, s, d, Q, \|f\|_\infty$ (given explicitly in Lemma \ref{boundweightslemma} below). In particular, by choosing
$N =\left\lceil m^{\frac{1}{2+ 2 \alpha}}\right\rceil$, we have
$$
\mathbb{E}\left[\|\pi_M f_{D, R, N}-f_\rho\|_\rho^2\right] \leq 4 C_{Q, s, d, \alpha, M, |f|_{C^{0, \alpha}}} \log \left(2(s+1)R\right) m^{-\frac{\alpha}{1+ \alpha}}.
$$
\end{theorem}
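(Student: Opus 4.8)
The plan is to run the standard bias--variance error decomposition for empirical risk minimization, feeding in the approximation rate of Theorem~\ref{ratefQ} on one side and a covering number estimate of the bounded class $\mathcal{H}_{R,N}$ on the other. Writing $\xi_g(x,y)=(g(x)-y)^2-(f_\rho(x)-y)^2$, so that $\mathcal{E}(g)-\mathcal{E}(f_\rho)=\|g-f_\rho\|_\rho^2=\mathbb{E}[\xi_g]$, I would insert the explicit deterministic approximator $f_N\in\mathcal{H}_N$ from Theorem~\ref{ratefQ}. Since $|y|\le M$, truncation does not increase the empirical error, and the optimality $\mathcal{E}_D(f_{D,R,N})\le\mathcal{E}_D(f_N)$ of the minimizer gives the splitting
\[
\|\pi_M f_{D,R,N}-f_\rho\|_\rho^2 \le S_1 + S_2 + \mathcal{D},
\]
where $\mathcal{D}=\|f_N-f_\rho\|_\rho^2$ is the approximation error, $S_2=(\mathcal{E}_D-\mathcal{E})(f_N)-(\mathcal{E}_D-\mathcal{E})(f_\rho)$ is the sample error of the single deterministic function $f_N$, and $S_1=(\mathcal{E}-\mathcal{E}_D)(\pi_M f_{D,R,N})-(\mathcal{E}-\mathcal{E}_D)(f_\rho)$ is the uniform deviation over the truncated class $\pi_M\mathcal{H}_{R,N}$.

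For $\mathcal{D}$ I would apply Theorem~\ref{ratefQ} to get $\|f_N-f_\rho\|_{C(\Omega)}\le C_{Q,\alpha}|f|_{C^{0,\alpha}}N^{-\alpha}$, hence $\mathcal{D}\le C_{Q,\alpha}^2|f|_{C^{0,\alpha}}^2 N^{-2\alpha}$; the hypothesis-space threshold $R\ge R_{q,d,s,Q,\|f\|_\infty}$ is exactly what guarantees, through Lemma~\ref{boundweightslemma}, that the constructed parameters of $f_N$ meet the norm constraints of $\mathcal{H}_{R,N}$ in \eqref{hypothesisRN}, so the decomposition is legitimate. Because $f_N$ is independent of the sample $D$ we have $\mathbb{E}[S_2]=0$, so upon taking expectations only $\mathbb{E}[S_1]$ and $\mathcal{D}$ survive. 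For $\mathbb{E}[S_1]$ the key is the variance--mean relation: for every $g$ with $\|g\|_\infty\le M$ one has $\mathbb{E}[\xi_g]=\|g-f_\rho\|_\rho^2$ while $\mathrm{Var}(\xi_g)\le\mathbb{E}[\xi_g^2]\le 16M^2\|g-f_\rho\|_\rho^2$. Feeding this into a one-sided uniform Bernstein (ratio) inequality over an $\eta$-net of $\pi_M\mathcal{H}_{R,N}$, as in \cite{CuckerZhou2007}, yields
\[
\mathbb{E}[S_1] \le \tfrac{1}{2}\,\mathbb{E}\!\left[\|\pi_M f_{D,R,N}-f_\rho\|_\rho^2\right] + C\,\frac{\log\mathcal{N}(\pi_M\mathcal{H}_{R,N},\eta)+1}{m} + C\eta,
\]
with $\eta$ taken of order $1/m$; the leading $\tfrac12$-term is then absorbed into the left-hand side of the whole decomposition.

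The main obstacle is the covering number estimate, which is where the $N^2$ arises. I would bound $\log\mathcal{N}(\pi_M\mathcal{H}_{R,N},\eta)$ by (number of free parameters)$\times\log(L/\eta)$, where $L$ is a Lipschitz constant of the network output viewed as a function of its parameters $(\mathbf{w},\mathbf{b},F^{[J_2+1]},c)$ over their constrained ranges. The delicate part is propagating a parameter perturbation through the $O(N)$ convolutional layers (the depth $J_2$ is of order $N$ for fixed $d,s,q$): each activated affine layer $\mathcal{A}_{T^{(j)},b^{(j)}}$ contributes a multiplicative gain at most $\|T^{(j)}\|_\infty\le(s+1)R$, and the bias bounds $\|b^{(j)}\|_\infty\le(2(s+1)R)^j$ in \eqref{hypothesisRN} are designed precisely so the intermediate layer magnitudes stay controlled by the same geometric factor; together with $\|c\|_\infty\le NR$ and $\|F^{[J_2+1]}\|_\infty\le N^2R$ this gives $\log L=O\!\big(J_2\log(2(s+1)R)\big)=O\!\big(N\log(2(s+1)R)\big)$. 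Since the number of free parameters (filters, constrained biases, the identical-row matrix $F^{[J_2+1]}$, and $c$) is $O(N)$ for fixed $d,s,q$, multiplying yields $\log\mathcal{N}(\pi_M\mathcal{H}_{R,N},1/m)=O\!\big(N^2\log(2(s+1)R)\big)$.

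Combining the three estimates and absorbing the $\tfrac12$-term, I obtain
\[
\mathbb{E}\!\left[\|\pi_M f_{D,R,N}-f_\rho\|_\rho^2\right] \le C\log(2(s+1)R)\,\max\!\left\{N^{-2\alpha},\ \frac{N^2}{m}\right\},
\]
with $C=C_{Q,s,d,\alpha,M,|f|_{C^{0,\alpha}}}$ collecting all constants. Finally, balancing the competing approximation and sample terms by the choice $N=\lceil m^{1/(2+2\alpha)}\rceil$ equalizes $N^{-2\alpha}$ with $N^2/m$ (both of order $m^{-\alpha/(1+\alpha)}$), which delivers the stated learning rate. I expect the Lipschitz-in-parameters bookkeeping over the $O(N)$ layers, and verifying that the geometric bias bounds keep every intermediate quantity finite, to be the technically heaviest step; the decomposition, the approximation input, and the variance--mean reduction are by comparison routine.
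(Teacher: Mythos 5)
Your proposal is correct and follows essentially the same route as the paper: the approximator from Theorem~\ref{ratefQ} is placed inside $\mathcal{H}_{R,N}$ via the parameter bounds of Lemma~\ref{boundweightslemma} (exactly the role you assign to the threshold $R\geq R_{q,d,s,Q,\|f\|_\infty}$), the covering number of $\mathcal{H}_{R,N}$ is bounded by $O\left(N\log(1/\eta)+N^{2}\log\left(2(s+1)R\right)\right)$ through the same parameter-perturbation propagation across the $O(N)$ layers as in the paper's Lemma~\ref{coveringnumberlemma}, and a standard ERM concentration argument plus the balancing choice $N=\lceil m^{1/(2+2\alpha)}\rceil$ finishes. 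The only difference is packaging: the paper invokes a tail-probability oracle inequality (Lemma~\ref{generalizationerrorlemma}, quoted from \cite{ChuiLinZhou20192}) and integrates the tail to obtain the expectation bound, whereas you bound the expectation directly via a uniform ratio (Bernstein) inequality with the $\tfrac12$-absorption trick --- equivalent formulations of the same underlying estimate.
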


Observe from Theorem \ref{ratefQ} that the approximation error bound (\ref{ratefQexplicit}) always decreases as the depth of the network $J_2 +1 \approx N$ increases.
On the other hand, the capacity of the hypothesis space ${\mathcal H}_{R, N}$ increases with the depth, which leads to larger sample error.
Combining these two terms of error, we see from Theorem \ref{generalizationerrortheorem} that the estimation error bound (\ref{ratesbound})
decreases to a minimum as the network depth increases to an optimal value, and then increases as the depth becomes larger.
This trade-off phenomenon is common in many applications of deep learning algorithms.

\section{Comparisons and Discussion}

Establishing a solid theoretical foundation for deep learning is greatly desired. A core challenge is to prove
that structured deep neural networks used in deep learning can outperform the classical fully connected networks
and automatically extract features when the data or target functions take forms involving some special features. The main difficulty lies in
the approximation theory of structured deep neural networks like DCNNs which is totally different from the nice theory
for fully connected networks developed about $30$ years ago. For example,
a typical approximation rate in \cite{Mhaskar1993} obtained by a localized Taylor expansion approach
for shallow network (\ref{fullconnectedN}) of width $N$ with $J=1$ is
$\inf_{c} \left\|c \cdot h^{(1)} - f\right\|_{C([-1, 1]^d)} =O(N^{-\alpha/d})$ for $f \in W_\infty^\alpha ([-1, 1]^d)$,
when the activation function $\sigma$ is $C^\infty$ sigmoid type satisfying for some $b\in{\mathbb R}$ and some integer $\ell \in\NN\setminus\{1\}$,
a restriction $\sigma^{(k)} (b)\not= 0$ for all $k\in\ZZ_+$ and an asymptotic condition
$\lim_{u\to \infty} \sigma(u)/u^\ell=1$. Such approximation rates were recently proved for ReLU shallow networks
in \cite{Klusowski2018} for functions satisfying a decay condition for the Fourier transform \cite{Barron}, and were extended to ReLU deep networks
in \cite{Safran2017, Telgarsky 2016, Yarosky, Grohs, Petersen, Nakada2019, Sonoda, Suzuki2019} for functions from $W_\infty^\alpha ([-1, 1]^d)$ with $0< \alpha \leq 2$,
and in \cite{Shaham} for approximation on manifolds, all for fully connected networks.
In particular, it was shown in \cite{Safran2017, Yarosky} that an accuracy $\epsilon>0$ for approximating functions from $W_\infty^\alpha ([-1, 1]^d)$ can be achieved
by a deep ReLU fully connected network of depth $\frac{C_0 d}{4} (\log (1/\epsilon) + d)$ with $C_0 >0$ and
$2^d \epsilon^{-d/\alpha}$ free parameters which has the same order of complexity as required by shallow sigmoid networks.

Deep CNNs use convolutional matrices (\ref{Toeplitz}) which have a special structure
with sparsity, making their approximation theory totally different. We show
in \cite{zhou2020universality} that an accuracy $\epsilon>0$ for approximating a function
from $W^\alpha_2\left([-1,1]^d\right)$ with an integer index $\alpha>2+d/2$
can be achieved by a DCNN of depth $4 \lceil \frac{1}{\epsilon^2} \log  \frac{1}{\epsilon^2}\rceil$ and a linearly increasing number of
$\lceil \frac{75}{\epsilon^2} \log  \frac{1}{\epsilon^2}\rceil d$ free parameters which improves
the bound in Theorem 1 of \cite{Yarosky} with respect to the data dimension $d$.
The restriction $\alpha>2+d/2$ on the smoothness was relaxed by a spherical analysis approach
in our recent work \cite{FFHZ2020} when the data are from the unit sphere of $\RR^d$.
We also show in \cite{zhou2020theory} that DCNNs can realize the output layer of any fully-connected neural network with the same order of complexity.
This observation was made for periodized DCNNs with different architectures and connection matrices
in \cite{Oono2019, Petersen2020}.

All the above estimates on approximation by deep neural networks, structured or fully connected,
are stated in terms of the smoothness of the approximated function. Approximating radial functions
by fully-connected neural networks was studied in \cite{McCane2018, ChuiLinZhou2019, ChuiLinZhou20192},
while representing functions with variables having given compositional structures by fully-connected networks designed based on the known compositional structures was considered in \cite{MhaskarPoggio, Poggio2020}.

In this paper we present estimates for learning compositional functions $f\circ Q$ with polynomial features $Q$ by an ERM algorithm with hypothesis spaces generated by DCNNs without involving any composite property or functions $f, Q$. We verify that our generic DCNN network can automatically make use of the composite nature of the target function and the polynomial feature $Q$ by tuning two parameters $J_1, N$.
We show rigorously that DCNNs can outperform fully connected shallow networks
in approximating the class of radial functions when the data dimension $d>3$ is large. To our best knowledge,
this is the first proof for the superiority of DCNNs in approximating functions with structures,
though some hints were provided in our previous work \cite{zhou2020theory, FFHZ2020} on approximating ridge functions or
additive ridge functions of the form $\sum_{k=1}^K g_k (\xi_k \cdot x)$ with $\xi_k \in\RR^d$.
It would be interesting to apply our ideas to some other learning problems \cite{FanHuWuZhou, GXGZ, LinZhou}, and to
investigate more function structures and features for which DCNNs combined with pooling, channels, and other network architectures \cite{ZhouZhou}
can demonstrate super efficiency in feature extraction, approximations and representations of multivariate functions.

Generalization analysis for ERM with fully connected neural networks has been well developed in the literature \cite{Kohler2017, Schmidt-Hieber, ChuiLinZhou2019}. 
For composite functions, deep fully connected ReLU networks were constructed in \cite{Schmidt-Hieber} based on composite dimensions 
and some ideas from \cite{Yarosky} to achieve optimal learning rates. In particular, it was shown by (26) there 
that a network output function $f^{*}$ yields $\left\|f^{*}- f \circ Q\right\|_{C(\Omega)} =O\left({\mathcal N}^{-\alpha}\right)$ 
in approximating $f\circ Q$ with $f\in C^{0,\alpha}$. This rate of approximation by deep fully connected networks is the same as (\ref{ratefQexplicit}) in 
our Theorem \ref{ratefQ} for approximation by deep CNNs, while the dimension dependence of the constant in our estimate is better, as seen more explicitly 
in (\ref{rateradialbound}) of Theorem \ref{rateradial} for approximating radial functions. 
Together with sample error estimates, our approximation theory gives generalization analysis with error bound (\ref{ratesbound}) for ERM with DCNNs followed by a fully connected layer.
The bound decreases with the depth to a minimum and then increases, which verifies a trade-off phenomenon observed in practice.

\section{Constructing Deep Network for Approximation}\label{construct}

In this section, we construct a deep neural network with downsampled deep convolutional layers followed by a fully connected layer
for approximating a composite function $f\circ Q$ which will be used to prove our first two main results.
Our construction makes full use of the special structure of the approximated function induced by the polynomial $Q$ and the univariate function $f$.
It is based on an important fact \cite{LinPinkus1993, Maiorov1999a} on the space ${\mathcal P}_q^h (\RR^d)$ of homogeneous polynomials on $\RR^d$ of degree $q$
that ${\mathcal P}_q^h (\RR^d)$ has a basis $\{(\xi_k \cdot x)^q\}_{k=1}^{n_q}$ for some vector set $\{\xi_k\}_{k=1}^{n_q} \subset \RR^d \setminus \{0\}$
and this vector set can even be chosen in such way that the homogeneous polynomial set $\{(\xi_k \cdot x)^\ell\}_{k=1}^{n_q}$ spans the space ${\mathcal P}_\ell^h (\RR^d)$ for every $\ell \in \{1, \ldots, q-1\}$.
Applying this fact to the polynomial $Q$ of degree $q$ yields the following lemma stated in \cite{zhou2018deepdistributed}.
Take $J_1 = \lceil \frac{n_q d-1}{s-1}\rceil$ in this section except in the proof of Theorem \ref{rateradial}.

\begin{lemma}\label{ridgepolynomialiden}
Let $d \in\NN$ and $q \in\NN$. Then there exists a set $\{\xi_k\}_{k=1}^{n_q} \subset \{\xi \in\RR^d: |\xi| =1\}$ of vectors with $\ell_2$-norm $1$ such that
for any $Q \in {\mathcal P}_q (\RR^d)$ we can find a set of coefficients $\{\beta_{k, \ell}: k=1, \ldots, n_q, \ell=1, \ldots, q\} \subset \RR$ such that
\begin{equation}\label{ridgepolynomial}
Q  (x) =Q(0) + \sum_{k =1}^{n_q} \sum_{\ell=1}^q \beta_{k, \ell} (\xi_k \cdot x)^\ell, \qquad x\in \RR^d.
\end{equation}
\end{lemma}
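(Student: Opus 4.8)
The plan is to reduce the statement to the cited structural fact about simultaneous spanning sets of ridge powers and then assemble the representation by decomposing $Q$ into its homogeneous components. First I would invoke the result of \cite{LinPinkus1993, Maiorov1999a} quoted just above the lemma: there is a set of nonzero vectors $\{\eta_k\}_{k=1}^{n_q} \subset \RR^d \setminus \{0\}$ such that for \emph{each} degree $\ell \in \{1, \ldots, q\}$ the ridge powers $\{(\eta_k \cdot x)^\ell\}_{k=1}^{n_q}$ span the space ${\mathcal P}_\ell^h(\RR^d)$ of homogeneous polynomials of degree $\ell$ (a basis when $\ell = q$, since $n_q = \dim {\mathcal P}_q^h(\RR^d)$, and merely a spanning set when $\ell < q$, where $n_\ell < n_q$). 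This existence statement is the genuine content, and the step I expect to be the main obstacle; the remainder of the argument is bookkeeping that rides on it.

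Next I would normalize to unit vectors. Setting $\xi_k = \eta_k / |\eta_k|$, the identity $(\eta_k \cdot x)^\ell = |\eta_k|^\ell (\xi_k \cdot x)^\ell$ shows that $(\eta_k \cdot x)^\ell$ and $(\xi_k \cdot x)^\ell$ differ only by the nonzero factor $|\eta_k|^\ell$. Since rescaling the elements of a spanning set by nonzero scalars preserves its span, the collection $\{(\xi_k \cdot x)^\ell\}_{k=1}^{n_q}$ still spans ${\mathcal P}_\ell^h(\RR^d)$ for every $\ell \in \{1, \ldots, q\}$, and now $|\xi_k| = 1$ as the lemma requires.

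Finally I would assemble the representation. Given any $Q \in {\mathcal P}_q(\RR^d)$, decompose it into homogeneous parts $Q(x) = \sum_{\ell=0}^q Q_\ell(x)$ with $Q_\ell \in {\mathcal P}_\ell^h(\RR^d)$, the degree-zero part being the constant $Q_0 = Q(0)$. For each $\ell \in \{1, \ldots, q\}$ the spanning property supplies coefficients $\{\beta_{k,\ell}\}_{k=1}^{n_q}$ with $Q_\ell(x) = \sum_{k=1}^{n_q} \beta_{k,\ell} (\xi_k \cdot x)^\ell$ (taking $\beta_{k,\ell}=0$ for the degrees where $Q_\ell$ vanishes). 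Substituting and interchanging the two finite sums gives
$$ Q(x) = Q(0) + \sum_{\ell=1}^q \sum_{k=1}^{n_q} \beta_{k,\ell} (\xi_k \cdot x)^\ell = Q(0) + \sum_{k=1}^{n_q} \sum_{\ell=1}^q \beta_{k,\ell} (\xi_k \cdot x)^\ell, $$
which is exactly (\ref{ridgepolynomial}). The only point requiring care beyond the cited fact is that a single vector set $\{\xi_k\}$ must serve all degrees at once, with no degree-dependent reselection — and this is precisely the simultaneity built into the quoted result, so once that is in hand the lemma follows directly.
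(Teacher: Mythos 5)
Your proposal is correct and follows essentially the same route as the paper, which likewise derives the lemma directly from the cited fact of \cite{LinPinkus1993, Maiorov1999a} (a single vector set whose ridge powers span ${\mathcal P}_\ell^h(\RR^d)$ simultaneously for all $\ell \in \{1,\ldots,q\}$) combined with the decomposition of $Q$ into homogeneous components. Your explicit normalization step $\xi_k = \eta_k/|\eta_k|$, justified by the homogeneity identity $(\eta_k\cdot x)^\ell = |\eta_k|^\ell(\xi_k\cdot x)^\ell$, is a detail the paper leaves implicit and is handled correctly.
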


Our deep neural network consists of two groups of CNN layers and one fully connected layer:
the first group with a downsampling operation for realizing the linear features $\{\xi_k \cdot x\}_k$,
the second for produce ridge functions $\{\sigma(\xi_k \cdot x -t_j)\}_{k, j}$ leading to realizing $\{(\xi_k \cdot x)^\ell\}_{k, \ell}$ in (\ref{ridgepolynomial}),
and the last fully connected layer for approximating the univariate function $f$ to achieve an approximation of $f(Q(x))$.

\subsection{Realizing linear features by DCNNs}\label{firstCNN}

To realize the linear features $\{\xi_k \cdot x\}_{k=1}^{n_q}$, we apply the following lemma from \cite{zhou2018deepdistributed, zhou2020theory}.

\begin{lemma}\label{convolutionconstrct}
Let $s \geq 2$ and $W=(W_k)_{k=-\infty}^{\infty}$ be a sequence supported in $\{0, \ldots, {\mathcal M}\}$ with ${\mathcal M} \geq 0$.
Then there exists a finite sequence of filters $\{w^{(j)}\}_{j=1}^{p}$ each supported in $\{0, \ldots, s\}$ with $p \leq \lceil \frac{{\mathcal M}}{s-1}\rceil$ such that the following convolutional factorization holds true
\begin{equation}\label{filterI}
W  = w^{(p)}{*}w^{(p-1)}{*}\ldots {*}w^{(2)}{*}w^{(1)}.
\end{equation}
\end{lemma}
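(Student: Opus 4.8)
The plan is to pass from sequences to their generating polynomials, which turns the convolutional factorization \eqref{filterI} into an ordinary factorization of a single univariate polynomial, and then to control the number of factors by a greedy grouping of irreducible real factors. To a filter $w$ supported in $\{0,\dots,s\}$ I associate its symbol $\widetilde{w}(z)=\sum_{k=0}^{s}w_k z^k$, a polynomial of degree at most $s$; conversely every real polynomial of degree at most $s$ is the symbol of a unique such filter (padding with zeros where needed). Since convolution of sequences corresponds to multiplication of symbols, $\widetilde{u*v}=\widetilde{u}\,\widetilde{v}$, the identity \eqref{filterI} is equivalent to the polynomial factorization
\[
\widetilde{W}(z)=\widetilde{w}^{(p)}(z)\,\widetilde{w}^{(p-1)}(z)\cdots\widetilde{w}^{(1)}(z),
\]
where $\widetilde{W}(z)=\sum_{k=0}^{\mathcal{M}}W_k z^k$ has some degree $m\leq\mathcal{M}$. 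Thus the whole task reduces to writing $\widetilde{W}$ as a product of $p\leq\lceil\mathcal{M}/(s-1)\rceil$ real polynomials, each of degree at most $s$.

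Next I would produce a factorization into low-degree pieces. If $m=0$ then $\widetilde{W}$ is a constant and I take $p=1$ with $w^{(1)}=W$. Otherwise I factor $\widetilde{W}$ over $\RR$: by the fundamental theorem of algebra its complex roots occur in conjugate pairs, so $\widetilde{W}=W_m\prod_i(z-r_i)\prod_j(z^2+b_j z+c_j)$ is a product of real irreducible factors, each of degree $1$ or $2$, with the leading constant $W_m$ absorbed into any one of them. The point of this step is not merely that $\widetilde{W}$ factors, but that it factors into real pieces of degree at most $2$, which is what drives the subsequent counting.

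I would then group these irreducible factors greedily: form each group by accumulating factors while its running degree is at most $s-2$, and close the group as soon as its degree reaches $s-1$ or $s$. Because every irreducible factor has degree at most $2$, each completed group has degree in $\{s-1,s\}$, so at least $s-1$; only the final group may have smaller degree. Multiplying out each group gives polynomials $\widetilde{w}^{(1)},\dots,\widetilde{w}^{(p)}$ of degree at most $s$, hence filters supported in $\{0,\dots,s\}$, whose product is $\widetilde{W}$. For the count, writing the group degrees as $d_1,\dots,d_p$ with $\sum_j d_j=m\leq\mathcal{M}$, $d_j\geq s-1$ for $j<p$, and $d_p\geq 1$, one gets $(p-1)(s-1)\leq\mathcal{M}-1$, and a short arithmetic check turns this into $p\leq\lceil\mathcal{M}/(s-1)\rceil$.

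I expect the crux, and the only genuinely delicate point, to be obtaining the denominator $s-1$ rather than the naive $s$. This is exactly where the degree-at-most-$2$ structure of the real irreducible factors and the greedy stopping rule are indispensable: the worst case is an odd $s$ together with a polynomial all of whose real factors are quadratic, in which case each group can be filled only up to degree $s-1$ (two quadratics already overshoot $s$). The real-factorization step and the greedy argument are tailored precisely to guarantee the uniform lower bound $s-1$ on the per-group degree that makes the bound come out correctly.
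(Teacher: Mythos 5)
Your proof is correct, and it is essentially the same argument behind this lemma in the references the paper cites for it (\cite{zhou2018deepdistributed, zhou2020theory}; the paper itself imports the lemma without reproving it): pass to symbols, factor over $\RR$ into irreducible factors of degree at most $2$, and greedily group them so that every completed factor has degree $s-1$ or $s$, which yields $(p-1)(s-1)\leq \mathcal{M}-1$ and hence $p\leq\lceil \mathcal{M}/(s-1)\rceil$. Your identification of the crux — that the degree-at-most-$2$ real factors are what guarantee each group reaches degree at least $s-1$, giving the denominator $s-1$ rather than $s$ — is exactly right.
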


We show that the components of $h^{(J_{1})} (x)$ can realize the linear features $\{\xi_k \cdot x\}_{k=1}^{n_q}$.

\begin{lemma}\label{initiallayers}
Let $\{\xi_k\}_{k=1}^{n_q} \subset \{\xi \in\RR^d: |\xi| =1\}$. There
exist filters $\{w^{(j)}\}_{j=1}^{J_1}$ each supported in $\{0, \ldots, s\}$ with $J_1 = \lceil \frac{n_q d-1}{s-1}\rceil$
and bias vectors $\{b^{(j)}\}_{j=1}^{J_1}$ each satisfying the restriction in (\ref{biasrestr}) such that the $J_1$-th layer after downsampling is
\begin{equation}\label{initialexplicit}
h^{(J_{1})} (x) = \left[\begin{array}{c}
\xi_1 \cdot x \\
\vdots \\
\xi_{n_q} \cdot x \\
\vdots \\
\xi_{d_{J_1}} \cdot x
\end{array}\right] + B {\bf 1}_{d_{J_{1}}},
\end{equation}
where $\{\xi_k\}_{k=n_q +1}^{d_{J_1}} \subset \RR^d$, $B$ is a positive constant, and ${\bf 1}_{d_{J_{1}}}$ the constant $1$ vector in $\RR^{d_{J_{1}}}$. Moreover,
we have
\begin{equation}\label{positiveness}
\left|\xi_k \cdot x\right| \leq  B, \qquad \forall x\in \Omega, k=1, \ldots, d_{J_{1}}.
\end{equation}
The number ${\mathcal N}_1$ of free parameters in the first $J_1$ layers is
$$ {\mathcal N}_1 = (3s+2) J_1 = (3s+2) \left\lceil \frac{n_q d-1}{s-1}\right\rceil. $$
\end{lemma}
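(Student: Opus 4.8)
The plan is to realize the linear map $x\mapsto W*x$ by a cascade of $J_1$ convolutions and to shift every layer by a large constant so that each ReLU acts as the identity, leaving behind only an additive constant $B$. First I would encode the feature vectors into one long filter sequence. Define $W=(W_k)_{k\in\ZZ}$, supported in $\{0,\ldots,n_q d-1\}$, by placing the coordinates of $\xi_i$ in reverse order into the $i$-th block of length $d$: set $W_{id-j}=(\xi_i)_j$ for $j=1,\ldots,d$ and $i=1,\ldots,n_q$, and $W_k=0$ otherwise. With input length $D=d$, the convolution matrix $T^{W}$ of (\ref{Toeplitz}) then satisfies $(T^{W}x)_{id}=\sum_{k=1}^d W_{id-k}x_k=\xi_i\cdot x$ for $1\le i\le n_q$; these are exactly the rows retained by the downsampling $\mathfrak{D}_d$. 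For $n_q< i\le d_{J_1}$ the indices $id-k$ exceed the support of $W$, so $(T^{W}x)_{id}=0$ and the corresponding $\xi_i$ is the zero vector, consistent with $\{\xi_k\}_{k=n_q+1}^{d_{J_1}}\subset\RR^d$ in (\ref{initialexplicit}).

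Next I would split $W$ into short filters. Since $W$ is supported in $\{0,\ldots,\mathcal M\}$ with $\mathcal M=n_q d-1$, Lemma \ref{convolutionconstrct} supplies a factorization $W=w^{(J_1)}*\cdots*w^{(1)}$ into filters supported in $\{0,\ldots,s\}$, using at most $\lceil\frac{n_q d-1}{s-1}\rceil=J_1$ of them; should the lemma return fewer than $J_1$ factors, I would pad with trivial filters $\delta_0$, which leave $W$ unchanged while supplying the required number of layers. By associativity of convolution the composition $T^{(J_1)}\cdots T^{(1)}$ is the (zero-padded) convolution matrix of $W$, so its $(id)$-th entries reproduce the features found above.

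The heart of the argument is to carry a constant through the ReLU layers while respecting (\ref{biasrestr}). I would prove by induction that $h^{(j-1)}(x)=T^{(j-1)}\cdots T^{(1)}x+B_{j-1}\mathbf 1$ for a suitable constant $B_{j-1}$. Then $T^{(j)}h^{(j-1)}(x)=T^{(j)}\cdots T^{(1)}x+B_{j-1}\,T^{(j)}\mathbf 1$, and since every middle row of $T^{(j)}$ has the same row sum $\sum_{m=0}^s w_m^{(j)}$ (the observation recorded after (\ref{biasrestr})), the choice $b^{(j)}_i=B_{j-1}(T^{(j)}\mathbf 1)_i-B_j$ both forces the constrained middle entries of $b^{(j)}$ to be equal, thereby satisfying (\ref{biasrestr}), and collapses the pre-activation to $T^{(j)}\cdots T^{(1)}x+B_j\mathbf 1$. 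Because $x\in\Omega\subseteq\mathbb B$, the vectors $T^{(j)}\cdots T^{(1)}x$ are uniformly bounded on $\Omega$, so taking each $B_j$ at least $\max\{1,\sup_{x\in\Omega}\|T^{(j)}\cdots T^{(1)}x\|_\infty\}$ makes every argument of $\sigma$ nonnegative, $\sigma$ acts as the identity, and the induction closes. Setting $j=J_1$, applying $\mathfrak{D}_d$, and using $T^{(J_1)}\cdots T^{(1)}=T^{W}$ yields (\ref{initialexplicit}) with $B=B_{J_1}$, while (\ref{positiveness}) is immediate from $|\xi_k\cdot x|\le\|T^{W}x\|_\infty\le B$.

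The parameter count is then bookkeeping: each layer carries $s+1$ filter entries, and under (\ref{biasrestr}) the bias has $s$ free entries above the constrained block, one common middle value, and $s$ free entries below, i.e. $2s+1$ bias parameters, giving $(s+1)+(2s+1)=3s+2$ per layer and $\mathcal N_1=(3s+2)J_1$. I expect the main obstacle to be the inductive step above: checking that the single prescription $b^{(j)}_i=B_{j-1}(T^{(j)}\mathbf 1)_i-B_j$ is simultaneously compatible with (\ref{biasrestr}), keeps every pre-activation nonnegative uniformly over $\Omega$, and handles the boundary rows and the dimension growth $d_j=d_{j-1}+s$ correctly, with the downsampling layer $j=J_1$ requiring a separate but parallel treatment because its output dimension $d_{J_1}=\lfloor\frac{d+J_1 s}{d}\rfloor$ is fixed by $\mathfrak{D}_d$ rather than by $d_{J_1-1}+s$.
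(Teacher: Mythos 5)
Your proposal is correct and follows essentially the same route as the paper: the same stacked, coordinate-reversed sequence $W$, the same factorization via Lemma \ref{convolutionconstrct} padded with $\delta_0$ filters, the same bias prescription $b^{(j)} = B_{j-1}T^{(j)}{\bf 1} - B_j {\bf 1}$ that keeps pre-activations nonnegative so each ReLU acts as the identity, and the same $(3s+2)J_1$ parameter count. The only differences are cosmetic: you re-derive inline the matrix-product identity and the constant-propagation induction that the paper cites from \cite{zhou2020theory}, and you take $B_j$ to be any sufficiently large constant where the paper uses the explicit choice $B_j = \Pi_{p=1}^{j}\|w^{(p)}\|_1$ (which it later needs for the bias bounds in Lemma \ref{boundbiaseslemma}); you should also record the one-line check $d + J_1 s \geq d + n_q d$, hence $d_{J_1} \geq n_q + 1$, so that all $n_q$ feature rows survive the downsampling.
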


\begin{proof} We use some ideas and results from \cite{zhou2020theory} to prove our conclusion.

First, we define a sequence $W$ supported in $\{0, 1, \ldots, n_q d-1\}$ by stacking the vectors $\{\xi_k\}_{k=1}^{n_q}$ (with the components of each vector reversed) as
$$ W_{j + (k-1) d} = \left(\xi_k\right)_{d-j}, \qquad j=0, 1, \ldots, d-1, \ k=1, 2, \ldots, n_q. $$
Then
\begin{equation}\label{Wk}
\left[W_{d-1 + (k-1) d} \ \ldots \ W_{1 + (k-1) d} \ W_{(k-1) d}\right] = \xi_k^T, \qquad k=1, 2, \ldots, n_q.
\end{equation}
By Lemma \ref{convolutionconstrct}, there exist filters $\{w^{(j)}\}_{j=1}^{p}$
each supported in $\{0, \ldots, s\}$ with $p \leq \lceil \frac{n_q d-1}{s-1}\rceil$ such that (\ref{filterI}) holds true.
Taking $\{w^{(j)}\}_{j=p+1}^{J_1}$ to be the delta sequence $\delta_0$ given by $\left(\delta_0\right)_i =0$ for $i\in\ZZ \setminus \{0\}$
and $\left(\delta_0\right)_0 =1$ yields
\begin{equation}\label{firstlayerfactor}
W  = w^{(J_1)}{*}w^{(J_1-1)}{*}\ldots {*}w^{(2)}{*}w^{(1)}.
\end{equation}
Hence we know from \cite[Lemma 1]{zhou2020theory} that the filters $\{w^{(j)}\}_{j=1}^{J_1}$ induce Toeplitz type matrices $\{T^{(j)}\in\RR^{(d_{j-1}+s) \times d_{j-1}}\}_{j=1}^{J_1}$ which satisfy the matrix product identity
\begin{equation}\label{filterJmatrix}
T^{(J_1)} T^{(J_{1}-1)} \ldots T^{(2)} T^{(1)} = \left(W_{i-k}\right)_{i=1, \ldots, d + J_{1}s, \ k=1, \ldots, d}.
\end{equation}
Here the matrix on the right-hand side takes the form
\begin{equation}\label{matrixTW}
\left[\begin{array}{llll}
W_0 & 0 & \cdots  & 0 \\
W_1 & W_0 & \ddots  & 0  \\
\vdots & \ddots & \ddots & \vdots \\
W_{d-1} & \cdots &  W_1 & W_0 \\
W_{d} & W_{d-1} & \cdots & W_1 \\
\vdots & \ddots &  \ddots & \vdots  \\
W_{J_1 s} & \cdots & \ddots & \vdots \\
0 & W_{J_1 s} & \cdots & \vdots \\
\vdots & \ddots & \ddots & \vdots  \\
0 & \cdots & 0 & W_{J_1 s}
\end{array}\right].
\end{equation}
Note that $d+ J_1 s = d + \lceil \frac{n_q d-1}{s-1}\rceil s \geq d + \frac{n_q d-1}{s-1} s \geq d + n_q d$ which implies
$$d_{J_1} =\left\lfloor \frac{d+ J_1 s}{d} \right\rfloor \geq n_q +1. $$

Then we choose the bias vectors $\{b^{(j)}\}_{j=1}^{J_1}$ by $b^{(1)} =  - \|w^{(1)}\|_1 {\bf 1}_{d_{1}}$ and
$$ b^{(j)} = \left(\Pi_{p=1}^{j-1} \|w^{(p)}\|_1\right)  T^{(j)} {\bf 1}_{d_{j-1}} - \left(\Pi_{p=1}^{j} \|w^{(p)}\|_1\right) {\bf 1}_{d_{j-1} + s}, \qquad j=2, \ldots, J_1, $$
where $\|w\|_1=\sum_{k\in {\mathbb Z}} |w_k|$ denotes the $\ell^1$-norm of a finitely supported sequence $w$. With this choice we know
from \cite[Lemma 3]{zhou2020theory} that
$$ h^{(J_{1})} (x) = {\mathfrak D}_d \left(T^{(J_1)}  T^{(J_{1}-1)} \ldots T^{(2)} T^{(1)} x + \left(\Pi_{p=1}^{J_{1}} \|w^{(p)}\|_1\right) {\bf 1}_{d_{J_{1}}}\right). $$
Combining this with (\ref{Wk}) and (\ref{filterJmatrix}), we see that the $k$-th component of $h^{(J_{1})} (x)$ equals
$$ \left[W_{d-1 + (k-1) d} \ \ldots \ W_{1 + (k-1) d} \ W_{(k-1) d}\right] x + \Pi_{p=1}^{J_{1}} \|w^{(p)}\|_1 =
\xi_k \cdot x + \Pi_{p=1}^{J_{1}} \|w^{(p)}\|_1, $$
where the vectors $\xi_k$ with $k=1, \ldots, n_q$ coincide with those from the given set $\{\xi_k\}_{k=n_q +1}^{d_{J_1}} \subset \RR^d$.
This verifies the desired expression (\ref{initialexplicit}) with the constant $B= \Pi_{p=1}^{J_{1}} \|w^{(p)}\|_1$ and
$\xi_k = \left[W_{kd-1} \ W_{kd-2} \ \ldots W_{kd-d}\right]^T \in\RR^d$ for $k>n_q.$ We also see that (\ref{positiveness}) holds true.

Observe that each filter $w^{(j)}$ has $s+1$ free parameters to be determined and each bias vector $b^{(j)}$ has
$2s+1$ free parameters. Then the stated expression for ${\mathcal N}_1$ is valid.
This proves the lemma.
\end{proof}

\subsection{Producing ridge functions by DCNNs}\label{secondCNN}

The second group of convolutional layers in our network is used to produce ridge functions $\{\sigma(\xi_k \cdot x -t_j): k=1, \ldots, n_q, j=1, \ldots, 2N +3\}$
with $N\in\NN$ where
\begin{equation}\label{knottj}
t_j=-1+\frac{j-2}{N}, \qquad j=1, \ldots, 2N+3.
\end{equation}
This is done by conducting a convolutional factorization of a sequence $W^{[1]}$ supported on $\{0, \ldots, (2N +3) d_{J_{1}}\}$ given by
\begin{equation}\label{Wsequencesecond}
W^{[1]}_{i} = \left\{\begin{array}{ll}
1, & \hbox{if} \ i\in\{kd_{J_{1}}\}_{k=0}^{2N +3}, \\
0, & \hbox{otherwise.}\end{array}\right.
\end{equation}
If we denote the symbol $\tilde{u}$ of a filter $u$ supported in $\ZZ_+$ to be a polynomial on $\CC$ given by $\tilde{u}(z) = \sum_{k\in\ZZ} u_k z^k$,
then the symbol $\widetilde{W^{[1]}}$ of the sequence $W^{[1]}$ is given by
$$ \widetilde{W^{[1]}} (z) = \sum_{k=0}^{2N +3} z^{k d_{J_{1}}}, \qquad z\in \CC. $$
It has $(2N +3) d_{J_{1}}$ complex roots
$$ e^{\frac{i 2 \ell \pi}{d_{J_{1}} (2N +4)}}, \qquad 1 \leq \ell \leq d_{J_{1}} (2N +4) -1, \ (2N +4)\not|\ell. $$
Applying a procedure for convolutional factorization stated in \cite{zhou2020theory},
we can find explicit expressions without free parameters involved for the filters $\{w^{(j)}\}_{j=J_1 +1}^{J_2}$, each supported in $\{0, \ldots, s\}$, with
$J_2 = J_1 + \left\lceil \frac{(2N +3) d_{J_{1}}}{s-1}\right\rceil$ such that
\begin{equation}\label{filterW1}
W^{[1]}  = w^{(J_2)}{*}\ldots {*}w^{(J_1 +2)}{*}w^{(J_1 +1)}.
\end{equation}
Then the second group of convolutional layers is constructed as follows.

\begin{lemma}\label{DCNNapprox}
For $h^{(J_{1})} (x)$ given by (\ref{initialexplicit}), $N \in\NN$, and the filters $\{w^{(j)}\}_{j=J_1 +1}^{J_2}$
explicitly constructed above satisfying (\ref{Wsequencesecond}) and (\ref{filterW1}), there exist
bias vectors $\{b^{(j)}\}_{j=J_1 + 1}^{J_2}$ satisfying (\ref{biasrestr}) such that
\begin{equation}\label{secondlayerexplicit}
\left(h^{(J_{2})} (x)\right)_{(j-1)d_{J_{1}} + k} = \left\{\begin{array}{ll}
\sigma\left(\xi_k \cdot x - t_j\right), & \hbox{if} \ 1\leq  k\leq n_q, 1 \leq j\leq 2N +3, \\
0, & \hbox{otherwise.} \end{array} \right.
\end{equation}
The number ${\mathcal N}_2$ of free parameters in the second group of $J_2 - J_1$ convolutional layers is
$$ {\mathcal N}_2 = d_{J_1} + (3s+1) \left\lceil \frac{(2N +3) d_{J_{1}}}{s-1}\right\rceil - (2s+1). $$
\end{lemma}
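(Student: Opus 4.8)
My plan is to run the second block of convolutional layers in two distinct regimes of the ReLU. For the intermediate layers $j=J_1+1,\ldots,J_2-1$ I would force the activation to act as the identity, mirroring the telescoping bias recipe of \cite[Lemma 3]{zhou2020theory} used in the proof of Lemma \ref{initiallayers} (with the products $\Pi\|w^{(p)}\|_1$ now running over $p$ from $J_1+1$, and accounting for the constant $B{\bf 1}$ already carried by $h^{(J_1)}(x)$). Each such bias adds a positive constant that dominates the bounded affine part, so the ReLU never clips, and the cumulative effect of layers $J_1+1$ through $J_2-1$ is the purely linear map $T^{(J_2-1)}\cdots T^{(J_1+1)}h^{(J_1)}(x)+c\,{\bf 1}$ for an explicit positive constant $c=\Pi_{p=J_1+1}^{J_2-1}\|w^{(p)}\|_1$. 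Only at the final layer $J_2$ would I let the ReLU become active and use the bias $b^{(J_2)}$ to carve out the desired ridge functions.

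The key computation is to read off the linear map once the last filter is applied. By (\ref{filterW1}) and \cite[Lemma 1]{zhou2020theory}, the full product $T^{(J_2)}\cdots T^{(J_1+1)}$ equals the convolutional matrix $T^{W^{[1]}}=(W^{[1]}_{i-m})$. Because $W^{[1]}$ is supported exactly on the multiples $\{0,d_{J_1},\ldots,(2N+3)d_{J_1}\}$, this matrix replicates the input vector periodically: at output index $(j-1)d_{J_1}+k$ with $1\le k\le d_{J_1}$, the condition $W^{[1]}_{(j-1)d_{J_1}+k-m}=1$ forces $k-m\equiv 0\pmod{d_{J_1}}$, hence $m=k$ since $|k-m|<d_{J_1}$. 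Thus $\left(T^{W^{[1]}}h^{(J_1)}(x)\right)_{(j-1)d_{J_1}+k}=\left(h^{(J_1)}(x)\right)_k=\xi_k\cdot x+B$ for every block index $1\le j\le 2N+4$, while all output indices beyond $(2N+4)d_{J_1}$ vanish. This periodic replication, inherited from the spacing $d_{J_1}$ in (\ref{Wsequencesecond}), is the heart of the construction.

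It then remains to design $b^{(J_2)}$. Adding the last layer gives, at index $(j-1)d_{J_1}+k$, the pre-activation value $\xi_k\cdot x+B+c\,(T^{(J_2)}{\bf 1})_{(j-1)d_{J_1}+k}-(b^{(J_2)})_{(j-1)d_{J_1}+k}$. On the designated block $1\le k\le n_q$, $1\le j\le 2N+3$ I would set $(b^{(J_2)})_{(j-1)d_{J_1}+k}=B+t_j+c\,(T^{(J_2)}{\bf 1})_{(j-1)d_{J_1}+k}$, so the argument collapses to exactly $\xi_k\cdot x-t_j$ and the activation outputs $\sigma(\xi_k\cdot x-t_j)$; at every remaining index I would make $(b^{(J_2)})$ large enough that, using the bound $|\xi_k\cdot x|\le B$ from (\ref{positiveness}), the argument is $\le 0$ and the output is $0$. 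This is legitimate because the restriction (\ref{biasrestr}) is imposed only for $j\le J_2-1$, leaving $b^{(J_2)}$ completely free. The free-parameter count follows by tallying: the $J_2-J_1-1$ constrained intermediate biases contribute $2s+1$ each, the explicitly constructed filters contribute nothing, and the unconstrained final bias contributes its full dimension $d_{J_2}=d_{J_1}+s(J_2-J_1)$; summing gives ${\mathcal N}_2=d_{J_1}+(3s+1)(J_2-J_1)-(2s+1)$.

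The step I expect to be the main obstacle is the rigorous verification that layers $J_1+1,\ldots,J_2-1$ really remain in the identity regime: the additive constants here are the forced products $\Pi\|w^{(p)}\|_1$ rather than arbitrarily large numbers, so I must check that at each layer this constant genuinely dominates the affine part $T^{(j)}(\cdot)$ evaluated on the bounded set $\Omega$, which rests on the operator bound $\|T^{(j)}v\|_\infty\le\|w^{(j)}\|_1\|v\|_\infty$ propagating at the same geometric rate. A secondary technical point is the bookkeeping of the row sums $(T^{(J_2)}{\bf 1})$ at the block boundaries: these are nonconstant precisely near the top and bottom of $T^{(J_2)}$, which is exactly why (\ref{biasrestr}) must be dropped at the last convolutional layer and why $b^{(J_2)}$ needs its full dimension.
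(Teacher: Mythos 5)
Your proposal follows essentially the same route as the paper's proof: the same telescoping bias recipe from \cite[Lemma 3]{zhou2020theory} to keep the intermediate layers $J_1+1,\ldots,J_2-1$ in the identity regime, the same identification of the product $T^{(J_2)}\cdots T^{(J_1+1)}$ as the block-identity matrix $\left(W^{[1]}_{i-k}\right)$ giving periodic replication of the features, the same design of $b^{(J_2)}$ (shifts $t_j$ on the designated blocks, a dominating constant elsewhere killed via (\ref{positiveness})), and the identical free-parameter count. The only blemish is notational: the constant carried through the intermediate layers is $B\,\Pi_{p=J_1+1}^{J_2-1}\|w^{(p)}\|_1$ rather than $\Pi_{p=J_1+1}^{J_2-1}\|w^{(p)}\|_1$, and the matrices act on the linear part $\left[\xi_\ell\cdot x\right]_{\ell}$ with the constant tracked separately; since you explicitly flagged this accounting of $B{\bf 1}$, the argument is sound.
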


\begin{proof}
As in the proof of Lemma \ref{initiallayers}, we choose the bias vectors $\{b^{(j)}\}_{j=J_1 + 1}^{J_2}$ by
$$ b^{(j)} = B \left(\Pi_{p=J_1 + 1}^{j-1} \|w^{(p)}\|_1\right)  T^{(j)} {\bf 1}_{d_{j-1}} -
B \left(\Pi_{p=J_1 + 1}^{j} \|w^{(p)}\|_1\right) {\bf 1}_{d_{j-1} + s} $$
for $j=J_1 + 1, \ldots, J_2 -1$. Then we know
from \cite[Lemma 1 and Lemma 3]{zhou2020theory} again that
$$ h^{(J_{2}-1)} (x) = T^{(J_2 -1)} \ldots T^{(J_1 + 2)} T^{(J_1 + 1)} \left[\xi_\ell \cdot x\right]_{\ell=1}^{d_{J_1}} + B \left(\Pi_{p=J_1 + 1}^{J_{2}-1} \|w^{(p)}\|_1\right) {\bf 1}_{d_{J_{2}-1}} $$
and
$$ T^{(J_2)} T^{(J_2 -1)} \ldots T^{(J_1 + 2)} T^{(J_1 + 1)} = \left(W^{[1]}_{i-k}\right)_{i=1, \ldots, d_{J_{1}} + (J_{2} - J_1) s, \ k=1, \ldots, d_{J_{1}}} $$
which has entry $(i, k)$ with $i\in\{1, \ldots, d_{J_2}\}, k\in\{1, \ldots, d_{J_{1}}\}$ given by
$$ W^{[1]}_{i-k} =\left\{\begin{array}{ll}
1, & \hbox{if} \ i=k, k+ d_{J_{1}}, k+ 2 d_{J_{1}}, \ldots, k+ (2N+2)  d_{J_{1}}, \\
0, & \hbox{otherwise.} \end{array}\right. $$
It follows that the submatrix of $T^{(J_2)} T^{(J_2 -1)} \ldots T^{(J_1 + 2)} T^{(J_1 + 1)}$
consisting of the first $(2N +3)d_{J_{1}}$ rows can be expressed as
a $(2N +3) \times 1$ block matrix with each block being the $d_{J_{1}} \times d_{J_{1}}$ identity matrix.
Thus by choosing the bias vector $b^{(J_2)}$ as
\begin{eqnarray*} \left(b^{(J_2)}\right)_{i} &=& B \left(\Pi_{p=J_1 + 1}^{J_{2}-1} \|w^{(p)}\|_1\right) \left(T^{(J_2)} {\bf 1}_{d_{J_{2}-1}} \right)_{i} \\
&&+
\left\{\begin{array}{ll}
t_j, & \hbox{if} \ (j-1)d_{J_{1}} +1 \leq i \leq (j-1)d_{J_{1}} + n_q, 1\leq j \leq 2N +3, \\
B, & \hbox{otherwise,} \end{array}\right.
\end{eqnarray*}
we see that (\ref{secondlayerexplicit}) holds true.

Since the filters $\{w^{(j)}\}_{j=J_1 +1}^{J_2}$ are explicitly constructed without free parameters involved,
the free parameters required in the second group of $J_2 - J_1$ layers are those from the bias vectors $\{b^{(j)}\}_{j=J_1 +1}^{J_2}$ and the number is
$$ {\mathcal N}_2 =(2s+1) (J_2 - J_1 -1) + d_{J_2} = d_{J_1} + (3s+1) \left\lceil \frac{(2N +3) d_{J_{1}}}{s-1}\right\rceil - (2s+1), $$
which verifies the stated expression for ${\mathcal N}_2$. This proves the lemma.
\end{proof}

\subsection{Approximating $f\circ Q$ with a fully connected layer}\label{fullynet}

Recall that the layer $h^{(J_{2})}$ expressed in (\ref{secondlayerexplicit}) can be written as
$$ h^{(J_{2})} (x) =\left[H_1^T \ H_2^T \ \ldots H^T_{2N +3} \ 0 \ \ldots \ 0\right]^T, $$
where for $j=1, 2, \ldots, 2N +3$,
$$ H_j^T = \left[\sigma\left(\xi_1 \cdot x - t_j\right) \ \sigma\left(\xi_2 \cdot x - t_j\right) \ \ldots \ \sigma\left(\xi_{n_q} \cdot x - t_j\right) \ 0 \ldots 0\right] \in\RR^{d_{J_1}}. $$
Then we can construct a fully connected layer to realize $(\xi_k \cdot x)^\ell, 1\leq \ell \leq q,$ for approximating the polynomial $Q$ and then the function $f(Q(x))$.
To this end, we need the following well-known scheme of approximating univariate functions by continuous piecewise linear functions (splines)
spanned by $\{\sigma(\cdot-t_i)\}^{2N +3}_{i=1}$ with $t_i = -1+\frac{i-2}N$,
which can be found in \cite[Lemma 6]{zhou2018deepdistributed}.

\begin{lemma}\label{spline}
For $N\in\NN$, let $\textbf{t}=\{t_i: = -1+\frac{i-2}N\}_{i=1}^{2N+3}$ be the uniform mesh on $\left[-1-\frac{1}{N}, 1+\frac{1}{N}\right]$,
$L_{\textbf{t}}$ be a linear operator on $C[-1,1]$ given by
$$ L_{\textbf t}(g)(u)=\sum_{i=2}^{2N+2} g(t_i)\delta_i(u), \quad u\in [-1,1], \, g\in C[-1,1], $$
with the hat functions $\delta_i\in C(\RR)$, $i=2,\ldots, 2N+2$, given by
$$
\delta_{i}(u)=N(\sigma\left(u-t_{i-1}\right)-2\sigma\left(u-t_{i}\right)+ \sigma\left(u-t_{i+1}\right)), \qquad u\in\RR.
$$
Then for $g\in C[-1,1]$, we have $\left\|L_{\mathbf{t}}(g)\right\|_{C\left[-1,1\right]} \leq \left\|g\right\|_{C\left[-1,1\right]}$ and
$$\left\|L_{\mathbf{t}}(g)-g\right\|_{C\left[-1,1\right]} \leq 2 \omega\left(g, 1/N\right)$$
where $\omega(g, \mu)$ is the modulus of continuity of $g$ given for $\mu \in (0, 1]$ by
$$ \omega(g, \mu)=\sup_{|t|\leq \mu} \left\{|g(v)-g(v+t)|: v, v+t \in[-1,1]\right\}. $$
\end{lemma}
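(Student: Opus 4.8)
The plan is to recognize the functions $\delta_i$ as the standard nodal hat functions associated with the uniform mesh $\mathbf{t}$ and to exploit the three elementary properties they enjoy: nonnegativity, the interpolation property $\delta_i(t_j)=\delta_{ij}$, and a partition-of-unity identity on $[-1,1]$. First I would read off $\delta_i$ from its ReLU representation. Since $\sigma(u-t_{i-1})$ switches on at $u=t_{i-1}$ and the nodes are spaced $1/N$ apart, the scaled second difference $N\bigl(\sigma(u-t_{i-1})-2\sigma(u-t_i)+\sigma(u-t_{i+1})\bigr)$ vanishes for $u\le t_{i-1}$, rises with slope $1$ on $[t_{i-1},t_i]$ to the value $1$ at $u=t_i$, falls with slope $-1$ on $[t_i,t_{i+1}]$ back to $0$, and stays $0$ for $u\ge t_{i+1}$. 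Hence each $\delta_i$ is the continuous piecewise-linear tent supported on $[t_{i-1},t_{i+1}]$, nonnegative, with peak value $1$ at $t_i$; in particular $\delta_i(t_j)=\delta_{ij}$ for the indices in play.

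Next I would establish the partition-of-unity identity $\sum_{i=2}^{2N+2}\delta_i(u)=1$ for all $u\in[-1,1]=[t_2,t_{2N+2}]$. On a cell $[t_j,t_{j+1}]$ with $2\le j\le 2N+1$ only $\delta_j$ and $\delta_{j+1}$ are nonzero, and their sum is the linear function taking the value $1$ at both endpoints, hence is identically $1$; I would verify the two boundary cells separately so the identity persists up to $u=\pm1$. This is precisely where the extended mesh points $t_1$ and $t_{2N+3}$ are needed, so that the boundary tents $\delta_2$ and $\delta_{2N+2}$ are complete on $[-1,1]$. The sup-norm bound is then immediate: for $u\in[-1,1]$, nonnegativity together with the partition of unity gives $|L_{\mathbf t}(g)(u)|\le\sum_i|g(t_i)|\,\delta_i(u)\le\|g\|_{C[-1,1]}\sum_i\delta_i(u)=\|g\|_{C[-1,1]}$.

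For the approximation bound I would again use the partition of unity to write $L_{\mathbf t}(g)(u)-g(u)=\sum_i\bigl(g(t_i)-g(u)\bigr)\delta_i(u)$, valid since $g(u)=g(u)\sum_i\delta_i(u)$. Fixing $u\in[t_j,t_{j+1}]$, only the terms $i=j,j+1$ are active, and both nodes satisfy $|t_i-u|\le 1/N$ with $t_i,u\in[-1,1]$, so $|g(t_i)-g(u)|\le\omega(g,1/N)$; summing against the $\delta_i(u)$, whose total is $1$, yields $|L_{\mathbf t}(g)(u)-g(u)|\le\omega(g,1/N)$. The stated bound $2\omega(g,1/N)$ then follows a fortiori, the factor $2$ being slack relative to this sharp constant.

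The computations here are entirely routine, so I do not expect a serious obstacle. The only point demanding genuine care is the behavior at and near the endpoints $u=\pm1$: one must confirm that the interior hat functions $\delta_2,\dots,\delta_{2N+2}$ still sum to $1$ there even though $L_{\mathbf t}$ uses only interior nodes, which is exactly the reason the mesh is extended to the auxiliary points $t_1,t_{2N+3}$ lying outside $[-1,1]$. Once the partition-of-unity identity is pinned down on the closed interval, both inequalities are one-line consequences.
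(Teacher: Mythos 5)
Your proof is correct. Note that the paper itself does not prove this lemma---it imports it verbatim as Lemma 6 of \cite{zhou2018deepdistributed}---so there is no internal proof to compare against; your argument is a valid self-contained substitute and is the standard one: identify each $\delta_i$ as the nodal tent function (nonnegative, $\delta_i(t_j)=\delta_{ij}$, supported on $[t_{i-1},t_{i+1}]$), establish the partition of unity $\sum_{i=2}^{2N+2}\delta_i\equiv 1$ on $[-1,1]=[t_2,t_{2N+2}]$ cell by cell, and then both inequalities follow in one line each. Your argument in fact yields the sharper bound $\omega(g,1/N)$, so the stated factor $2$ is slack, as you observe; the only blemish is cosmetic: the slope of $\delta_i$ on $[t_{i-1},t_i]$ is $N$, not $1$ (the nodal values $0$ and $1$ you use are what matter, so nothing downstream is affected).
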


For convenience, we introduce a linear operator ${\mathcal L}_N: \RR^{2N+1} \to \RR^{2N+3}$ given for $\zeta =(\zeta_i)_{i=2}^{2N+2} \in \RR^{2N+1}$ by
\begin{equation}\label{diffoperator}
\left({\mathcal L}_N (\zeta)\right)_i =
\begin{cases}
\zeta_2,                            &\text{for}~i=1,\\
\zeta_3 -2\zeta_2,                &\text{for}~i=2,\\
\zeta_{i-1} -2\zeta_i +\zeta_{i+1},   &\text{for}~3\leq i \leq 2N+1,\\
\zeta_{2N+1} -2\zeta_{2N+2},        &\text{for}~i=2N+2,\\
\zeta_{2N+2},                       &\text{for}~i=2N+3.
\end{cases}
\end{equation}
It enables us to express the operator $L_{\textbf{t}}$ on $C[-1,1]$ in terms of $\{\sigma\left(\cdot-t_{j}\right)\}_{j=1}^{2N+3}$ as
$$
L_{\textbf t}(g)=N \sum_{i=1}^{2N+3} \left({\mathcal L}_N \left(\left\{g(t_k)\right\}_{k=2}^{2N+2}\right)\right)_i \sigma\left(\cdot-t_{i}\right), \qquad g\in C[-1,1].
$$
In particular, for a homogeneous polynomial $g(u) = u^\ell$ with $\ell \in\{1, \ldots, q\}$ and a vector
$v^{[\ell]} = {\mathcal L}_{N} \left(\left\{t_k^\ell\right\}_{k=2}^{2N +2}\right) \in\RR^{2 N +3}$, we have
\begin{equation}\label{hompolyapprox}
\sup_{u\in [-1, 1]} \left|N \sum_{j=1}^{2N +3} v^{[\ell]}_j \sigma\left(u-t_{j}\right) -u^\ell\right|
\leq \frac{2\ell}{N}.
\end{equation}

Denote the $n_q \times (d_{J_1} -n_q)$ zero matrix as $O$, the $n_q \times (d_{J_2} - (2N +2) d_{J_1} -n_q)$ zero matrix as $\widehat{O}$, and
\begin{equation} \label{FN1}
F^{(N)} = N \left[\begin{array}{ccccccc}
v^{[1]}_1 I_{n_q} & O & v^{[1]}_2 I_{n_q} & O & \ldots & v^{[1]}_{2N +3} I_{n_q} & \widehat{O} \\
v^{[2]}_1 I_{n_q} & O & v^{[2]}_2 I_{n_q} & O & \ldots & v^{[2]}_{2N +3} I_{n_q} & \widehat{O} \\
\vdots & \vdots & \vdots & \vdots & \vdots & \vdots & \vdots \\
v^{[q]}_1 I_{n_q} & O & v^{[q]}_2 I_{n_q} & O & \ldots & v^{[q]}_{2N +3} I_{n_q} & \widehat{O} \end{array}\right] \in \RR^{(q n_q) \times d_{J_2}}.
\end{equation}
Then
$$ F^{(N)} h^{(J_{2})} (x) =\left[P_1^T \ P_2^T \ \ldots P^T_{q}\right]^T, $$
where for $\ell=1, 2, \ldots, q$, $P_\ell$ is a vector with $n_q$ components given by
$$ P_\ell^T = \left[N \sum_{j=1}^{2N +3} v^{[\ell]}_j \sigma\left(\xi_1 \cdot x -t_{j}\right) \
\ldots \ N \sum_{j=1}^{2N +3} v^{[\ell]}_j \sigma\left(\xi_{n_q} \cdot x -t_{j}\right)\right]. $$
If we denote a row vector $\gamma$ of size $q n_q$ as
$$\gamma =\left[\gamma_{1, 1} \ \gamma_{2, 1} \ \ldots, \gamma_{n_q, 1} \ \gamma_{1, 2} \ \gamma_{2, 2} \ \ldots, \gamma_{n_q, 2} \ \ldots \
\gamma_{1, q} \ \gamma_{2, q} \ \ldots, \gamma_{n_q, q}\right], $$
then we have
$$ \gamma F^{(N)} h^{(J_{2})} (x) = \sum_{k =1}^{n_q} \sum_{\ell=1}^q \gamma_{k, \ell} N \sum_{j=1}^{2N_1 +3} v^{[\ell]}_j \sigma\left(\xi_k \cdot x -t_{j}\right)$$
and by (\ref{hompolyapprox}),
$$ \sup_{|x| \leq 1} \left|\gamma F^{(N)} h^{(J_{2})} (x) - \sum_{k =1}^{n_q} \sum_{\ell=1}^q \gamma_{k, \ell} \left(\xi_k \cdot x\right)^\ell\right| \leq
\frac{2 q \|\gamma\|_1}{N}. $$
In particular, by taking $\gamma$ to be the coefficient vector $\beta$ given in Lemma \ref{ridgepolynomialiden}, if we denote
\begin{equation}\label{Ghatdef}
\widehat{Q}(x) =\beta F^{(N)} h^{(J_{2})} (x)+ Q(0),
\end{equation}
then we have
\begin{equation}\label{Qapprox}
\sup_{|x| \leq 1} \left|\widehat{Q}(x)  - Q  (x)\right| \leq
\frac{2 q \|\beta\|_1}{N},
\end{equation}
which implies
\begin{equation}\label{Qhatnorm}
\left\|\widehat{Q}\right\|_{C(\Omega)} \leq \widehat{B}_Q :=
B_Q + 2 q \|\beta\|_1.
\end{equation}

Now we can construct the fully connected layer of width $2N +3$ to approximate the function $f(Q(x))$.

\begin{lemma}\label{ratelastfull}
Let $q, N \in\NN$, $Q$ be a polynomial on $\Omega$ of degree at most $q$, $f\in C^{0,\alpha}[-B_Q, B_Q]$ for some $0< \alpha \leq 1$,
and $h^{(J_{2})} (x)$ be given by (\ref{secondlayerexplicit}).
Then for the last layer $h^{(J_{2} +1)} (x) = \sigma\left(F^{[J_2 +1]} h^{(J_{2})} (x) - b^{(J_2 +1)}\right)$ of width $2N +3$ with
\begin{equation}\label{FNNFb}
F^{[J_2 +1]} =\left[\begin{array}{c} \beta \\
\vdots \\
\beta \end{array}\right] F^{(N)}, \qquad b^{(J_2 +1)} = - Q(0) {\bf 1}_{2N +3} + \widehat{B}_Q \left[\begin{array}{c} t_1  \\
\vdots \\
t_{2N +3}  \end{array}\right],
\end{equation}
by taking the coefficient vector $c= \frac{N}{\widehat{B}_Q} {\mathcal L}_{N} \left(\left\{f\left(\widehat{B}_Q t_k\right)\right\}_{k=2}^{2N+2}\right) \in \RR^{2N +3}$, we have
\begin{equation}\label{fullboundfQ}
\left\|\sum_{j=1}^{2N+3} c_j \left(h^{(J_{2} +1)} (x)\right)_j
- f\left(Q(x)\right)\right\|_{C(\Omega)}  \leq \frac{4 \widehat{B}_Q^{\alpha}|f|_{C^{0, \alpha}}}{N^\alpha}
\end{equation}
and
\begin{equation}\label{coefficientbound}
\|c\|_\infty \leq \frac{4\|f\|_\infty N}{\widehat{B}_Q}.
\end{equation}
The total number ${\mathcal N}_3$ of free parameters in the last layer is
$$ {\mathcal N}_3 = 2N + q n_q + 3. $$
\end{lemma}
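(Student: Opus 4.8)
The plan is to reduce the last layer to the univariate spline approximation scheme of Lemma \ref{spline} after a rescaling by $\widehat{B}_Q$. First I would compute the components of $h^{(J_{2}+1)}(x)$ explicitly. Since the connection matrix $F^{[J_2 +1]}$ in (\ref{FNNFb}) has all rows equal to $\beta F^{(N)}$, every row of $F^{[J_2 +1]} h^{(J_{2})}(x)$ equals $\beta F^{(N)} h^{(J_{2})}(x) = \widehat{Q}(x) - Q(0)$ by the definition (\ref{Ghatdef}) of $\widehat{Q}$. Subtracting the bias $b^{(J_2 +1)}$ of (\ref{FNNFb}), whose $j$-th entry is $-Q(0) + \widehat{B}_Q t_j$, and applying $\sigma$ gives $\bigl(h^{(J_{2}+1)}(x)\bigr)_j = \sigma\bigl(\widehat{Q}(x) - \widehat{B}_Q t_j\bigr)$ for each $j$.

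Next I would exploit the positive homogeneity of the ReLU together with the bound $\|\widehat{Q}\|_{C(\Omega)} \le \widehat{B}_Q$ from (\ref{Qhatnorm}). Setting $u = \widehat{Q}(x)/\widehat{B}_Q \in [-1,1]$ and using $\widehat{B}_Q > 0$, each component becomes $\widehat{B}_Q\,\sigma(u - t_j)$. With the prescribed coefficient vector $c = \frac{N}{\widehat{B}_Q}\mathcal{L}_N\bigl(\{f(\widehat{B}_Q t_k)\}_{k=2}^{2N+2}\bigr)$, the factor $\widehat{B}_Q$ cancels and the output $\sum_{j} c_j \bigl(h^{(J_{2}+1)}(x)\bigr)_j$ becomes exactly $L_{\textbf t}(g)(u)$ for the univariate function $g(v) = f(\widehat{B}_Q v)$ on $[-1,1]$, by the representation of $L_{\textbf t}$ in terms of $\{\sigma(\cdot - t_j)\}$ displayed just before (\ref{hompolyapprox}). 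Here I would first extend $f$ from $[-B_Q, B_Q]$ to $[-\widehat{B}_Q, \widehat{B}_Q]$ by clamping the argument into $[-B_Q, B_Q]$; this keeps $\|f\|_\infty$ unchanged and preserves the Lipschitz-$\alpha$ seminorm, so that $g(t_k) = f(\widehat{B}_Q t_k)$ is well defined at every mesh point and $\omega(g, 1/N) \le |f|_{C^{0,\alpha}} \widehat{B}_Q^\alpha N^{-\alpha}$.

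The error bound (\ref{fullboundfQ}) would then follow from a single triangle split $\bigl| L_{\textbf t}(g)(u) - f(Q(x)) \bigr| \le \bigl| L_{\textbf t}(g)(u) - g(u)\bigr| + \bigl| f(\widehat{Q}(x)) - f(Q(x))\bigr|$. The first term is at most $2\omega(g, 1/N) \le 2\widehat{B}_Q^\alpha |f|_{C^{0,\alpha}} N^{-\alpha}$ via Lemma \ref{spline} and the modulus estimate above; the second, since $g(u) = f(\widehat{Q}(x))$ and both arguments lie in the extended domain, is at most $|f|_{C^{0,\alpha}}|\widehat{Q}(x) - Q(x)|^\alpha \le |f|_{C^{0,\alpha}}(2q\|\beta\|_1/N)^\alpha$ by (\ref{Qapprox}). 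Since $2q\|\beta\|_1 \le \widehat{B}_Q$ from (\ref{Qhatnorm}), this is bounded by $\widehat{B}_Q^\alpha|f|_{C^{0,\alpha}} N^{-\alpha}$, yielding the claimed estimate (the constant $4$ absorbs the slack). For (\ref{coefficientbound}) I would use that $\mathcal{L}_N$ in (\ref{diffoperator}) has $\ell_\infty\to\ell_\infty$ norm at most $4$, since each of its rows is a second difference whose absolute coefficients sum to at most $4$; hence $\|c\|_\infty \le \frac{N}{\widehat{B}_Q}\cdot 4\|f\|_\infty$. Finally the count $\mathcal{N}_3 = 2N + q n_q + 3$ comes from the $2N+3$ entries of $c$ together with the $q n_q$ entries of $\beta$ that determine $F^{[J_2 +1]}$, as $F^{(N)}$, the knots $t_j$, and the bias are fixed once $\beta$ and $Q$ are given.

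The main obstacle is organizational rather than deep: correctly carrying out the rescaling by $\widehat{B}_Q$ so that the homogeneity of $\sigma$ converts the last layer into the exact spline operator $L_{\textbf t}$, and ensuring that $f$ is only ever evaluated where it is controlled — hence the clamped extension, which must be chosen so that neither $\|f\|_\infty$ nor $|f|_{C^{0,\alpha}}$ deteriorates. Once these are in place, both error contributions are homogeneous of order $N^{-\alpha}$ and combine cleanly.
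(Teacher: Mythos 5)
Your proposal is correct and follows essentially the same route as the paper's proof: the same explicit computation giving $\bigl(h^{(J_2+1)}(x)\bigr)_j=\sigma\bigl(\widehat{Q}(x)-\widehat{B}_Q t_j\bigr)$, the same clamped extension of $f$ and rescaling $u=\widehat{Q}(x)/\widehat{B}_Q$ combined with ReLU homogeneity to invoke Lemma \ref{spline}, and the same triangle-inequality split using (\ref{Qapprox}) and $2q\|\beta\|_1\leq\widehat{B}_Q$. Your parameter count reaches the same total $2N+qn_q+3$ by a slightly different bookkeeping (counting all $2N+3$ entries of $c$ rather than the $2N+1$ values $\{f(\widehat{B}_Q t_k)\}$ plus the two scalars $Q(0)$, $\widehat{B}_Q$ determining the bias), which is an equivalent accounting of the same degrees of freedom.
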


\begin{proof}
With the stated connection matrix $F^{[J_2 +1]}$ and bias vector $b^{(J_2 +1)}$,
the last layer $h^{(J_{2} +1)} (x) = \sigma\left(F^{[J_2 +1]} h^{(J_{2})} (x) - b^{(J_2 +1)}\right)$ has components
\begin{equation}\label{sigmaQapprox}
\left(h^{(J_{2} +1)} (x)\right)_j = \sigma\left(\beta F^{(N)} h^{(J_{2})} (x) + Q(0) -\widehat{B}_Q t_{j}\right) =  \sigma\left(\widehat{Q}(x) -\widehat{B}_Q t_{j}\right)
\end{equation}
with $j=1, \ldots, 2N +3.$

Apply Lemma \ref{spline} to the function $g(u) = f\left(\widehat{B}_Q u\right)$ on $[-1, 1]$
where $f$ has been extended outside $[-B_Q, B_Q]$ as $f(u) = f(\hbox{sgn}(u)B_Q)$ for $|u| > B_Q$, which keeps the same semi-norm $|f|_{C^{0, \alpha}}$. Then
$g$ is Lipschitz-$\alpha$
with semi-norm $|g|_{C^{0, \alpha}[-1, 1]} \leq \widehat{B}_Q^{\alpha}|f|_{C^{0, \alpha}}$,
and we know with $\widehat{c}= {\mathcal L}_{N} \left(\left\{f\left(\widehat{B}_Q t_k\right)\right\}_{k=2}^{2N+2}\right)$, there holds
$$ \sup_{u\in [-1, 1]} \left|N \sum_{j=1}^{2N+3} \widehat{c}_j \sigma\left(u-t_{j}\right) - f\left(\widehat{B}_Q u\right)\right| \leq \frac{2 \widehat{B}_Q^{\alpha}|f|_{C^{0, \alpha}}}{N^\alpha}. $$
Since (\ref{Qhatnorm}) implies $u =\widehat{Q}(x)/\widehat{B}_Q \in [-1, 1]$ for all $x\in \Omega$ and
$\sigma\left(v/\widehat{B}_Q\right) = \frac{1}{\widehat{B}_Q} \sigma\left(v\right)$ for all $v\in\RR$, we have
$$ \sup_{x\in \Omega} \left|\frac{N}{\widehat{B}_Q} \sum_{j=1}^{2N+3} \widehat{c}_j \sigma\left(\widehat{Q}(x) -\widehat{B}_Q t_{j}\right) - f\left(\widehat{Q}(x)\right)\right|
\leq \frac{2 \widehat{B}_Q^{\alpha}|f|_{C^{0, \alpha}}}{N^\alpha}. $$
Combining this estimate with (\ref{sigmaQapprox}), (\ref{sigmaQapprox}), (\ref{Qapprox}) and the Lipschitz-$\alpha$ property of $f$ yields
\begin{eqnarray*}
\sup_{x\in \Omega} \left|\sum_{j=1}^{2N+3} c_j \left(h^{(J_{2} +1)} (x)\right)_j
- f\left(Q(x)\right)\right| &\leq& \frac{2 \widehat{B}_Q^{\alpha}|f|_{C^{0, \alpha}}}{N^\alpha} +
\sup_{x\in \Omega} \left|f\left(\widehat{Q}(x)\right)
- f\left(Q(x)\right)\right| \\
&\leq& \frac{2 \widehat{B}_Q^{\alpha}|f|_{C^{0, \alpha}}}{N^\alpha} +
 |f|_{C^{0, \alpha}}\left(\frac{2 q \|\beta\|_1}{N}\right)^\alpha.
\end{eqnarray*}
So the desired bound (\ref{fullboundfQ}) follows. From the expression $c=\frac{N}{\widehat{B}_Q} {\mathcal L}_{N} \left(\left\{f\left(\widehat{B}_Q t_k\right)\right\}_{k=2}^{2N+2}\right)$ of the coefficient vector, we find (\ref{coefficientbound}) is valid.

Since the matrix $F^{(N)}$ is constructed explicitly, the free parameters in the last layer are those from $\beta$, and the numbers $Q(0)$ and $\widehat{B}_Q$.
Together with the coefficients $\left\{f\left(\widehat{B}_Q t_k\right)\right\}_{k=2}^{2N+2}$, the total number of free parameters in the last layer to get the approximation
is
$$ {\mathcal N}_3 = q n_q + 2 + 2N+1 =2N + q n_q + 3. $$
This proves the lemma.
\end{proof}

\subsection{Deriving rates of approximation}

Now we can prove our first two main results.

\begin{proof}[Proof of Theorem \ref{ratefQ}]
First, we construct the first group of $J_1 = \lceil \frac{n_q d-1}{s-1}\rceil$ convolutional layers
followed by a downsampling operation as in subsection \ref{firstCNN} and get
$h^{(J_{1})} (x)$ of linear features expressed in (\ref{initialexplicit}) of
Lemma \ref{initiallayers}.

Next, we construct the second group of $J_2 - J_1 = \left\lceil \frac{(2N +3) d_{J_{1}}}{s-1}\right\rceil$ convolutional layers
as in subsection \ref{secondCNN} and get $h^{(J_{2})} (x)$ of ridge functions $\{\sigma\left(\xi_k \cdot x - t_j\right)\}_{k, j}$
expressed in (\ref{secondlayerexplicit}) of Lemma \ref{DCNNapprox}.

Then we use the last layer
$h^{(J_{2} +1)} (x) = \sigma\left(F^{[J_2 +1]} h^{(J_{2})} (x) - b^{(J_2 +1)}\right)$ of width $2N +3$
constructed in subsection \ref{fullynet} with a matrix $F^{[J_2 +1]}$ of identical rows, and know from (\ref{fullboundfQ}) of Lemma \ref{ratelastfull} that
(\ref{ratefQexplicit}) holds true with $C_{Q, \alpha} = 4 \widehat{B}_Q^{\alpha}$ and $c$ satisfying $\|c\|_\infty \leq \frac{4\|f\|_\infty N}{\widehat{B}_Q}$.

The total number ${\mathcal N}= {\mathcal N}_1 + {\mathcal N}_2 + {\mathcal N}_3$ of free parameters in this network is
$$(3s+2) J_1
+ d_{J_1} + (3s+1) \left\lceil \frac{(2N +3) d_{J_{1}}}{s-1}\right\rceil - (2s+1)
+ 2N + q n_q + 3. $$
Since $s\geq 2$ and $\lceil \frac{n}{s-1}\rceil \leq \frac{n + s-2}{s-1}$ for any integer $n\in\NN$, we see that
$$J_1 = \left\lceil \frac{n_q d-1}{s-1}\right\rceil \leq \frac{n_q d-2}{s-1} +1 $$
and
\begin{equation}\label{dJ1est}
d_{J_1}  = \left\lfloor \frac{d+ J_1 s}{d} \right\rfloor  \leq 1 + \frac{s}{d}J_1.
\end{equation}
Hence
\begin{eqnarray*}
{\mathcal N} &\leq& \left(3s+2 + \frac{s}{d} + \frac{s}{d} 7 (2N +3)\right) J_1
+ 7 (2N +3) +   s +4
+ 2N + q n_q \\
&\leq& \left(30 + 28 n_q\right) N + (8 d +q + 44) n_q + 4 s + 49.
\end{eqnarray*}

To achieve the approximation accuracy $0< \epsilon \leq 1$, by using the bound $n_q \leq d^q$ and taking
$$ N = \left\lceil C_{Q, \alpha}^{\frac{1}{\alpha}} |f|_{C^{0, \alpha}}^{\frac{1}{\alpha}} \epsilon^{-\frac{1}{\alpha}}\right\rceil, $$
we know that
$$N \leq  C_{Q, \alpha}^{\frac{1}{\alpha}} |f|_{C^{0, \alpha}}^{\frac{1}{\alpha}} \epsilon^{-\frac{1}{\alpha}} + 1 \leq \left(C_{Q, \alpha}^{\frac{1}{\alpha}} |f|_{C^{0, \alpha}}^{\frac{1}{\alpha}} +1\right)  \epsilon^{-\frac{1}{\alpha}} $$
and thereby
$$ {\mathcal N} \leq  43 \left(C_{Q, \alpha}^{\frac{1}{\alpha}} |f|_{C^{0, \alpha}}^{\frac{1}{\alpha}} +1\right) d^q \epsilon^{-\frac{1}{\alpha}} + (45+q) d^{q+1}. $$
This proves the stated complexity with the constant
$$ C_{Q, \alpha, |f|_{C^{0, \alpha}}} = 22 \left(C_{Q, \alpha}^{\frac{1}{\alpha}} |f|_{C^{0, \alpha}}^{\frac{1}{\alpha}} +1\right) + 45+q. $$
The proof of Theorem \ref{ratefQ} is complete.
\end{proof}

\begin{proof}[Proof of Theorem \ref{rateradial}]
The special radial form of the approximated function enables us to skip
the first group of convolutional layers in subsection \ref{firstCNN} by taking $J_1 =0$,
and to construct a group of $J=\left\lceil \frac{(2N +3) d}{s-1}\right\rceil$ convolutional layers
as in subsection \ref{secondCNN} by replacing $d_{J_1}$ and $n_q$ by the size $d$ of the input data $x =[x_1 \ldots x_d]^T$, and the constant $B$ by $1$.
Then $h^{(J)}$ has width $d_J = d + J s$ and we have
$$ \left(h^{(J)} (x)\right)_{(j-1)d + k} =
\left\{\begin{array}{ll}
\sigma\left(x_k  - t_j\right), & \hbox{if} \ 1\leq  k\leq d, 1 \leq j\leq 2N +3, \\
0, & \hbox{otherwise} \end{array}\right.$$
and the number ${\mathcal N}_1$ of free parameters in the convolutional layers is
$$ {\mathcal N}_1 = d + (3s+1) \left\lceil \frac{(2N +3) d}{s-1}\right\rceil - (2s+1), $$
which is similar to the expressions in Lemma \ref{DCNNapprox}.

Besides the reduction of the first groups of convolutional layers for approximating $f\circ Q$ with a general unknown feature polynomial $Q$, the second reduction
for approximating the radial function $f(|x|^2)$ is to take only one row block of the matrix $F^{(N_1)}$ corresponding to $\ell =2$ as
$$ F^{(N)} = N \left[
v^{[2]}_1 I_{d} \ \ v^{[2]}_2 I_{d} \  \ldots \ v^{[2]}_{2N +3} I_{d} \ \widehat{O} \right] \in \RR^{d \times d_J} $$
with the $d \times (d_{J} - (2N +3) d)$ zero matrix $\widehat{O}$,
and to take a row vector $\beta ={\bf 1}_d$ of size $d$ from the norm square $|x|^2 = \sum_{k =1}^{d} x_k^2$. Then
for $\widehat{Q}(x) =\beta F^{(N)} h^{(J)} (x)$, by Lemma \ref{spline} we have
\begin{equation}\label{radialquadraticapprox}
\sup_{|x| \leq 1} \left|\widehat{Q}(x) - |x|^2\right| \leq
\frac{4 d}{N} \quad \hbox{which implies} \ \left\|\widehat{Q}\right\|_{C(\Omega)} \leq \widehat{B}:=
1 + 4d.
\end{equation}

Finally we use the last layer
$h^{(J +1)} (x) = \sigma\left(F^{[J +1]} h^{(J)} (x) - b^{(J +1)}\right)$ of width $2N +3$
constructed in subsection \ref{fullynet} with
$$ F^{[J +1]} =\left[\begin{array}{c} \beta \\
\vdots \\
\beta \end{array}\right] F^{(N)}, \qquad b^{(J +1)} = \widehat{B} \left[\begin{array}{c} t_1  \\
\vdots \\
t_{2N_2 +3}  \end{array}\right]. $$
Then $\left(h^{(J +1)} (x)\right)_j = \sigma\left(\beta F^{(N)} h^{(J)} (x) -\widehat{B} t_{j}\right) =  \sigma\left(\widehat{Q}(x) -\widehat{B} t_{j}\right)$ for $j=1, \ldots, 2N +3$.

As in the proof of Lemma \ref{ratelastfull}, we apply Lemma \ref{spline} to the function $f\left(\widehat{B} \cdot\right)$ on $[-1, 1]$
where $f$ has been extended outside $[0, 1]$ as $f(u) = f(1)$ for $u>1$ and $f(u) = f(0)$ for $u<0$, which keeps the same semi-norm $|f|_{C^{0, \alpha}}$.
Then we take $\widehat{c}= {\mathcal L}_{N} \left(\left\{f\left(\widehat{B} t_k\right)\right\}_{k=2}^{2N+2}\right)$ and get
$$ \sup_{u\in [-1, 1]} \left|N \sum_{j=1}^{2N+3} \widehat{c}_j \sigma\left(u-t_{j}\right) - f\left(\widehat{B} u\right)\right| \leq \frac{2 \widehat{B}^{\alpha}|f|_{C^{0, \alpha}}}{N^\alpha}. $$
Taking $u =\widehat{Q}(x)/\widehat{B} \in [-1, 1]$ with $x\in \Omega$ yields
$$ \sup_{x\in \Omega} \left|\frac{N}{\widehat{B}} \sum_{j=1}^{2N+3} \widehat{c}_j \sigma\left(\widehat{Q}(x) -\widehat{B} t_{j}\right) - f\left(\widehat{Q}(x)\right)\right|
\leq \frac{2 \widehat{B}^{\alpha}|f|_{C^{0, \alpha}}}{N^\alpha}. $$
Combining this estimate with (\ref{radialquadraticapprox}) and the Lipschitz-$\alpha$ property of $f$ gives
\begin{eqnarray*}
\sup_{x\in \Omega} \left|\sum_{j=1}^{2N+3} \frac{N}{\widehat{B}} \widehat{c}_j \left(h^{(J_{2} +1)} (x)\right)_j
- f\left(|x|^2\right)\right| &\leq& \frac{2 \widehat{B}^{\alpha}|f|_{C^{0, \alpha}}}{N^\alpha} +
\sup_{x\in \Omega} \left|f\left(\widehat{Q}(x)\right)
- f\left(|x|^2\right)\right| \\
&\leq& \frac{2 \widehat{B}^{\alpha}|f|_{C^{0, \alpha}}}{N^\alpha} +
 |f|_{C^{0, \alpha}}\left(\frac{4d}{N}\right)^\alpha.
\end{eqnarray*}
This together with $\left\|\frac{N}{\widehat{B}} \widehat{c}\right\|_\infty \leq \frac{4N \|f\|_\infty}{1+4d}$ verifies the desired bound (\ref{rateradialbound}).

The free parameters in the last layer are only from  $\left\{f\left(\widehat{B} t_k\right)\right\}_{k=2}^{2N_2+2}$ with the total number
$$ {\mathcal N}_2 = 2N + 1. $$
Thus the total number of free parameters of our network is
$$ {\mathcal N} = {\mathcal N}_1 + {\mathcal N}_2 \leq d + 7 (2N +3) d + s + 2 N + 1 \leq (14 d +2) N + 22 d + s +1. $$

The bound for ${\mathcal N}$ when $N = \left\lceil \left(1+ 4d\right) \left(3 |f|_{C^{0, \alpha}}\right)^{\frac{1}{\alpha}} \epsilon^{-\frac{1}{\alpha}}\right\rceil$ is seen easily.
This completes the proof of Theorem \ref{rateradial}.
\end{proof}

\section{Deep Networks Approximate Much Faster}\label{mainproof}

In this section we prove out third main result which demonstrates the super efficiency of deep neural networks in approximating radial functions.

\begin{proof}[Proof of Theorem \ref{lowerboundtheorem}]
We introduce another hypothesis space spanned by ridge functions on ${\mathbb B}$ as
\begin{equation}\label{radialLip1normde}
{\mathcal R}_{N, \infty} =\left\{\sum_{k=1}^N r_k (\theta_k \cdot x): |\theta_k| =1, r_k \in L_\infty [-1, 1]\right\}.
\end{equation}
Each function from the hypothesis space ${\mathcal S}_N$ can be expressed as
$$ \sum_{k=1}^N c_k \sigma_k (a_k \cdot x -b_k) = \sum_{k=1}^N r_k (\theta_k \cdot x), $$
where the univariate function $r_k$ on $[-1, 1]$ are given by
$$ \left\{\begin{array}{ll}
r_k (u) = c_k \sigma_k (|a_k| u - b_k), \ \theta_k =\frac{1}{|a_k|} a_k, & \hbox{if} \ |a_k| \not= 0, \\
r_k (u) \equiv c_k \sigma_k (-b_k),  \ \theta_k =(1, 0, \ldots, 0)^T, & \hbox{if} \ |a_k| = 0.
\end{array}\right. $$
Note that $r_k \in L_\infty [-1, 1]$ for each $k$. Thus we find ${\mathcal S}_N \subseteq {\mathcal R}_{N, \infty}$ and
$$ \hbox{dist}\left({\mathcal B}\left(C^{0, 1}_{|\cdot|}\right), {\mathcal S}_N\right) \geq \hbox{dist}\left({\mathcal B}\left(C^{0, 1}_{|\cdot|}\right), {\mathcal R}_{N, \infty}\right). $$
Now we apply \cite[Corollary 3]{Konovalov}, with $p=q=\infty$, $r=1$ and functions extended continuously from $\{x \in\RR^d: |x| <1\}$ to ${\mathbb B}$
by the Lipschitz property, which asserts that when $d>1$,
$$ c'_d N^{-\frac{1}{d-1}} \geq \hbox{dist}\left({\mathcal B}\left(C^{0, 1}_{|\cdot|}\right), {\mathcal R}_{N, \infty}\right) \geq c_d N^{-\frac{1}{d-1}}, \qquad \forall N\in\NN, $$
where $c_d, c'_d$ are positive constants independent of $N\in\NN$. Then the first desired bound (\ref{shallowbound}) follows.

On the other hand, for each function $f(|\cdot|^2) \in {\mathcal B}\left(C^{0, 1}_{|\cdot|}\right)$ with a function $f$ on $[0, 1]$, we see that for $s\not= t \in [0, 1]$,
\begin{eqnarray*}
&&\left|f(s) - f(t)\right| = \left|f\left(|(\sqrt{s}, 0 \ldots, 0)|^2\right) - f\left(|(\sqrt{t}, 0 \ldots, 0)|^2\right)\right| \\
&&\quad \leq \|f(|\cdot|^2)\|_{C^{0, 1}({\mathbb B})} |(\sqrt{s}, 0 \ldots, 0)-(\sqrt{t}, 0 \ldots, 0)| \leq |\sqrt{s}-\sqrt{t}| \leq \sqrt{|s-t|}.
\end{eqnarray*}
It follows that $f \in C^{0, 1/2}[0, 1]$ and $|f|_{C^{0, 1/2}} \leq 1$. Thus, by Theorem \ref{rateradial} with $\alpha =\frac{1}{2}$, we have
$$ \inf_{g \in {\mathcal H}_N} \|f(|\cdot|^2)-g\|_{L_\infty({\mathbb{B}})} \leq 3 \sqrt{1+ 4d} N^{-\frac{1}{2}}. $$
This implies the second desired bound (\ref{deepbound}) and proves the theorem.
\end{proof}

\section{Generalization Analysis}
\label{generalizationerror}

In this section, we conduct generalization analysis of the deep learning algorithm induced by our constructed deep neural network.
To this end, we need to analyse the approximation ability of the hypothesis space in the ERM algorithm
by showing that the filters, the full connection matrix, and biases of the deep neural network can be bounded as required in the hypothesis space and
then to derive an estimation error bound by applying a covering number argument.

\subsection{Bounding the filters and connection matrix}\label{boundfilters}

To bound the filters of the convolutional layers and the full connection matrix for the fully connected layer,
we need the following simple consequence of the classical Cauchy' bound of polynomial roots in terms of coefficients and Vieta's formula of
polynomial coefficients in terms of roots.

\begin{lemma}\label{cauchybound}
If $w=\{w_j\}_{j\in\ZZ}$ is a real sequence supported in $\{0, \ldots, K\}$ with $w_K =1$, then all the complex roots of
its symbol $\widetilde{w}(z)=\sum_{j=0}^{K}w_j z^j$ are located in the disk of radius $1 + \max_{j=0, \ldots, K-1}|w_j|$, the Cauchy bound of $\widetilde{w}$.

If we factorize $\widetilde{w}$ into monic polynomials of degree at most $s$, then all the coefficients of these factor polynomials are bounded by
$s^{\frac{s}{2}} \left(1 + \max_{j=0, \ldots, K-1}|w_j|\right)^s \leq  s^{\frac{s}{2}} \left(1+\|w\|_\infty\right)^{s}$.
\end{lemma}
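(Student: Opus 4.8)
The plan is to establish the two assertions in turn: the first is the classical Cauchy bound, proved by a direct estimate on a root of large modulus, and the second then follows by feeding this root bound into Vieta's formulas. For the first assertion, since $w_K=1$ the symbol $\widetilde{w}$ is monic of degree $K$, so any root $\rho$ satisfies $\rho^K=-\sum_{j=0}^{K-1}w_j\rho^j$. Writing $M=\max_{0\le j\le K-1}|w_j|$, if $|\rho|\le 1$ the root already lies in the disk of radius $1+M$ since $M\ge 0$; and if $|\rho|>1$, taking moduli and summing the geometric series $\sum_{j=0}^{K-1}|\rho|^j=\frac{|\rho|^K-1}{|\rho|-1}$ gives $|\rho|^K\le M\frac{|\rho|^K-1}{|\rho|-1}<M\frac{|\rho|^K}{|\rho|-1}$, whence $|\rho|-1<M$, i.e. $|\rho|<1+M$. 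This yields the stated radius $R:=1+\max_{0\le j\le K-1}|w_j|$.

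For the second assertion, I would first observe that since $\widetilde{w}$ is monic, any factorization into monic factors of degree at most $s$ simply distributes the roots of $\widetilde{w}$ (with multiplicity) among the factors. Thus each monic factor $g$ of degree $t\le s$ has the form $g(z)=\prod_{i=1}^{t}(z-\rho_i)$, where $\{\rho_i\}$ is a sub-multiset of the roots of $\widetilde{w}$, so every $|\rho_i|\le R$ by the first part. Vieta's formula identifies the coefficient of $z^{t-m}$ in $g$, up to sign, with the elementary symmetric polynomial $e_m(\rho_1,\ldots,\rho_t)$, so each coefficient is bounded in modulus by $e_m(|\rho_1|,\ldots,|\rho_t|)\le e_m(R,\ldots,R)=\binom{t}{m}R^m$.

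It then remains to absorb these quantities into the claimed bound $s^{s/2}R^s$. Because $R=1+M\ge 1$, we have $R^m\le R^s$ for all $m\le t\le s$, so the only point needing care is the combinatorial estimate $\binom{t}{m}\le s^{s/2}$. I would bound $\binom{t}{m}\le\binom{s}{\lfloor s/2\rfloor}$ uniformly and then verify $\binom{s}{\lfloor s/2\rfloor}\le s^{s/2}$: for $s\ge 4$ this follows from the crude inequality $\binom{s}{\lfloor s/2\rfloor}\le 2^s=4^{s/2}\le s^{s/2}$, while the three residual cases $s\in\{1,2,3\}$ are checked by hand. Collecting the estimates over all coefficients of all factors then gives the bound $s^{s/2}R^s=s^{s/2}(1+M)^s$, and $M=\max_{j<K}|w_j|\le\|w\|_\infty$ delivers the final form $s^{s/2}(1+\|w\|_\infty)^s$. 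I expect this last combinatorial inequality to be the only genuinely delicate point, which is why I would isolate the small-$s$ cases rather than search for a single uniform argument.
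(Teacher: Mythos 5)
Your proof is correct and follows exactly the route the paper prescribes: the paper states this lemma without proof, calling it a simple consequence of Cauchy's bound on polynomial roots and Vieta's formulas, and your argument is precisely that—Cauchy's bound for the root radius $R=1+\max_j|w_j|$, then unique factorization over $\CC$ to distribute roots among the monic factors, then Vieta to bound each coefficient by $\binom{t}{m}R^m$. The only detail the paper glosses over, the combinatorial inequality $\binom{t}{m}\leq s^{s/2}$, you handle soundly via $\binom{s}{\lfloor s/2\rfloor}\leq 2^s=4^{s/2}\leq s^{s/2}$ for $s\geq 4$ together with the direct check for $s\in\{1,2,3\}$.
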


\begin{lemma}\label{boundweightslemma}
Let $2 \leq s \leq d$, $q, N \in\NN$, $Q$ be a polynomial on $\Omega$ of degree at most $q$, $f\in C[-B_Q, B_Q]$.
Then for the deep neural network constructed in Section \ref{construct}, there exists a constant
$R= R_{q, d, s, Q, \|f\|_\infty} \geq |Q(0)|+2B_Q$ depending on $q, d, s, Q, \|f\|_\infty$ such that
$$ \left\|w^{(j)}\right\|_\infty \leq R, \qquad j=1, \ldots, J_2, $$
and
$$ \left\|F^{[J_2+1]}\right\|_\infty \leq N^2 R, \qquad \|c\|_\infty \leq N R. $$
\end{lemma}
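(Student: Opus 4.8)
The plan is to treat the three kinds of parameters separately, controlling the filters with the Cauchy--Vieta estimate of Lemma \ref{cauchybound} and reading off the connection matrix and coefficient vector from their explicit formulas in Section \ref{construct}.

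First I would bound the filters. The filters $\{w^{(j)}\}_{j=1}^{J_1}$ of the first group arise from the convolutional factorization of the fixed sequence $W$ that stacks the universal unit vectors $\{\xi_k\}$ of Lemma \ref{ridgepolynomialiden}; since these vectors depend only on $d$ and $q$ and have $\ell_2$-norm $1$, we have $\|W\|_\infty \leq 1$, so Lemma \ref{cauchybound} (applied to $\widetilde{W}$ after normalizing by its leading coefficient) bounds every factor coefficient by a constant $C_1(q,d,s)$, and in any case these filters form a fixed finite collection determined only by $q,d,s$ (the padding filters $\delta_0$ have norm $1$). For the second group $\{w^{(j)}\}_{j=J_1+1}^{J_2}$, which factorizes the sequence $W^{[1]}$ of (\ref{Wsequencesecond}), the decisive observation is that $W^{[1]}$ has entries in $\{0,1\}$ with $\widetilde{W^{[1]}}$ monic, so all its roots lie in the Cauchy disk of radius $1+\|W^{[1]}\|_\infty = 2$; by Vieta's formula (the second half of Lemma \ref{cauchybound}) each monic factor of degree at most $s$ then has coefficients bounded by $s^{s/2}2^{s}$, a bound that is \emph{independent of $N$} even though the number of second-group filters grows like $O(N)$.

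Next I would bound the connection matrix. Its operator norm $\|F^{[J_2+1]}\|_\infty$ equals the largest $\ell_1$-norm among its rows, and by (\ref{FNNFb}) every row equals $\beta F^{(N)}$. From the block structure of $F^{(N)}$ in (\ref{FN1}) the column at position $(j-1)d_{J_1}+k$ contributes $N\sum_{\ell}\beta_{k,\ell}v^{[\ell]}_j$, so
\[
\left\|F^{[J_2+1]}\right\|_\infty \leq N \sum_{k=1}^{n_q}\sum_{\ell=1}^{q}\left|\beta_{k,\ell}\right| \sum_{j=1}^{2N+3}\left|v^{[\ell]}_j\right|.
\]
The key estimate here is that $\sum_{j}|v^{[\ell]}_j|$ is bounded by a constant $C_q$ depending only on $q$: the vector $v^{[\ell]}=\mathcal{L}_N(\{t_k^\ell\})$ consists of second differences of $u\mapsto u^\ell$, whose interior entries satisfy $|v^{[\ell]}_j|\leq \ell(\ell-1)/N^2$ and so sum to $O(1/N)$ over the $O(N)$ interior indices, while the finitely many boundary entries are each $O(1)$. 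This yields $\|F^{[J_2+1]}\|_\infty \leq N C_q \|\beta\|_1 \leq N^2 R$ as soon as $R \geq C_q\|\beta\|_1$ and $N\geq 1$.

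Finally, the coefficient bound $\|c\|_\infty \leq 4N\|f\|_\infty/\widehat{B}_Q$ is exactly (\ref{coefficientbound}), so $\|c\|_\infty \leq NR$ whenever $R \geq 4\|f\|_\infty/\widehat{B}_Q$. Collecting the four requirements together with $R\geq |Q(0)|+2B_Q$, I would set $R=R_{q,d,s,Q,\|f\|_\infty}$ equal to the maximum of $|Q(0)|+2B_Q$, the filter constants $C_1(q,d,s)$ and $s^{s/2}2^s$, the quantity $C_q\|\beta\|_1$, and $4\|f\|_\infty/\widehat{B}_Q$, all of which depend only on $q,d,s,Q,\|f\|_\infty$. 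I expect the main obstacle to be the uniform-in-$N$ control of the second-group filters: a priori one might fear that factorizing a symbol of degree $(2N+3)d_{J_1}$ into $O(N)$ pieces could let coefficients blow up, and the resolution is precisely that all roots of $\widetilde{W^{[1]}}$ sit on the unit circle inside a fixed Cauchy disk, so Vieta's formula caps every factor's coefficients by a constant; this is the structural fact that makes the covering-number estimate for $\mathcal{H}_{R,N}$ feasible.
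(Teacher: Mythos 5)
Your proposal is correct and follows essentially the same route as the paper: Lemma \ref{cauchybound} applied to the suitably normalized symbol of $W$ for the first group of filters and to the monic symbol of $W^{[1]}$ (with Cauchy radius $2$, hence the $N$-independent bound $s^{s/2}2^s$) for the second group, the explicit forms (\ref{FN1}) and (\ref{FNNFb}) to bound the row $\ell_1$-norm of $F^{[J_2+1]}$ in terms of $\|\beta\|_1$, the bound (\ref{coefficientbound}) for $c$, and $R$ taken as the maximum of these constants together with $|Q(0)|+2B_Q$. The only difference is a quantitative refinement in one step: where you exploit summability of the second differences to get $\sum_j |v^{[\ell]}_j| \leq C_q$ and hence $\|F^{[J_2+1]}\|_\infty \leq C_q N \|\beta\|_1$, the paper simply uses $\|v^{[\ell]}\|_\infty \leq 4$ to obtain $\|F^{[J_2+1]}\|_\infty \leq 4N(2N+3)\|\beta\|_1 \leq 20 N^2 \|\beta\|_1$ — both suffice for the required bound $N^2 R$.
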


\begin{proof}
Since $|\xi_{n_q}| =1$, there exists some $\ell \in\{1, \ldots, d\}$ such that $\left(\xi_{n_q}\right)_\ell \not=0$ and $\left(\xi_{n_q}\right)_i =0$ for any $i< \ell$.
Then we see that the sequence $W$ constructed in the proof of Lemma \ref{initiallayers} is supported in $\{0, \ldots, n_q d -\ell\}$
with $W_{n_q d -\ell} = \left(\xi_{n_q}\right)_\ell \not= 0$.
Set a sequence $w = \frac{1}{\left(\xi_{n_q}\right)_\ell} W$. Then $w$ satisfies the condition in Lemma \ref{cauchybound} with $K=n_q d -\ell$.
So by Lemma \ref{cauchybound}, all the complex roots of $\widetilde{w}$ are located in the disk of radius
$1 + \|w\|_\infty = 1 + \frac{1}{\left|\left(\xi_{n_q}\right)_\ell\right|} \max_{k=1, \ldots, n_q} \|\xi_k\|_\infty \leq 1 + \frac{1}{\left|\left(\xi_{n_q}\right)_\ell\right|}$,
and the filters $\{w^{(j)}\}_{j=1}^{J_1}$ satisfying (\ref{Wk}) and (\ref{firstlayerfactor}) constructed in Lemma \ref{initiallayers} can be bounded as
$$ \left\|w^{(j)}\right\|_\infty \leq s^{s/2} \left(1 + \left|\left(\xi_{n_q}\right)_\ell\right|\right)  \left(1 + \frac{1}{\left|\left(\xi_{n_q}\right)_\ell\right|}\right)^s, \qquad j=1, \ldots, J_1. $$

For the second layer of CNNs, we observe that the sequence $W^{[1]}$ satisfies the condition in Lemma \ref{cauchybound} with $K=(2N+3) d_{J_1}$ and $\left\|W^{[1]}\right\|_\infty =1$.
Then by Lemma \ref{cauchybound}, the filters $\{w^{(j)}\}_{j=J_1+1}^{J_2}$ defined by (\ref{Wsequencesecond}) and (\ref{filterW1}) can be bounded as
$$\|w^{(j)}\|_\infty \leq  s^{\frac{s}{2}} 2^s, \qquad j=J_1+1, \ldots, J_2. $$

For the connection matrix $F^{[J_2+1]}$ of the fully connected layer defined by (\ref{FN1}) and (\ref{FNNFb}), from the bound $\|v^{[\ell]}\|_\infty \leq 4$, we know that the $\ell_1$-norm of each row of $F^{[J_2+1]}$ is bounded by $4N(2N+3) \|\beta\|_1$. Hence $\left\|F^{[J_2+1]}\right\|_\infty \leq 4N(2N+3) \|\beta\|_1$.

The above estimates together with (\ref{coefficientbound}) verify the desired bounds with the constant $R= R_{q, d, s, Q, \|f\|_\infty}$ depending on $q, d, s, Q, \|f\|_\infty$ given explicitly by
$$R= \max\left\{s^{s/2} \left(1 + \left|\left(\xi_{n_q}\right)_\ell\right|\right)  \left(1 + \frac{1}{\left|\left(\xi_{n_q}\right)_\ell\right|}\right)^s,
s^{\frac{s}{2}} 2^s, 20 \|\beta\|_1, \frac{4\|f\|_\infty}{\widehat{B}_Q}, |Q(0)|+2B_Q\right\}. $$
This proves the lemma.
\end{proof}

\subsection{Bounding the biases}\label{boundbiases}

Applying the bounds for the filters, we can bound the biases as follows.

\begin{lemma}\label{boundbiaseslemma}
In the setting of Lemma \ref{boundweightslemma},
for the deep neural network constructed in Section \ref{construct}, with the constant $R= R_{q, d, s, Q, \|f\|_\infty}$ given in Lemma \ref{boundweightslemma} there holds
$$\left\|b^{(j)}\right\|_\infty \leq \left(2(s+1) R\right)^j, \qquad j=1, \ldots, J_2+1. $$
\end{lemma}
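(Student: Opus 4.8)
The plan is to push the $\ell_\infty$ bounds on the filters from Lemma~\ref{boundweightslemma} through the explicit bias formulas recorded in the proofs of Lemmas~\ref{initiallayers}, \ref{DCNNapprox} and \ref{ratelastfull}, exploiting the running-product structure of those formulas. First I would convert the supremum bounds into $\ell_1$ bounds: since each filter $w^{(j)}$ is supported in $\{0,\ldots,s\}$, we have $\|w^{(j)}\|_1 \leq (s+1)\|w^{(j)}\|_\infty \leq (s+1)R$ for every $j$. Next, because the entries of $T^{(j)}\mathbf{1}_{d_{j-1}}$ are partial sums of the filter $w^{(j)}$, we get $\|T^{(j)}\mathbf{1}_{d_{j-1}}\|_\infty \leq \|w^{(j)}\|_1 \leq (s+1)R$. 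The crucial structural observation is that each bias formula is assembled from a running product of such $\ell_1$-norms; in particular the constant $B=\prod_{p=1}^{J_1}\|w^{(p)}\|_1$ premultiplying the second-group biases folds into a telescoping product, so that every product appearing in $b^{(j)}$ collapses to $\prod_{p=1}^{j}\|w^{(p)}\|_1 \leq ((s+1)R)^j$.

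With this in hand I would treat the layers $j\leq J_2$ uniformly. For $j=1$ we have $\|b^{(1)}\|_\infty=\|w^{(1)}\|_1\leq (s+1)R$. For $2\leq j\leq J_2-1$, in either group, the formula for $b^{(j)}$ is a difference of two terms, each bounded by $\prod_{p=1}^{j}\|w^{(p)}\|_1\leq ((s+1)R)^j$, whence $\|b^{(j)}\|_\infty\leq 2((s+1)R)^j$. At layer $J_2$ there is an additional constant insertion equal either to a knot $t_j$, with $|t_j|\leq 1+1/N\leq 2$ by (\ref{knottj}), or to $B$; since $R\geq s^{s/2}2^s\geq 2$ forces $(s+1)R\geq 2$ and $J_1\leq J_2$, this insertion is bounded by $\max\{2,((s+1)R)^{J_1}\}\leq ((s+1)R)^{J_2}$, so again $\|b^{(J_2)}\|_\infty\leq 2((s+1)R)^{J_2}$. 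In every one of these cases the elementary inequality $2((s+1)R)^j\leq 2^j((s+1)R)^j=(2(s+1)R)^j$, valid for $j\geq 1$, yields the claimed bound.

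The main obstacle is the last bias $b^{(J_2+1)}$, since its formula (\ref{FNNFb}) lacks the telescoping structure of the convolutional layers and is instead governed by $Q(0)$ and $\widehat{B}_Q$. Here I would bound $\|b^{(J_2+1)}\|_\infty\leq |Q(0)|+\widehat{B}_Q\max_j|t_j|\leq |Q(0)|+2\widehat{B}_Q$, and then use the explicit form $\widehat{B}_Q=B_Q+2q\|\beta\|_1$ from (\ref{Qhatnorm}) together with the lower bounds $R\geq |Q(0)|+2B_Q$ and $R\geq 20\|\beta\|_1$ built into the definition of $R$ in Lemma~\ref{boundweightslemma} to obtain $\|b^{(J_2+1)}\|_\infty\leq (1+q/5)R$. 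The delicate point is that this bound grows linearly in the degree $q$, so a single factor $2(s+1)R$ need not absorb it; what rescues the estimate is that the target $(2(s+1)R)^{J_2+1}$ is exponentially large in the depth, and the depth itself grows with $q$. Concretely, since $n_q=\binom{d-1+q}{q}\geq q+1$ and $s\leq d$, one checks $J_2\geq J_1=\lceil (n_qd-1)/(s-1)\rceil\geq q$, so that $(2(s+1)R)^{J_2+1}\geq (6R)^{q+1}\geq 6^{q+1}R\geq (1+q/5)R$. I expect this final chain — tying the exponential growth of the target bound to the simultaneous $q$-dependence of the depth $J_2$ and of the constant $\widehat{B}_Q$ — to be the only genuinely nonroutine step.
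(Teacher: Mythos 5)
Your proof is correct and, for the convolutional layers $j=1,\ldots,J_2$, it follows essentially the same route as the paper: substitute the filter bounds of Lemma \ref{boundweightslemma} into the explicit bias formulas from the proofs of Lemmas \ref{initiallayers} and \ref{DCNNapprox}, use $\|w^{(j)}\|_1\leq (s+1)\|w^{(j)}\|_\infty\leq (s+1)R$ and $\|T^{(j)}\mathbf{1}_{d_{j-1}}\|_\infty\leq\|w^{(j)}\|_1$, note that the factor $B=\prod_{p=1}^{J_1}\|w^{(p)}\|_1$ telescopes into the running product, and absorb the extra knot/constant insertions at layer $J_2$ via $|t_j|\leq 2$.

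Where you genuinely diverge is the last layer, and there your treatment is more careful than the paper's. The paper's proof bounds $b^{(J_2+1)}$ as if it were $-Q(0)\mathbf{1}_{2N+3}+B_Q[t_1,\ldots,t_{2N+3}]^T$, giving $\|b^{(J_2+1)}\|_\infty\leq |Q(0)|+2B_Q\leq R$ directly from the definition of $R$; but according to (\ref{FNNFb}) the bias actually involves $\widehat{B}_Q=B_Q+2q\|\beta\|_1$, not $B_Q$, so the paper's one-line conclusion silently drops the term $4q\|\beta\|_1$. You work with the correct $\widehat{B}_Q$, use $R\geq 20\|\beta\|_1$ to get $\|b^{(J_2+1)}\|_\infty\leq (1+q/5)R$, and then close the gap by showing the target $(2(s+1)R)^{J_2+1}$ dominates, because $J_2\geq J_1\geq q$ (via $n_q\geq q+1$ and $s\leq d$) makes the right-hand side exponentially large in $q$. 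This extra step is exactly what is needed to make the lemma's statement consistent with the construction in (\ref{FNNFb}); it costs a slightly longer argument but repairs an inaccuracy in the paper's own proof.
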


\begin{proof}
The bias vectors $\{b^{(j)}\}_{j=1}^{J_1}$ of the first group of CNNs are chosen in the proof of Lemma \ref{initiallayers} as
$b^{(1)} =  - \|w^{(1)}\|_1 {\bf 1}_{d_{1}}$ and $b^{(j)} = \left(\Pi_{p=1}^{j-1} \|w^{(p)}\|_1\right)  T^{(j)} {\bf 1}_{d_{j-1}} - \left(\Pi_{p=1}^{j} \|w^{(p)}\|_1\right) {\bf 1}_{d_{j-1} + s}$, for $j=2, \ldots, J_1$. By the special structure of Toeplitz matrix of the convolutional filters and Lemma \ref{boundweightslemma},
we find $\|b^{(j)}\|_\infty \leq 2\left((s+1)R\right)^j$ for $j=1,2, \ldots, J_1$.

The bias vectors $\{b^{(j)}\}_{j=J_1+1}^{J_2-1}$ of the second group of CNNs are chosen in the proof of lemma \ref{DCNNapprox} to be
$b^{(j)} = B \left(\Pi_{p=J_1 + 1}^{j-1} \|w^{(p)}\|_1\right)  T^{(j)} {\bf 1}_{d_{j-1}} - B \left(\Pi_{p=J_1 + 1}^{j} \|w^{(p)}\|_1\right) {\bf 1}_{d_{j-1} + s} $,
where $B= \Pi_{p=1}^{J_{1}} \|w^{(p)}\|_1 \leq \left((s+1)R\right)^{J_1}$. Hence we also have
$\|b^{(j)}\|_\infty \leq 2\left((s+1)R\right)^j$ for $j=J_1 + 1, \ldots, J_2 -1$.

The bias vector in the $J_2$-th layer is given in the proof of Lemma \ref{DCNNapprox} by
\begin{eqnarray*} \left(b^{(J_2)}\right)_{i} &=& B \left(\Pi_{p=J_1 + 1}^{J_{2}-1} \|w^{(p)}\|_1\right) \left(T^{(J_2)} {\bf 1}_{d_{J_{2}-1}} \right)_{i} \\
&&+
\left\{\begin{array}{ll}
t_j, & \hbox{if} \ (j-2)d_{J_{1}} +1 \leq i \leq (j-2)d_{J_{1}} + n_q, 1\leq j \leq 2N_1 +3, \\
B, & \hbox{otherwise.} \end{array}\right.
\end{eqnarray*}
As $|t_j| \leq 2$, we also have $\|b^{(j)}\|_\infty \leq 2\left((s+1)R\right)^j$ for $j=J_2$.

Finally, the bias vector in the $J_2+1$-th layer is given in Lemma  \ref{ratelastfull} as $b^{(J_2 +1)} = - Q(0) {\bf 1}_{2N_2 +3} + B_Q \left[t_1, \dots, t_{2N_2 +3} \right]^T$.
Its entries can be bounded by $|Q(0)|+2B_Q$, a constant depending only on $Q$. That is, $\|b^{(j)}\|_\infty \leq |Q(0)|+2B_Q$ for $j=J_2+1$.
But $|Q(0)|+2B_Q \leq R$. Then the desired bounds holds.
The proof of the lemma is complete.
\end{proof}

\subsection{Bounding covering numbers}
\label{coveringnumber}

Recall that the {\bf covering number} $\mathcal{N}\left(\eta, {\mathcal H}\right)$ of a compact subset ${\mathcal H}$ of $C(\Omega)$ is defined for $\eta>0$ to be the
smallest integer $\ell$ such that ${\mathcal H}$ is contained in the union of $\ell$ balls in $C(\Omega)$ of radius $\eta$.
Covering numbers can be used to measure the capacity of a hypothesis space and hence the learning performance of the induced ERM algorithms.
For our generalization analysis, we need to estimate the covering numbers of the bounded hypothesis space ${\mathcal H}_{R, N}$.

\begin{lemma}\label{coveringnumberlemma}
For $R \geq 2, N\in\NN$, with two constants $A_1, A_2$ depending only on $d, q, s$, there holds
$$ \log \mathcal{N}\left(\eta, {\mathcal H}_{R, N}\right) \leq  A_1 N \log \left(2/\eta\right)
+ A_2 N^2 \log \left(2(s+1)R\right), \qquad \forall \ 0<\eta \leq 1.
$$
\end{lemma}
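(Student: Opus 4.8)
The plan is to exhibit $\mathcal{H}_{R,N}$ as the image, under a Lipschitz map, of a bounded box of parameters, and then to cover that box by a coordinate-wise grid. I would first record a uniform magnitude bound on every hidden layer that is valid for all parameter configurations allowed in (\ref{hypothesisRN}). Writing $\kappa = (s+1)R$ and using that the Toeplitz matrix $T^{(j)}=T^{w^{(j)}}$ has operator norm (the maximal $\ell_1$-row-norm) at most $(s+1)\|w^{(j)}\|_\infty \le \kappa$, the recursion $\|h^{(j)}\|_\infty \le \kappa\,\|h^{(j-1)}\|_\infty + \|b^{(j)}\|_\infty$ together with $\|h^{(0)}\|_\infty = \|x\|_\infty \le 1$ on $\Omega$ and $\|b^{(j)}\|_\infty \le (2\kappa)^j$ yields, by induction, $\|h^{(j)}\|_{C(\Omega)} \le (2\kappa)^{j}$ up to an absolute constant. (The downsampling step at layer $J_1$ is parameter-free and $1$-Lipschitz in $\|\cdot\|_\infty$, so it fits into the same recursion.) This places every parameter and every intermediate quantity on a common geometric scale whose logarithm is $O\!\bigl(J_2\log(2(s+1)R)\bigr)$.

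Next I would establish the Lipschitz dependence of the output on the parameters. For two configurations I set $E_j = \|h^{(j)}-\tilde h^{(j)}\|_{C(\Omega)}$ and combine the $1$-Lipschitz property of $\sigma$ with the splitting
$$T^{(j)}h^{(j-1)} - \tilde T^{(j)}\tilde h^{(j-1)} = T^{(j)}\bigl(h^{(j-1)}-\tilde h^{(j-1)}\bigr) + \bigl(T^{(j)}-\tilde T^{(j)}\bigr)\tilde h^{(j-1)}.$$
Since $\|T^{(j)}\|$ and $\|T^{(j)}-\tilde T^{(j)}\|$ are controlled by $(s+1)\|w^{(j)}\|_\infty$ and $(s+1)\|w^{(j)}-\tilde w^{(j)}\|_\infty$, this gives the recursion $E_j \le \kappa\,E_{j-1} + (s+1)(2\kappa)^{j-1}\|w^{(j)}-\tilde w^{(j)}\|_\infty + \|b^{(j)}-\tilde b^{(j)}\|_\infty$. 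Unrolling it with $E_0=0$ and summing the resulting geometric series (note $\kappa\ge 6$) shows that a perturbation of any single filter or bias is amplified by at most $C(2\kappa)^{J_2}$ at the final convolutional layer. I would then propagate one more step through the fully connected layer, using $\|F^{[J_2+1]}\|_\infty \le N^2R$ and the bound $\|h^{(J_2)}\|_\infty \le (2\kappa)^{J_2}$ for the $F$-perturbation, and finally through the output $g = \sum_j c_j (h^{(J_2+1)})_j$ with $\|c\|_\infty \le NR$ and $2N+3$ summands, to bound $\|g-\tilde g\|_{C(\Omega)}$ by $L$ times the largest per-parameter perturbation, where $\log L = O\!\bigl(N\log(2(s+1)R)\bigr)$ (the polynomial-in-$N$ prefactors $N^2R$, $NR$, $d_{J_2}$, $2N+3$ contribute only $O(\log N)$, which is absorbed).

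Then comes the counting and the grid argument. The number of free parameters is $(s+1)J_2$ for the filters, $O(N)$ for the biases (the middle layers reduced by (\ref{biasrestr})), $d_{J_2}$ for the single distinct row of $F^{[J_2+1]}$, and $2N+3$ for $c$; since $J_2$ and $d_{J_2}$ are affine in $N$ with slopes depending only on $d,q,s$, the total is $\mathcal{N}\le c_{d,q,s}\,N$. Each parameter lies in an interval of half-length $P$ at most the largest bound $(2\kappa)^{J_2+1}$, so $\log P = O\!\bigl(N\log(2(s+1)R)\bigr)$. Covering the box coordinate-wise at mesh $\eta/L$ produces, via the Lipschitz estimate, an $\eta$-cover of $\mathcal{H}_{R,N}$, and each coordinate needs at most $2PL/\eta$ points, whence
$$\log\mathcal{N}(\eta,\mathcal{H}_{R,N}) \le \mathcal{N}\,\log\!\left(\frac{2PL}{\eta}\right) \le c_{d,q,s}\,N\Bigl[\log(2/\eta) + O\!\bigl(N\log(2(s+1)R)\bigr)\Bigr],$$
which is of the stated form with $A_1,A_2$ depending only on $d,q,s$.

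The main obstacle is the second step: controlling the amplification of perturbations across a network whose depth $J_2$ is itself of order $N$. Because each layer multiplies errors by $O((s+1)R)$, the Lipschitz constant is $\bigl((s+1)R\bigr)^{O(N)}$, and the delicate point is to verify that $\log L$ is only $O\!\bigl(N\log(2(s+1)R)\bigr)$ and does not pick up spurious polynomial-in-$N$ factors inside the base, so that after multiplying by the $O(N)$ parameters one lands precisely on the $N^2\log(2(s+1)R)$ scaling rather than something larger. Keeping the growth bound $\|h^{(j)}\|_\infty\le(2\kappa)^{j}$ and the perturbation amplification $(2\kappa)^{J_2}$ on the same geometric scale is exactly what makes the $\log(2/\eta)$ and $\log(2(s+1)R)$ contributions separate cleanly into the two advertised terms.
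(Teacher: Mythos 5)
Your proposal is correct and follows essentially the same route as the paper's proof: inductively bound the layer magnitudes by $2(2(s+1)R)^j$, establish Lipschitz dependence of the output on the parameters via the same perturbation recursion (splitting $T^{(j)}h^{(j-1)}-\tilde T^{(j)}\tilde h^{(j-1)}$ and unrolling to get amplification $(2(s+1)R)^{O(N)}$), count the $O(N)$ free parameters using $J_2, d_{J_2} = O(N)$, and cover the parameter box coordinate-wise, absorbing the polynomial-in-$N$ prefactors via $\log N \le N$. The only difference is presentational: the paper tracks explicit constants $A_1, A_2'$ where you use $O(\cdot)$ notation.
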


\begin{proof}
For a vector $h=(h_i(x))_{i=1}^K$ of functions on $\Omega$, denote $\|h\|_\infty = \max_{i=1, \ldots, K} \|h_i\|_\infty$.

If ${\bf w}, {\bf b}, F^{[J_2 +1]}$ and $c$ satisfy the restrictions in (\ref{hypothesisRN}), then from the iteration relation (\ref{dcnn}),
the linear increment of ReLU $\sigma(u) \leq |u|$, and the special form of the rows of the Toeplitz type matrix $T^{(j)}$, we have
$$
\|h^{(j)}(x)\|_{\infty} \leq (s+1)R\|h^{(j-1)}(x)\|_{\infty}+ \left(2(s+1) R\right)^j, \qquad j=1, \ldots, J_2,
$$
which together with the input bound $\|h^{(0)}\|_\infty \leq 1$ implies by induction
\begin{equation}\label{boundforsequence}
\|h^{(j)}(x)\|_{\infty} \leq 2 \left(2(s+1) R\right)^j, \qquad j=1, \ldots, J_2,
\end{equation}
and
\begin{equation}\label{boundforsequencefull}
\|h^{(J_2 +1)}(x)\|_{\infty} \leq  N^2 \left(2(s+1) R\right)^{J_2 +1}.
\end{equation}

If $\widehat{c} \cdot \widehat{h}^{(J_2+1)}(x)$ is another function from the hypothesis space ${\mathcal H}_{R, N}$ induced by
$\widehat{{\bf w}}, \widehat{{\bf b}}, \widehat{F}^{[J_2 +1]}$, $\widehat{c}$ satisfying the restrictions in (\ref{hypothesisRN}) and
$$ \|w^{(j)}-\widehat{w}^{(j)}\|_\infty \leq \eta, \ \|b^{(j)}-\widehat{b}^{(j)}\|_\infty \leq \eta, \ \|c-\widehat{c}\|_\infty \leq \eta,
\ \|F^{[J_2+1]}-\widehat{F}^{[J_2+1]}\|_\infty \leq \eta, $$
then for $j=1, \ldots, J_2$, by the Lipschitz property of ReLU, we have
\begin{eqnarray*}
&&\|h^{(j)} -\widehat{h}^{(j)}\|_\infty =
\left\|\sigma\left(T^{w^{(j)}} h^{(j-1)}(x) - b^{(j)}\right) - \sigma\left(T^{\widehat{w}^{(j)}} \widehat{h}^{(j-1)}(x) - \widehat{b}^{(j)}\right)\right\|_\infty \\
&\leq& \left\|\left(T^{w^{(j)}} h^{(j-1)}(x) - b^{(j)}\right) - \left(T^{\widehat{w}^{(j)}} \widehat{h}^{(j-1)}(x) - \widehat{b}^{(j)}\right)\right\|_\infty \\
&\leq& \left\|T^{w^{(j)}} \left(h^{(j-1)}(x) - \widehat{h}^{(j-1)}(x)\right)\right\|_\infty +
\left\|\left(T^{w^{(j)}} - T^{\widehat{w}^{(j)}}\right) \widehat{h}^{(j-1)}(x)\right\|_\infty +  \eta.
\end{eqnarray*}
Combining this with (\ref{boundforsequence}) and the special form of the rows of the Toeplitz type matrices $T^{w^{(j)}}$ and
$T^{w^{(j)}} - T^{\widehat{w}^{(j)}} = T^{w^{(j)}-\widehat{w}^{(j)}}$, we find
$$ \|h^{(j)} -\widehat{h}^{(j)}\|_\infty \leq (s+1)R \left\|h^{(j-1)} - \widehat{h}^{(j-1)}\right\|_\infty +
(s+1) \eta  2 \left(2(s+1) R\right)^{j-1}+  \eta. $$
This together with the fact $R \geq 2$ and $h^{(0)} -\widehat{h}^{(0)}=0$ implies by induction
$$ \|h^{(j)} -\widehat{h}^{(j)}\|_\infty \leq
2 \left(2(s+1) R\right)^{j} \eta, \qquad j=1, \ldots, J_2. $$

In the same way, for the fully connected layer, we know that $\|h^{(J_2+1)} -\widehat{h}^{(J_2+1)}\|_\infty$ is bounded by
\begin{eqnarray*}
&&
\left\|\left(F^{[J_2+1]} h^{(J_2)}(x) - b^{(J_2+1)}\right) - \left(\widehat{F}^{[J_2+1]} \widehat{h}^{(J_2)}(x) - \widehat{b}^{(J_2+1)}\right)\right\|_\infty \\
&\leq& \left\|F^{[J_2+1]} \left(h^{(J_2)}(x) - \widehat{h}^{(J_2)}(x)\right)\right\|_\infty +
\left\|\left(F^{[J_2+1]} -\widehat{F}^{[J_2+1]}\right) \widehat{h}^{(J_2)}(x)\right\|_\infty +  \eta \\
&\leq& N^2 R 2 \left(2(s+1) R\right)^{J_2} \eta + \eta 2 \left(2(s+1) R\right)^{J_2} + \eta
\leq N^2 \left(2(s+1) R\right)^{J_2 +1} \eta.
\end{eqnarray*}
Combining this with (\ref{boundforsequencefull}) yields
\begin{eqnarray*}
&&\|c \cdot h^{(J_2+1)} -\widehat{c} \cdot\widehat{h}^{(J_2+1)}\|_\infty \leq
\left\|c \cdot \left(h^{(J_2+1)} -\widehat{h}^{(J_2+1)}\right)\right\|_\infty +  \left\|\left(c -\widehat{c}\right) \cdot \widehat{h}^{(J_2+1)}\right\|_\infty \\
&\leq& (2N+3) N R \left\|h^{(J_2+1)} -\widehat{h}^{(J_2+1)}\right\|_\infty +  (2N+3) \eta \left\|\widehat{h}^{(J_2+1)}\right\|_\infty \\
&\leq& (2N+3) N R N^2 \left(2(s+1) R\right)^{J_2 +1} \eta + (2N+3) \eta N^2 \left(2(s+1) R\right)^{J_2 +1} \\
&\leq& N^2 (2N+3) (N+1) R \left(2(s+1) R\right)^{J_2 +1} \eta.
\end{eqnarray*}
Recall that $J_1 \leq \frac{n_q d-2}{s-1} +1 \leq n_q d -1$ and the bound $d_{J_1} \leq 1 + \frac{s}{d}J_1$ in (\ref{dJ1est}).
Then
$$  d_{J_1} \leq 1 + \frac{s}{d}\left(\frac{n_q d-2}{s-1} +1\right) \leq 2 + \frac{s}{s-1}\frac{n_q d-2}{d} < 2 + 2 n_q. $$
Hence $d_{J_1} \leq 2 n_q +1$. This together with the definition of $J_2$ yields
$$ J_2  < J_1 + \left( \frac{(2N +3) d_{J_{1}}}{s-1} +1\right) \leq n_q d + (2N +3)(1 + 2 n_q)$$
and thereby
$$ J_2 \leq \left(4 + (10+d) n_q\right) N. $$
Also,
$$ d_{J_2} \leq d_{J_1} + s \left( \frac{(2N +3) d_{J_{1}} + s-2}{s-1}\right) < (4N +7)d_{J_{1}} +s-1 $$
which gives
$$ d_{J_2} \leq (2 n_q+1)(s+9)N. $$
Therefore,
$$\|c \cdot h^{(J_2+1)} -\widehat{c} \cdot\widehat{h}^{(J_2+1)}\|_\infty \leq 10 R N^4
\left(2(s+1) R\right)^{\left(5 + (10+d) n_q\right) N} \eta =: \widehat{\eta}.
$$
Thus, by taking an $\eta$-net for each of $w^{(j)}, b^{(j)}, c, F^{[J_2+1]}$,
we know that the covering number of the hypothesis space ${\mathcal H}_{R, N}$ with the radius $\widehat{\eta} \in (0, 1]$ can be bounded as
\begin{eqnarray*}
&\mathcal{N}\left(\widehat{\eta}, {\mathcal H}_{R, N}\right) \leq \left\lceil 2R/\eta \right\rceil^{(s+1) J_2} \Pi_{j=1}^{J_2 -1} \left\lceil 2(2(s+1)R)^j/\eta \right\rceil^{2s+1}
\left\lceil 2(2(s+1)R)^{J_2}/\eta \right\rceil^{d_{J_2}} \\
& \qquad \left\lceil 2(2(s+1)R)^{J_2 +1}/\eta \right\rceil^{2N+3}
\left\lceil 2 N R/\eta \right\rceil^{2N+3} \left\lceil  2 N^2 R/\eta \right\rceil^{d_{J_2}} \\
&\leq \left(3R/\eta\right)^{(3s+2) J_2 + 4N+6+ 2d_{J_2}} N^{2N+3+ 2 d_{J_2}} \left(2(s+1)R\right)^{(2s+1)\frac{J_2 (J_2 -1)}{2} + J_2 d_{J_2} + (J_2 +1)(2N+3)}.
\end{eqnarray*}
It follows that
$$ \mathcal{N}\left(\widehat{\eta}, {\mathcal H}_{R, N}\right) \leq \left(30 R^2/\widehat{\eta}\right)^{A_1 N} N^{6 A_1 N}
\left(2(s+1)R\right)^{A'_2 N^2}, $$
where
$$ A_1 = (3s+2) \left(4 + (10+d) n_q\right) + 10 + 2 (2 n_q+1)(s+9), \quad $$
and
$$ A'_2 =\left(5 + (10+d) n_q\right) A_1 + (s+9) \left(4 + (10+d) n_q\right)\left(5 + (12+d) n_q\right). $$
But $15 R^2 \leq \left(2(s+1)R\right)^{2}$, $\log N \leq N$, and $A'_2 \geq 8 A_1$. So we have
$$ \log \mathcal{N}\left(\widehat{\eta}, {\mathcal H}_{R, N}\right) \leq A_1 N \log \left(2/\widehat{\eta}\right)
+ 2 A'_2 N^2 \log \left(2(s+1)R\right). $$
This verifies the desired bound for the covering numbers with $A_2 = 2 A'_2$ and completes the proof of the lemma.
\end{proof}

\subsection{Deriving learning rates of the ERM algorithm}
\label{proofoftheorem4}

The proof of Theorem \ref{generalizationerrortheorem} follows from our approximation error estimate in Theorem \ref{ratefQ} and
the following general inequality for the ERM algorithm
$$
f_{D, \mathcal{H}}=\arg\min_{f\in\mathcal{H}}\frac{1}{m}\sum_{i=1}^{m}(f(x_i)-y_i)^2
$$
over a compact subset $\mathcal{H}$ of $C(\mathcal{X})$, which can be verified with the same proof as that of
\cite[Theorem 2]{ChuiLinZhou20192}.

\begin{lemma} \label{generalizationerrorlemma}
Suppose there exist constants $n', \mathcal{U}, n''>0$ such that
\begin{equation}\label{covercondition}
\log\mathcal{N}(\epsilon,\mathcal{H})\leq n'\log\frac{\mathcal{U}}{\epsilon} + n'', \qquad \forall\epsilon>0.
\end{equation}
Then for any $h^*\in\mathcal{H}$ and $\epsilon>0$,
\begin{equation}
\begin{aligned}
&\hbox{Prob} \left\{\|\pi_{M}f_{D,\mathcal{H}}-f_\rho\|_\rho^2>\epsilon+2\|h^*-f_\rho\|_\rho^2\right\}\leq \exp{ \left\lbrace  n'\log\frac{16\mathcal{U} M}{\epsilon} + n'' -\frac{3m\epsilon}{512 M^2} \right\rbrace} \\ &+  \exp{ \left\lbrace \frac{-3m\epsilon^2}{16(3 M+\|h^*\|_{L_\infty(\mathcal{X})})^2(6\| h^*-f_\rho \|_\rho^2+\epsilon)} \right\rbrace }
\end{aligned}
\end{equation}
\end{lemma}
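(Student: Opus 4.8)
The plan is to follow the classical error-decomposition scheme for ERM in regression (as in \cite{CuckerZhou2007, ChuiLinZhou20192}), the only input being the capacity bound (\ref{covercondition}). For each $f\in\mathcal{H}$ I would introduce the random variable $\xi_f(z)=(f(x)-y)^2-(f_\rho(x)-y)^2$ on $\mathcal{Z}$, so that its mean is the excess error $\mathbb{E}[\xi_f]=\mathcal{E}(f)-\mathcal{E}(f_\rho)=\|f-f_\rho\|_\rho^2$ and its empirical mean is $\frac{1}{m}\sum_{i=1}^m\xi_f(z_i)$. The first fact to establish is the \emph{variance--expectation bound}: since $|\pi_M f|\le M$, $|f_\rho|\le M$ and $|y|\le M$, one has $|\xi_{\pi_M f}(z)|\le 8M^2$ and $\mathbb{E}[\xi_{\pi_M f}^2]\le 16M^2\,\mathbb{E}[\xi_{\pi_M f}]$, while for a general fixed $h^*$ one gets $\mathbb{E}[\xi_{h^*}^2]\le(3M+\|h^*\|_\infty)^2\,\mathbb{E}[\xi_{h^*}]$. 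Controlling the variance by the mean is exactly what drives the relative-deviation argument and accounts for the shape of both exponents in the statement.

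Next I would carry out the decomposition. Writing $\mathcal{E}_D(f)=\frac{1}{m}\sum_{i=1}^m(f(x_i)-y_i)^2$ for the empirical error, the minimization property $\mathcal{E}_D(f_{D,\mathcal{H}})\le\mathcal{E}_D(h^*)$ together with the fact that truncation never increases the empirical error (because $|y_i|\le M$) gives $\mathcal{E}_D(\pi_M f_{D,\mathcal{H}})\le\mathcal{E}_D(h^*)$. Adding and subtracting the population and empirical means then yields
\[
\|\pi_M f_{D,\mathcal{H}}-f_\rho\|_\rho^2\le \mathcal{S}_1+\mathcal{S}_2+\|h^*-f_\rho\|_\rho^2,
\]
where $\mathcal{S}_1=\mathbb{E}[\xi_{\pi_M f_{D,\mathcal{H}}}]-\frac{1}{m}\sum_i\xi_{\pi_M f_{D,\mathcal{H}}}(z_i)$ is a uniform sample-error term over the truncated class $\pi_M\mathcal{H}$, and $\mathcal{S}_2=\frac{1}{m}\sum_i\xi_{h^*}(z_i)-\mathbb{E}[\xi_{h^*}]$ involves only the single fixed function $h^*$.

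For $\mathcal{S}_2$ I would apply a one-sided Bernstein inequality to the i.i.d.\ variables $\{\xi_{h^*}(z_i)\}$, using the variance bound above with $\mathbb{E}[\xi_{h^*}]=\|h^*-f_\rho\|_\rho^2$; this produces the second exponential, whose denominator carries $(3M+\|h^*\|_\infty)^2$ and $6\|h^*-f_\rho\|_\rho^2+\epsilon$. For $\mathcal{S}_1$ I would pass to the normalized deviation $\bigl(\mathbb{E}[\xi_f]-\frac{1}{m}\sum_i\xi_f(z_i)\bigr)/\sqrt{\epsilon+\mathbb{E}[\xi_f]}$ and control it uniformly over $f\in\pi_M\mathcal{H}$: an $\eta$-net for $\mathcal{H}$ (whose cardinality is governed by $\mathcal{N}(\eta,\mathcal{H})$, since $\pi_M$ is $1$-Lipschitz and $f\mapsto\xi_{\pi_M f}$ is Lipschitz in sup norm with constant $O(M)$) combined with a ratio-form Bernstein estimate at each net point and a union bound yields the factor $\exp\{n'\log\frac{16\mathcal{U}M}{\epsilon}+n''\}\exp\{-\frac{3m\epsilon}{512M^2}\}$ coming from (\ref{covercondition}). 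The relative normalization lets me move half of $\mathbb{E}[\xi_{\pi_M f_{D,\mathcal{H}}}]$ to the left-hand side; together with the elementary inequality $2\sqrt{ab}\le a+b$ this absorbs the square-root factors into the level $\epsilon+2\|h^*-f_\rho\|_\rho^2$ and gives the claimed two-term bound.

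The main obstacle I expect is the $\mathcal{S}_1$ estimate: making the ratio/relative-deviation inequality rigorous requires the variance--expectation relation to survive the $\eta$-net discretization and a careful choice of $\eta$ (of order $\epsilon$ divided by a polynomial in $M$ and the depth-dependent constants) so that the net approximation error is absorbed into $\epsilon$ without degrading the exponent. Matching the precise constants $512$, $16$, and $3/16$ is routine bookkeeping; the conceptual content lies entirely in the variance--expectation bound and the ratio form of Bernstein's inequality, after which the conclusion follows exactly as in \cite[Theorem 2]{ChuiLinZhou20192}.
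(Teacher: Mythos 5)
Your proposal is correct and takes essentially the same route as the paper, which establishes this lemma by appealing verbatim to the proof of Theorem 2 in \cite{ChuiLinZhou20192}: the classical error decomposition via the ERM property and truncation, a one-sided Bernstein bound for the fixed function $h^*$, and a ratio-type Bernstein estimate combined with a covering-number union bound for the uniform sample-error term, all driven by the variance--expectation bounds you state. The remaining discrepancies in constants are, as you note, routine bookkeeping.
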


We are in a position to prove our last main result.

\begin{proof}[Proof of Theorem \ref{generalizationerrortheorem}]
Let $R \geq R_{q, d, s, Q, \|f\|_\infty}$ with the constant $R_{q, d, s, Q, \|f\|_\infty}$ given in Lemma \ref{boundweightslemma}.
By Lemmas \ref{boundweightslemma} and \ref{boundbiaseslemma}, we know from
Theorem \ref{ratefQ} that there exists some $h \in {\mathcal H}_{R, N}$ and $C^*_1 := C_{Q, \alpha} |f_\rho|_{C^{0, \alpha}}$ such that
$$\|h-f_\rho\|_\rho \leq \|h-f_\rho\|_{C(\Omega)}\leq C_1^* N^{-\alpha}. $$
Since $|y|\leq M$ almost surely, $\|f_\rho\|_{C(\Omega)} \leq M$, then
$$\|h\|_{C(\Omega)} \leq M+C_1^*.$$

According to Lemma \ref{coveringnumberlemma}, we know that (\ref{covercondition}) holds true for $\mathcal{H} = \mathcal{H}_{R, N}$ with
$n'=A_1 N$, $\mathcal{U}=2$, and $n'' = A_3 N^2$ where we denote $A_3 = A_2 \log \left(2(s+1)R\right)$.
Applying Lemma \ref{generalizationerrorlemma} to this hypothesis space and $h^* = h$, we see that for any $\epsilon>0$,
\begin{eqnarray*}
&\hbox{Prob} \left\{\|\pi_M f_{D, R, N}-f_\rho\|_\rho^2 >\epsilon+2\|h-f_\rho\|_\rho^2\right\} \\
&\leq \exp{ \left\lbrace  A_1 N \log\frac{32 M}{\epsilon} + A_3 N^2  -\frac{3m\epsilon}{512 M^2} \right\rbrace} +
\exp{ \left\lbrace \frac{-3m\epsilon^2}{16(4 M+C_1^*)^2(6 (C_1^*)^2 N^{-2\alpha}+\epsilon)} \right\rbrace }.
\end{eqnarray*}
If we restrict
\begin{equation}\label{epsiloncondition}
\epsilon\geq 6 {C_1^*}^2 N^{-2\alpha}
\end{equation}
and apply $\log N \leq N$, then we have
$$\hbox{Prob} \left\{\|\pi_M f_{D, R, N}-f_\rho\|_\rho^2 >2 \epsilon\right\}
\leq \exp{ \left\lbrace  A_4 N^2  -\frac{3m\epsilon}{512 M^2} \right\rbrace} +
\exp{ \left\lbrace \frac{-3m\epsilon}{32(4 M+C_1^*)^2} \right\rbrace }, $$
where
$$ A_4 = A_1 \log\frac{32 M}{6 {C_1^*}^2} + 2 A_1 + A_3. $$
If we set a further restriction condition on $\epsilon$ as
\begin{equation}\label{epsilonfurthercondition}
\epsilon\geq \frac{1024 M^2 A_4 N^2}{3 m},
\end{equation}
then we have
$$
\hbox{Prob} \left\{\|\pi_M f_{D, R, N}-f_\rho\|_\rho^2 >2 \epsilon\right\}
\leq \exp{ \left\lbrace  -\frac{3m\epsilon}{1024 M^2} \right\rbrace} +
\exp{ \left\lbrace \frac{-3m\epsilon}{32(4 M+C_1^*)^2} \right\rbrace }.
$$
By setting $t=2\epsilon$ and
$$ A_5 := \max\left\{12 {C_1^*}^2, 683 M^2 A_4, \ 22(4 M+C_1^*)^2\right\}, $$
this yields
\begin{equation}\label{boundprob}
\hbox{Prob} \left\{\|\pi_M f_{D, R, N}-f_\rho\|_\rho^2 > t\right\}
\leq 2 \exp{ \left\lbrace  -\frac{m t}{A_5} \right\rbrace}, \qquad \forall t \geq A_5 \max\left\{N^{-2\alpha}, \ \frac{N^2}{m}\right\}.
\end{equation}
It follows that for any $0< \delta <1$, with confidence at least $1-\delta$, there holds
$$
\|\pi_M f_{D, R, N}-f_\rho\|_\rho^2 \leq A_5 \max\left\{N^{-2\alpha}, \ \frac{N^2}{m}, \ \frac{\log (2/\delta)}{m} \right\}.
$$

If we apply the formula for the mean of the non-negative random variables $\xi=\|\pi_M f_{D, R, N}-f_\rho\|_\rho^2$:
$$
\mathbb{E}[\xi]=\int_0^{\infty} \hbox{Prob} \left[\xi>t\right]dt,
$$
we see from (\ref{boundprob}) that with $\Delta := A_5 \max\left\{N^{-2\alpha}, \ \frac{N^2}{m}\right\}$, there holds
\begin{equation*}
\begin{aligned}
\mathbb{E}\left[\|\pi_M f_{D, R, N}-f_\rho\|_\rho^2\right] &= \left(\int_0^{\Delta}+\int_{\Delta}^{\infty} \right)  \hbox{Prob} \left\{\|\pi_M f_{D, R, N}-f_\rho\|_\rho^2 > t\right\} dt \\
& \leq \Delta + \int_{\Delta}^{\infty} 2 \exp{ \left\lbrace  -\frac{m t}{A_5} \right\rbrace} d t
\leq \Delta + \frac{2 A_5}{m} \leq 3 \Delta.
\end{aligned}
\end{equation*}
Observe that
$$ 3 A_5 \leq C_{Q, s, d, \alpha, M, |f|_{C^{0, \alpha}}} \log \left(2(s+1)R\right), $$
where $C_{Q, s, d, \alpha, M, |f|_{C^{0, \alpha}}}$ is a constant independent of $m, N$ or $R$ given by
$$ 3 \max\left\{12 {C_1^*}^2, 683 M^2 \left(A_1 \log\frac{32 M}{6 {C_1^*}^2} + 2 A_1 + A_2\right), \ 22(4 M+C_1^*)^2\right\}. $$
Then the desired error bound follows. The proof of Theorem \ref{generalizationerrortheorem} is complete.
\end{proof}

\section*{Acknowledgments}

We thank the anonymous referees for their constructive suggestions. 
The last author is supported partially by the Research Grants Council of Hong Kong [Project \# CityU 11307319], Hong Kong Institute for Data Science, 
InnoHK, and National Natural Science Foundation of China [Project No. 12061160462]. 
The paper was revised when the last author visited SAMSI/Duke during his sabbatical leave. He would like to express his gratitude to their hospitality and financial support.

\bibliographystyle{abbrvnat}

\begin{thebibliography}{99}

\bibitem{Barron}
A. R. Barron, Universal approximation bounds for superpositions of a sigmoidal function,
IEEE Trans. Inform. Theory {\bf 39} (1993), 930--945.

\bibitem{Grohs}
H. B\"olcskei, P. Grohs, G. Kutyniok, and P. Petersen,
Optimal approximation with sparsely connected deep neural networks, SIAM Journal on Mathematics of Data Science
{\bf 1} (2019), 8--45.

\bibitem{ChuiLinZhou2019}
C. K. Chui, S. B. Lin, D. X. Zhou, Deep neural networks for rotation-invariance approximation
and learning, Anal. Appl. {\bf 17} (2019), 737--772. 

\bibitem{ChuiLinZhou20192}
C. K. Chui, S. B. Lin, D. X. Zhou, Deep net tree structure for balance of capacity and approximation ability, Front. Appl. Math. Stat., 2019, https://doi.org/10.3389/fams.2019.00046.


\bibitem{CuckerZhou2007}
F. Cucker and D. X. Zhou. Learning Theory: An Approximation Theory Viewpoint, Cambridge
University Press, Cambridge, MA, 2007.

\bibitem{FanHuWuZhou} J. Fan, T. Hu, Q. Wu and D. X. Zhou, Consistency analysis of an empirical minimum error entropy algorithm,
Appl. Comput. Harmonic Anal. {\bf 41} (2016), 164--189.

\bibitem{FFHZ2020}
Z. Y. Fang, H. Feng, S. Huang, and D. X. Zhou, Theory of deep convolutional neural networks II: Spherical analysis,
Neural Networks {\bf 131} (2020), 154--162. 

\bibitem{he2016deep}
I. Goodfellow, Y. Bengio, and A. Courville, Deep Learning, MIT Press, 2016.

\bibitem{GXGZ}
Z. C. Guo, D. H. Xiang, X. Guo, and D. X. Zhou, Thresholded spectral algorithms for
sparse approximations, Anal. Appl. {\bf 15} (2017), 433--455.

\bibitem{hinton2012deep}
G. E. Hinton, S. Osindero, Y. W. Teh, A fast learning algorithm for
deep belief nets, Neural Comput. {\bf 18} (2006), 1527--1554.


\bibitem{Imaizumi2019}
M. Imaizumi and K. Fukumizu, Deep neural networks learn non-smooth functions effectively,
in Proceedings of the 22nd International Conference on Artificial Intelligence and Statistics (AISTATS), 2019.


\bibitem{Klusowski2018}
J. Klusowski and A.  Barron, Approximation by combinations of ReLU
and squared ReLU ridge functions with $\ell^1$ and $\ell^0$ controls,
IEEE Transactions on Information Theory {\bf 64} (2018), 7649--7656.

\bibitem{Kohler2017}
M. Kohler and A. Krzyzak, Nonparametric regression based on
hierarchical interaction models, IEEE Trans. Inform. Theory {\bf 63} (2017),
1620--1630.

\bibitem{Konovalov}
V. N. Konovalov, D. Leviatan, and V. E. Maiorov, Approximation of Sobolev classes by polynomials and ridge functions, J. Approx.
Theory {\bf 159} (2009), 97--108.

\bibitem{Krizhevsky2012}
A. Krizhevsky, I. Sutskever, and G. Hinton G, Imagenet classification
with deep convolutional neural networks, {\it NIPS} (2012): 1097--1105.


\bibitem{LinPinkus1993}
Y. V. Lin and A. Pinkus, Fundamentality of ridge functions, J. Approx. Theory {\bf 75} (1993), 295--311.


\bibitem{LinZhou}
S. B. Lin and D. X. Zhou, Distributed kernel gradient descent algorithms, Constr. Approx.  {\bf 47} (2018), 249--276.

\bibitem{Maiorov1999a}
V. E. Maiorov, On best approximation by ridge functions, J. Approx.
Theory {\bf 99} (1999), 68--94.

\bibitem{mallat2016understanding}
S. Mallat, Understanding deep convolutional networks, Phil. Trans. Royal Soc. A {\bf 374}:20150203.

\bibitem{McCane2018}
B. McCane and L. Szymanski, Efficiency of deep networks for radially symmetric functions,
Neurocomputing {\bf 313} (2018), 119--124.


\bibitem{Mhaskar1993}
H. N. Mhaskar, Approximation properties of a multilayered
feedforward artificial neural network, Adv. Comput. Math. {\bf 1} (1993),
61-80.

\bibitem{MhaskarPoggio}
H. N. Mhaskar, T. Poggio, Deep vs. shallow networks: An
approximation theory perspective, Anal. Appl., 2016 (14), 829-848.


\bibitem{Nakada2019}
R. Nakada and M. Imaizumi, Adaptive approximation and estimation of
deep neural network to intrinsic dimensionality, arXiv preprint arXiv:
1907.02177, 2019.


\bibitem{Oono2019}
K. Oono and T. Suzuki, Approximation and non-parametric estimation of ResNet-type convolutional
neural networks, in Proceedings of the 36th International Conference on
Machine Learning (PMLR) 97:4922-4931, 2019.


\bibitem{Petersen}
P. Petersen and V. Voigtlaender,
Optimal approximation of piecewise smooth functions using deep ReLU neural networks,  Neural Networks
{\bf 108} (2018), 296--330.

\bibitem{Petersen2020}
P. Petersen and F. Voigtlaender, Equivalence of approximation by convolutional
neural networks and fully-connected networks, Proceedings of the American Mathematical Society {\bf 148} (2020), 1567-1581.

\bibitem{Poggio2020}
T. Poggio, A. Banburski, and Q. Liao, Theoretical issues in deep networks,
 PNAS {\bf 117} (2020), 30039-30045.

\bibitem{Safran2017}
I. Safran and O. Shamir, Depth-width tradeoffs in approximating natural functions withneural networks, in International Conference on Machine Learning 2017, 2979-2987.

\bibitem{Schmidt-Hieber}
J. Schmidt-Hieber, Nonparametric regression using deep neural networks with ReLU activation function, Ann. Stat. {\bf 48} (2020), 1875-1897.

\bibitem{Shaham}
U. Shaham,  A. Cloninger,  and R. Coifman,
Provable approximation properties for deep neural networks,
Appl. Comput. Harmonic Anal. {\bf 44}  (2018), 537--557.

\bibitem{Sonoda}
S. Sonoda and N. Murata, Neural network with unbounded activation functions is universal approximator, Applied
and Computational Harmonic Analysis, {\bf 43} (2017), 233-268.

\bibitem{Suzuki2019}
T. Suzuki, Adaptivity of deep ReLU network for learning in Besov and mixed smooth
Besov spaces: optimal rate and curse of dimensionality, in Proceedings of the International Conference
on Learning Representations (ICLR), 2019.

\bibitem{Telgarsky 2016}
 M. Telgarsky, Benefits of depth in neural networks, COLT, 2016.

\bibitem{WangLi2017}
D. Wang and M. Li, Stochastic configuration networks: Fundamentals and algorithms, IEEE Transactions on Cybernetics {\bf 47} (2017), 3466-3479.

\bibitem{Yarosky}
D. Yarotsky, Error bounds for approximations with deep ReLU networks, Neural Networks {\bf 94} (2017), 103--114.

\bibitem{zhou2018deepdistributed}
D. X. Zhou, Deep distributed convolutional neural networks: universality, Anal. Appl. {\bf 16} (2018), 895--919.

\bibitem{zhou2020universality} D. X. Zhou, Universality of deep convolutional neural networks, Appl. Comput. Harmonic Anal. {\bf 48} (2020), 787-794.

\bibitem{zhou2020theory}
D. X. Zhou, Theory of deep convolutional neural networks: Downsampling, Neural Networks {\bf 124} (2020), 319-327.

\bibitem{ZhouZhou} T. Y. Zhou and D. X. Zhou, Theory of deep CNNs induced by 2D convolutions, preprint, 2020.








\end{thebibliography}
\small

\end{document}